\DeclareMathOperator*{\argmin}{arg\,min}
\DeclareMathOperator*{\argmax}{arg\,max}
\title{Large-Scale Study of Temporal Shift in Health Insurance Claims}
\author{
\Name{Christina X Ji} \Email{cji@mit.edu}\\
\addr{MIT CSAIL and IMES, Cambridge, MA}\\
\Name{Ahmed M Alaa} \Email{amalaa@berkeley.edu}\\
\addr{UC Berkeley and UCSF, Berkeley, CA}\\
\Name{David Sontag} \Email{dsontag@csail.mit.edu}\\
\addr{MIT CSAIL and IMES, Cambridge, MA}
}
\begin{document}

\maketitle

\begin{abstract}

Most machine learning models for predicting clinical outcomes are developed using historical data. Yet, even if these models are deployed in the near future, dataset shift over time may result in less than ideal performance. To capture this phenomenon, we consider a task---that is, an outcome to be predicted at a particular time point---to be non-stationary if a historical model is no longer optimal for predicting that outcome. We build an algorithm to test for temporal shift either at the population level or within a discovered sub-population. Then, we construct a meta-algorithm to perform a retrospective scan for temporal shift on a large collection of tasks. Our algorithms enable us to perform the first comprehensive evaluation of temporal shift in healthcare to our knowledge. We create 1,010 tasks by evaluating 242 healthcare outcomes for temporal shift from 2015 to 2020 on a health insurance claims dataset. 9.7\% of the tasks show temporal shifts at the population level, and 93.0\% have some sub-population affected by shifts. We dive into case studies to understand the clinical implications. Our analysis highlights the widespread prevalence of temporal shifts in healthcare.

\end{abstract}

\paragraph*{Data and Code Availability}
Our experiments use a large private health insurance claims dataset. The dataset is in the Observational Medical Outcomes Partnership (OMOP) Common Data Model (CDM) v6 format \citep{hripcsak2015observational}. We provide our code at \url{https://github.com/clinicalml/large-scale-temporal-shift-study}. Our code implements our algorithms and provides an example of how to set up a large-scale scan following the guidelines we outline in \appendixref{app:guidelines}. Our repository can be used to reproduce our experiments on other datasets in the standard OMOP format.

\paragraph*{Institutional Review Board (IRB)}
This research was ruled exempt by MIT's IRB (protocol E-4025).

\section{Introduction}
\label{sec:intro}

Ensuring models are safe before deployment is a critical aspect of machine learning for healthcare \citep{wiens2019no,cohen2021problems,us2019artificial,sendak2020real}. However, validating a model solely before deployment is insufficient due to dataset shift. For example, a sepsis alert model deployed at University of Michigan Hospital started giving spurious alerts in April 2020 due to changes in patient demographics during the COVID-19 pandemic and needed to be deactivated \citep{finlayson2021clinician}. Shifts in treatment practices, technology, or data availability may also lead to poor performance of historical models. While recent works have examined the effects of temporal shifts in healthcare  \citep{guo2021systematic,guo2022evaluation,zhou2022model}, understanding the full scope of clinical dataset shift over time still requires further exploration.

\begin{figure*}[htbp]
    \floatconts
    {fig:single_test_overview}
    {\vspace{-15px}\caption{To test for temporal shift under our definition, models are first learned independently for the two time periods. $X$ are features observed in a previous time window. $Y$ is an outcome in a future window. Then, both models are evaluated with metric $\phi$ on data from the second time period. If the second model significantly outperforms the first, we define the task as non-stationary.\\\\We also include an automated sub-population discovery process in our test. First, sub-population labels $Z$ are assigned for each sample by computing the difference between the losses of the two outcome models. Then, a sub-population model $\hat{h}_t$ is fit using these labels from the training and validation data. Finally, predictions from the sub-population model $\hat{h}_t$ are used to define the sub-population on which the metric is evaluated.}}
    {\includegraphics[width=0.96\linewidth]{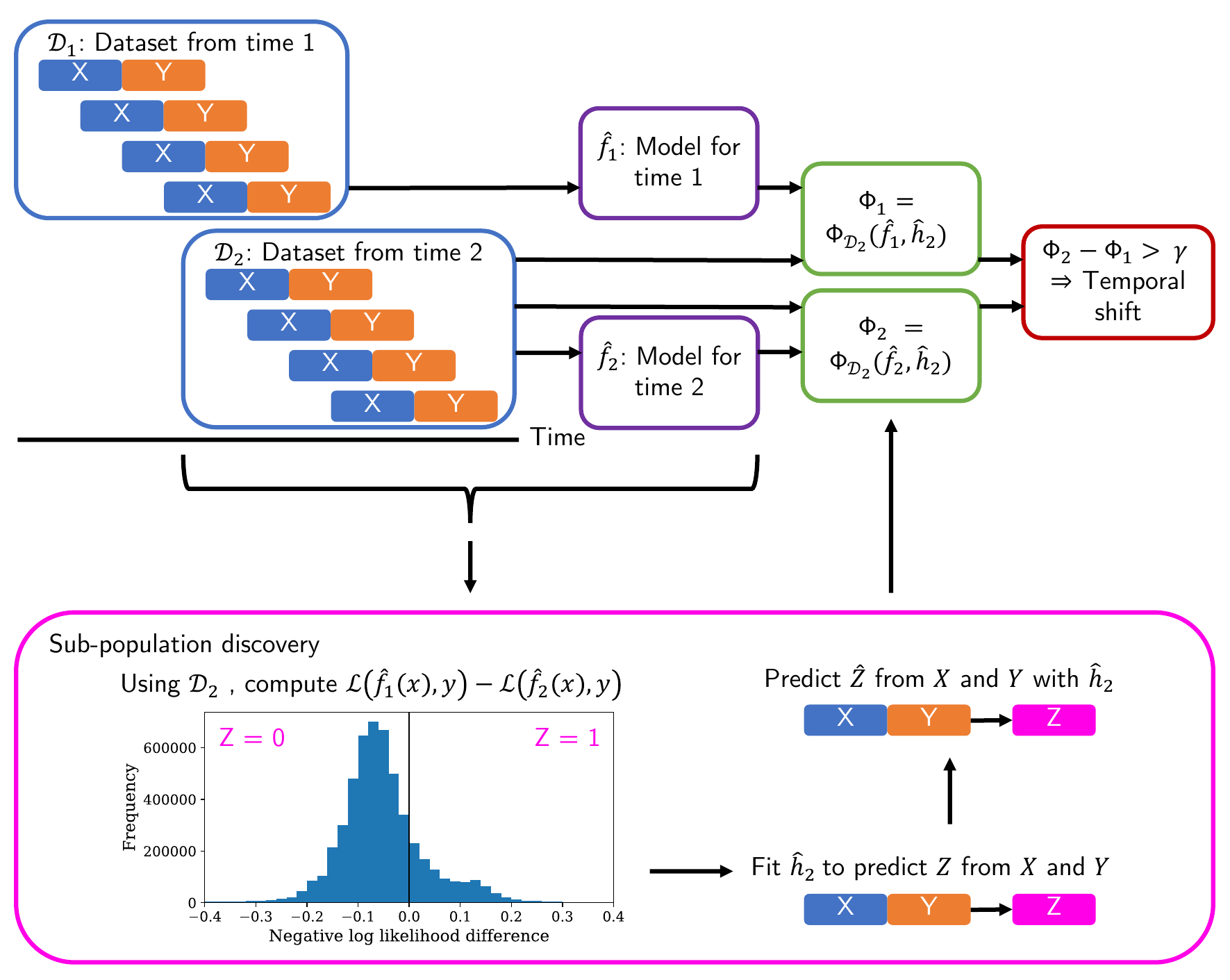}}
    \vspace{-20px}
\end{figure*}

\begin{figure*}[htb]
    \floatconts
    {fig:alg_overview}
    {\vspace{-15px}\caption{The inner algorithm tests for temporal shift in an outcome at time $t$. Two models are fit as in \figureref{fig:single_test_overview}. Then, the algorithm learns a sub-population more likely to be non-stationary by building a classifier to predict samples where the previous model has higher loss. When evaluating the entire population or a pre-defined sub-population, this step is omitted. Next, the inner algorithm tests for significantly different metric values as shown in \figureref{fig:single_test_overview}. The outer algorithm scans for temporal shift across multiple outcomes, at multiple time points, and for multiple sub-population choices. It runs the inner algorithm independently for each task. Then, it selects a set of non-stationary tasks while controlling for false discovery rate and applies another filter for clinical significance.}}
    {\includegraphics[width=0.96\linewidth]{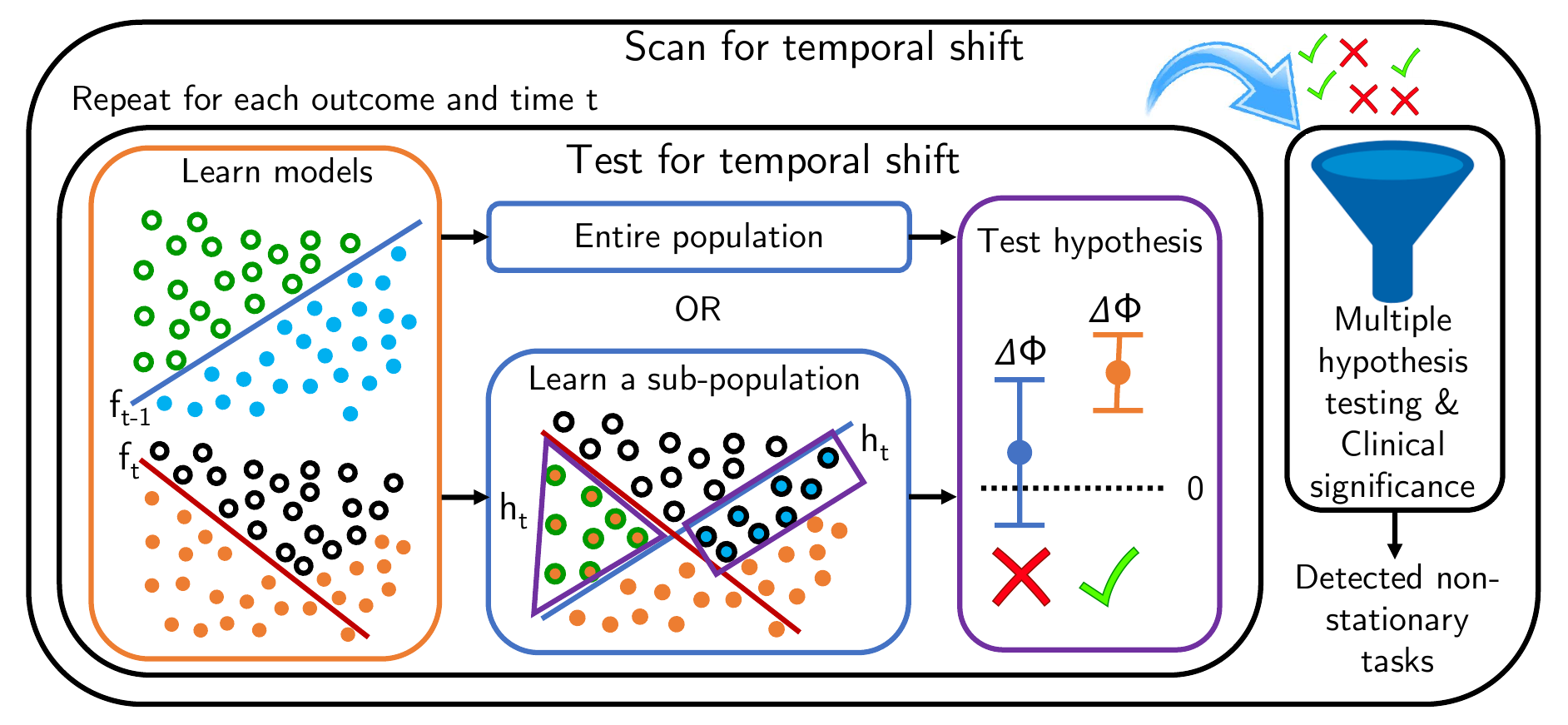}}
    \vspace{-20px}
\end{figure*}

In this work, we create an approach to perform comprehensive retrospective evaluations of temporal shift in healthcare. Motivated by the impact during model deployment, we ask two questions: 1) When is a model no longer optimal due to dataset shift? 2) Which patient groups are more adversely affected by continuing to use the outdated model?

To answer the first question, in \sectionref{sec:shift_alg}, we construct a definition for temporal shift. \figureref{fig:single_test_overview} highlights a critical aspect of our definition: To assess a task---that is, an outcome to be predicted at a particular time point, we compare the performance of models trained at two time points evaluated on data from the later time point. We build an algorithm to test this definition in \sectionref{sec:scan_alg}. Our algorithm also answers the second question by learning a sub-population and assessing shift only within that sub-population. Then, we construct a meta-algorithm that uses our algorithm to scan for temporal shift among a large collection of tasks. \figureref{fig:alg_overview} gives an overview of our algorithms. 

Our meta-algorithm enables us to perform a large-scale retrospective scan of temporal shift among 242 healthcare outcomes from 2015 to 2020. We describe our experiment on a health insurance claims dataset in \sectionref{sec:scan_setup}. To our knowledge, this work is the first comprehensive evaluation of temporal shifts with hundreds of outcomes using such a claims dataset. The outcomes are selected based on the most frequent conditions, procedures, and abnormal lab measurements in our cohort. In \sectionref{sec:quant_results}, we quantify how our meta-algorithm detects temporal shift in 9.7\% of the tasks at the population level. We also find sub-populations affected by shifts in 93.0\% of the tasks. To understand these shifts, we dive into examples of label, domain, and conditional shift in \sectionref{sec:case_studies}. The impact of these shifts must be considered when building and deploying models in healthcare. With our code and guidelines for large studies on health insurance claims data in \appendixref{app:guidelines}, our analysis can be repeated on other datasets to draw more generalized conclusions about temporal shifts in healthcare.

\section{Defining Temporal Shift}
\label{sec:shift_alg}

We are motivated by the practical goal of detecting when models need to be updated to address shifts. Let $\mathcal{E}$ be the set of all outcomes we are predicting. We first focus on testing whether a single outcome $e \in \mathcal{E}$ is affected by shift between times $t-1$ and $t$ within a particular patient population. Each choice of outcome $e$, time $t$, and patient population constitutes a task. In this section, we present a definition for when a task is affected by temporal shift. 

\subsection{Outcome Model, Sub-population, and Metric}
\label{sec:comp_models}

To construct our definition for temporal shift, we introduce three pieces: a model predicting the outcome, a classifier defining the sub-population, and a metric.

\textbf{Outcome model}: Let $\mathcal{Y} \in \left\{0, 1\right\}$ or $\mathbb{R}$ be the outcome space and $\mathcal{X} \in \mathbb{R}^d$ be the feature space. Let $\mathcal{L}: \mathcal{R} \times \mathcal{Y} \rightarrow \mathbb{R}_{\ge 0}$ be a loss function that evaluates a prediction against the true label. The model $\hat{f}_t$ minimizes the empirical risk over dataset $\mathcal{D}_t$:
\begin{equation}
    \hat{f}_t = \argmin_{f \in \mathcal{F}} \frac{1}{\lvert \mathcal{D}_t \rvert} \sum_{\left(x, y\right) \in \mathcal{D}_t} \mathcal{L}\left(f\left(x\right), y\right)
\end{equation}
$\hat{f}_{t-1}$ and $\hat{f}_t$ are learned independently.

\textbf{Sub-population}: The population to evaluate can consist of all patients, a pre-defined subgroup based on clinical knowledge, or a learned sub-population. Let the population membership at time $t$ be defined by a binary indicator in $\mathcal{Z} \in \left\{0, 1\right\}$. A classifier $h_t: \mathcal{X} \times \mathcal{Y} \rightarrow \mathcal{Z}$ maps each sample to its membership indicator. The outcome may be used in the sub-population definition. For instance, if cancer screenings start at 45 instead of 50 at time $t$, then $\hat{f}_{t-1}$ is more likely to miss cancers detected among 45 to 50 year olds because they were not previously screened. Thus, a sub-population that is likely to be affected by temporal shift would be patients who are 45 to 50 years old and who are diagnosed with cancer. 

\textbf{Metric:} Let $\phi_{\mathcal{D}}(\hat{f}, h)$ denote the value of metric $\phi$ when evaluating $\hat{f}$ within a sub-population specified by $h$ on dataset $\mathcal{D}$. The metric $\phi$ is different from the loss $\mathcal{L}$ for training outcome models. To capture how temporal shift affects downstream applications, $\phi$ can be chosen to reflect how models are used in clinical practice. Unlike $\mathcal{L}$, $\phi$ may not be valid for performing gradient descent. Furthermore, while loss is minimized, higher values of $\phi$ are better.

\subsection{Temporal Shift Definition}
\label{sec:definition}

We construct a definition for temporal shift that captures when a model needs to be updated. 
\begin{definition}[Temporal Shift]
    \label{def:temporal_shift}
    For an outcome at time $t$ in a sub-population defined by $h_t$, temporal shift occurs if the fitted models $\hat{f}_{t-1}$ and $\hat{f}_t$ satisfy the following property:
    \begin{equation}
        \label{eq:def_shift}
        \phi_{\mathcal{D}_t}\left(\hat{f}_t, h_t\right) - \phi_{\mathcal{D}_t}\left(\hat{f}_{t-1}, h_t\right) > 0
    \end{equation}
\end{definition}
The first term measures how an optimal model trained on current data performs in-distribution. The second term evaluates how an optimal historical model performs at the current time point. A task affected by temporal shift is \textit{non-stationary}.

We compare model performance instead of the distributions $\mathbb{P}_t\left(X, Y\right)$ and $\mathbb{P}_{t-1}\left(X, Y\right)$ for two reasons. First, there is insufficient coverage over the high-dimensional $\mathcal{X}$ space to estimate $\mathbb{P}\left(Y \vert X\right)$. The conditional distribution is much harder to characterize than the label distribution that is tested in \citet{lipton2018detecting}. Fitting $\hat{f}_t$ and $\hat{f}_{t-1}$ approximates $\mathbb{P}\left(Y \vert X\right)$ where the model may be used. Second, our definition uses a metric $\phi$ that can be chosen to reflect clinical impact. Distance functions between distributions tend to be more sensitive to particular aspects, such as the median or spread. Unlike $\phi$, these aspects may not reflect changes in clinical decision-making due to using an outdated model.

Evaluating $\hat{f}_{t-1}$ on $\mathcal{D}_t$ in the second term of \equationref{eq:def_shift} is intuitive since it captures how a historical model performs at the current time point. Notably, the first term evaluates the current model $\hat{f}_t$ on current data $\mathcal{D}_t$ rather than the previous model $\hat{f}_{t-1}$ on previous data $\mathcal{D}_{t-1}$. That is, we are not seeking the property: $\phi_{\mathcal{D}_{t-1}}\left(\hat{f}_{t-1}, h_t\right) - \phi_{\mathcal{D}_t}\left(\hat{f}_{t-1}, h_t\right) > 0$. We create a baseline that evaluates this property and defer to \sectionref{sec:baseline} for further comparison.

\section{Algorithms to Test and Scan for Temporal Shift}
\label{sec:scan_alg}

We present two algorithms: The first tests whether \definitionref{def:temporal_shift} holds in a single task. The second scans for temporal shift among a collection $\mathcal{E}$ of outcomes across multiple time points. Before constructing our algorithms, we present two integral pieces: assessing \definitionref{def:temporal_shift} via hypothesis testing and automated sub-population discovery.

\subsection{Hypothesis Testing}
\label{sec:hyp_test}

We perform hypothesis testing to assess whether \definitionref{def:temporal_shift} for temporal shift holds. Prior to testing, we check that sample size is sufficient and models are well-fit. To avoid conducting unnecessary hypothesis tests for tasks unlikely to be affected by temporal shift, we only perform the test if $\hat{f}_t$ performs significantly better than $\hat{f}_{t-1}$ on validation data. Details on these criteria are in \appendixref{app:criteria}.

Our algorithm examines if there is significant distribution shift when predicting an outcome in a particular year. The null hypothesis states models learned on data from the current year versus the previous year perform equally well when evaluated on data from the current year. We test this hypothesis against a one-sided alternative:

\begin{align}
    H_0: \phi_{\mathcal{D}_t}\left(\hat{f}_t, h_t\right) - \phi_{\mathcal{D}_t}\left(\hat{f}_{t-1}, h_t\right) = 0\\
    H_1: \phi_{\mathcal{D}_t}\left(\hat{f}_t, h_t\right) - \phi_{\mathcal{D}_t}\left(\hat{f}_{t-1}, h_t\right) > 0
\end{align}
A more formal statement of these hypotheses is given in \appendixref{app:permutation_test}. Note that if $\phi$ is defined as the log likelihood, our test can be viewed as a likelihood ratio test. As discussed in Section~\ref{sec:definition}, our test is more general. To evaluate any $\phi$, we perform a variation of the permutation test defined in \citet{bandos2005permutation}. Our test is specified in \algorithmref{alg:permut_2models_1dataset}.

\subsection{Discovering Sub-populations with Shifts}

\label{sec:subpop_step}

Our goal is to identify a sub-population where temporal shift is likely. That is, we would like to learn an $h_t$ where \definitionref{def:temporal_shift} is satisfied. Our objective is to maximize the difference between the losses of the previous and current model within the sub-population:
\begin{multline}
    h_t^* = \argmax_{h} \mathbb{E}_{\mathcal{D}_t}\Bigl[\mathds{1}\left\{h\left(x, y\right)\right\} \times \\
    \left(\mathcal{L}\left(\hat{f}_{t-1}\left(x\right), y\right) - \mathcal{L}\left(\hat{f}_t\left(x\right), y\right)\right)\Bigr]
\end{multline}
An optimal $h_t^*$ will select all samples $\left(x, y\right)$ where $\hat{f}_{t-1}$ has higher loss. We explain why this approach is better than alternative formulations in \appendixref{app:subpop_problem}. The sub-population indicators $z \in \mathcal{Z}$ are defined as
\begin{equation}
    \label{eq:subpop_label}
    z := \mathds{1}\left\{\mathcal{L}\left(\hat{f}_{t-1}\left(x\right), y\right) - \mathcal{L}\left(\hat{f}_t\left(x\right), y\right) > 0\right\}
\end{equation}
As shown in \figureref{fig:single_test_overview}, we fit a model $\hat{h}_t: \mathcal{X} \times \mathcal{Y} \rightarrow \mathcal{Z}$ with these labels. Defining sub-populations with $\hat{h}_t$ rather than labels $z$ for the test samples ensures the conclusions are more generalizable. We can also restrict the hypothesis class $\mathcal{H}$ for $\hat{h}_t$ to create more interpretable or clinically meaningful sub-populations.

\subsection{Algorithm to Test for Temporal Shift}

\begin{algorithm2e}[h]
    \caption{Test for temporal shift}
    \label{alg:test_shift}
    \KwIn{Datasets $\mathcal{D}_{t-1} = \left(X^{t-1}_i, Y^{t-1}_i\right)_{i=1}^{m}$ and $\mathcal{D}_t = \left(X^t_i, Y^t_i\right)_{i=1}^{n}$ in data splits, sub-population model $h_t$ for a pre-defined sub-population (default: None; specify $h_t\left(x, y\right) = 1 \forall x \in \mathcal{X}, y \in \mathcal{Y}$ for entire population)}
    \KwOut{p-value, metric difference, sub-population model if hypothesis test is performed}
    \textsf{\upshape Fit} $\hat{f}_{t-1}\left(x\right)$ \textsf{\upshape to minimize} $\frac{1}{m} \sum_{i=1}^m\mathcal{L}\left(f\left(x^{t-1}_i\right), y^{t-1}_i\right)$ \textsf{\upshape using training and validation samples}\; 
    \textsf{\upshape Fit} $\hat{f}_t\left(x\right)$ \textsf{\upshape to minimize} $\frac{1}{n} \sum_{i=1}^n\mathcal{L}\left(f\left(x^t_i\right), y^t_i\right)$ \textsf{\upshape using training and validation samples}\; 
    \If{$h_t$ \textsf{\upshape is not pre-specified}}{
        \For{$i \leftarrow 1$ \KwTo $n$}{
            $z_i^t \gets \mathds{1} \left\{ \mathcal{L}\left(\hat{f}_{t-1}\left(x_i^t\right), y_i^t\right) \right.$ \\ $\quad \quad \quad \quad \left.- \mathcal{L}\left(\hat{f}_t\left(x_i^t\right), y_i^t\right) > 0\right\}$\;
        }
        \textsf{\upshape Shuffle training and validation splits}\;
        \textsf{\upshape Fit} $\hat{h}_t\left(x, y\right)$ \textsf{\upshape to minimize} $\frac{1}{n} \sum_{i=1}^n \mathcal{L}\left(h\left(x_i^t, y_i^t\right), z_i^t\right)$ \textsf{\upshape using new splits}\;
    }
    \If{\textsf{\upshape sample size check in \algorithmref{alg:sample_size_check} returns false with validation samples}, $\hat{h}_t$\textsf{\upshape or }$h_t$\textsf{\upshape as inputs}}{
        \KwOut{\textsf{\upshape None}}
    }
    \If{\textsf{\upshape model fit check in \algorithmref{alg:model_fit_check} returns false with validation samples}, $\hat{f}_{t-1}, \hat{f}_t, \hat{h}_t$\textsf{\upshape or }$h_t$\textsf{\upshape as inputs}}{
        \KwOut{\textsf{\upshape None}}
    }
    \If{\textsf{\upshape performance comparison in \algorithmref{alg:perform_comp} returns false with validation samples}, $\hat{f}_{t-1}, \hat{f}_t, \hat{h}_t$\textsf{\upshape or }$h_t$\textsf{\upshape as inputs}}{
        \KwOut{\textsf{\upshape None}}
    }
    $p \leftarrow $ \textsf{\upshape one-sided p-value for}\\ $H_0: \phi_{\mathcal{D}_t}\left(\hat{f}_t, h_t\right) - \phi_{\mathcal{D}_t}\left(\hat{f}_{t-1}, h_t\right) = 0$
    \textsf{\upshape from \algorithmref{alg:permut_2models_1dataset} with test samples}, $\hat{f}_{t-1}, \hat{f}_t, \hat{h}_t$\textsf{\upshape or }$h_t$\textsf{\upshape as inputs}\;
    $a \leftarrow \phi_{\mathcal{D}_t}\left(\hat{f}_t, \hat{h}_t\right) - \phi_{\mathcal{D}_t}\left(\hat{f}_{t-1}, \hat{h}_t\right)$\;
    \KwOut{$p$, $a$, $\hat{h}_t$ \textsf{\upshape or }$h_t$}
\end{algorithm2e}

We integrate the ideas we have introduced to construct \algorithmref{alg:test_shift}. This algorithm takes in datasets from times $t-1$ and $t$ and tests for temporal shift between those time points. A pre-defined sub-population may also be specified. To test at the population level, we can input $h_t\left(x, y\right) = 1 \forall x \in \mathcal{X}, y \in \mathcal{Y}$.

The first stage of the algorithm fits the component models described in \sectionref{sec:comp_models}. First, separate outcome models for times $t-1$ and $t$ are fit. Then, a sub-population is discovered using the objective in \sectionref{sec:subpop_step}. Labels for the sub-population model are computed using \equationref{eq:subpop_label}. The training and validation sets are re-split in case the sub-population label distributions differ between the two splits. Lastly, the sub-population model is fit using the new labels.

The second stage of the algorithm is hypothesis testing. First, the checks described in \sectionref{sec:hyp_test} are performed to determine whether the task should be tested. Up to this point, the test set is held out. In the final step, the algorithm performs hypothesis testing on this held-out test set. The p-value, discovered sub-population, and observed metric difference are returned. The final output can be used to assess clinical significance.

\subsection{Meta-Algorithm to Scan for Temporal Shift}

Now that we are equipped with \algorithmref{alg:test_shift} to test for temporal shift in a single task, we construct a meta-algorithm to assess temporal shift across a large collection of tasks. \algorithmref{alg:scan_shift} takes in datasets for each outcome at each time point, a desired false discovery rate for the statistical significance filter, and a threshold for clinically significant metric differences. It outputs a list of tasks where temporal shifts have statistically and clinically significant effects.

The first phase treats each task as an independent problem. For each outcome at each time point, we run \algorithmref{alg:test_shift} once at the population level and once within discovered sub-populations. Temporal shift may have detrimental effects for a sub-population while not appearing significant at the population level. Specifying sub-populations based on clinical knowledge for each outcome may be very labor-intensive. We also may not have prior knowledge on which sub-populations are affected by temporal shift. Thus, automated sub-population discovery plays an essential role in large-scale scans.

The second phase gathers results from the individual tests and accounts for both statistical and clinical significance. To ensure we are not overstating the prevalence of temporal shift, we impose a 5\% false discovery rate. Because the test split is held out prior to hypothesis testing, the checks in \algorithmref{alg:test_shift} noticeably reduce the number of hypothesis tests that need to be accounted for when applying the Benjamini-Hochberg correction \citep{benjamini1995controlling}. We also evaluate the clinical impact of temporal shift via the metric difference. When this difference is above a threshold for a task, the temporal shift is considered clinically significant.

\begin{algorithm2e}[t]
    \caption{Scan for temporal shift}
    \label{alg:scan_shift}
    \SetNoFillComment
    \KwIn{Datasets $\left\{\left(X_i^t, Y_i^t\right)_{i=1}^{n_t}\right\}_{t=0}^T$ at multiple time points for each outcome $e \in \mathcal{E}$, false discovery rate $\alpha$ (default: .05), clinical significance threshold $\gamma$ (default: .01)}
    \KwOut{Tasks $\left\{\left(e, t, \hat{h}_t\right)\right\}$ with statistically and clinically significant temporal shift}
    $P \leftarrow \left\{\right\}$ \tcp*{Map task to p-value}
    $A \leftarrow \left\{\right\}$ \tcp*{Map task to metric diff}
    \For{\textsf{\upshape outcome} $e \in \mathcal{E}$}{
        \For{$t \leftarrow 1$ \KwTo $ T$}{
            \For{$h_t\left(x, y\right) = 1 \forall x \in \mathcal{X}, y \in \mathcal{Y}$ \textsf{\upshape and } $h_t$ \textsf{\upshape as None}}{
                $p, a, \hat{h}_t \leftarrow$ \textsf{\upshape \algorithmref{alg:test_shift} with inputs} $\left(X_i^{t-1}, Y_i^{t-1}\right)_{i=1}^{n_{t-1}}, \left(X_i^t, Y_i^t\right)_{i=1}^{n_t}, h_t$\;
                \If{$p$ \textsf{\upshape is not None}}{
                    $P\left[\left(e, t, \hat{h}_t\right)\right] \leftarrow p$\;
                    $A\left[\left(e, t, \hat{h}_t\right)\right] \leftarrow a$\;
                }
            }
        }
    }
    $Q \leftarrow $ \textsf{\upshape Benjamini-Hochberg}$\left(P, \alpha\right)$\;
    $R \leftarrow \left\{\left(e, t, \hat{h}_t\right): A\left[\left(e, t, \hat{h}_t\right)\right] > \gamma \right\}$\; 
    \KwOut{$Q \cap R$}
\end{algorithm2e}

\section{Large-Scale Scan on Health Insurance Claims}
\label{sec:scan_setup}

We demonstrate our algorithms in a large-scale scan for temporal shifts on a health insurance claims dataset. Such datasets are powerful resources for studying temporal shift because they provide a snapshot of patient state at each visit, a longitudinal view of all visits while enrolled, and a large insured population. We found these ingredients to be essential for performing a large-scale scan: standardized data formats, a clear definition for a sequence of cohorts, selection of frequent outcomes, lab measurement cleaning, and efficient automated feature extraction. In \appendixref{app:guidelines}, we share how we handled each component in our scan to empower similar analyses.

\subsection{Experiment Set-up}
\label{sec:scan_setup_details}
We use a large de-identified health insurance claims dataset  comprised mostly of patients in Pennsylvania and New Jersey. Since our dataset spans 2014 to mid-2021, we assess temporal shifts on a yearly basis from 2015 to 2020. Our collection of outcomes $\mathcal{E}$ contains 100 initial condition diagnoses, 100 abnormal lab measurements, and 42 procedure groups. Examples of condition outcomes include cough, acute bronchitis, and type 2 diabetes. Lab outcomes include mean corpuscular hemoglobin concentration below 33 g/dL, glucose above 125 mg/dL, and low-density lipoprotein cholesterol above 129 mg/dL. The procedure outcomes are defined by groups of CPT-4, HCPCS, ICD-9, ICD-10, and SNOMED concepts. For example, the office visit and surgery outcomes are defined by 27 and 514 codes, respectively. $Y \in \left\{0, 1\right\}$ indicates whether the outcome occurs in the next 3 months.

The feature space $\mathcal{X}$ includes over 15,000 features related to demographics, prediction month, conditions, procedures, drugs, lab measurements, and specialty visits in the past 30 days. We justify this feature window choice in \appendixref{app:feature_window_choice}. In our cohort inclusion criteria, we require the patient is observed for a period of time before and after the prediction date. The lengths of these periods are given in \appendixref{app:cohort_def}. Our cohort includes over 1.6 million patients. Each patient may contribute one sample per prediction month, with an average of 34 samples per patient. For condition outcomes, we exclude patients who were previously diagnosed with the condition and require additional years of prior observation to check this criterion. Thus, we only assess temporal shift for condition outcomes from 2017 to 2020. The smaller cohorts for condition outcomes include around 0.8 million patients, with an average of 29 samples per person.

The extensive collection of outcomes $\mathcal{E}$, high-dimensional feature space $\mathcal{X}$, and large cohorts over 6 years enable our comprehensive scan of temporal shifts. We examine 1,010 tasks at both the population level and within discovered sub-populations.

\subsection{Modeling Choices}
\label{sec:modeling_choices}
To implement \algorithmref{alg:test_shift}, we need to specify a class $\mathcal{F}$ of outcome models, a corresponding definition for sub-population labels $\mathcal{Z}$, a class $\mathcal{H}$ of sub-population models, and a metric $\phi$.

We choose AUC as the metric $\phi$ because it handles class imbalance. For outcome models, we use logistic regressions for our hypothesis class $\mathcal{F}$ since they achieve higher AUC in the experiments in \appendixref{app:outcome_model}. Logistic regressions are also easier to interpret when examining how models change over time. With different modeling choices, the evaluation of \definitionref{def:temporal_shift} may give different results. Nevertheless, any well-fit model can be used to assess temporal shift.

We prove in \theoremref{thm:subpop_labels_equiv} in \appendixref{app:subpop_crossent_calibration} that the cross entropy loss for logistic regressions gives rise to a calibration-based definition for the sub-population labels $z_i$. Calibration captures more nuanced differences than misclassification since the threshold for predicting 1 may be shifted for different clinical applications. We use decision trees as our hypothesis class $\mathcal{H}$ for sub-population models since this leads to the most non-stationary regions in experiments shown in \appendixref{app:subpop_model}.

\section{Results: Quantifying Prevalence of Temporal Shift}
\label{sec:quant_results}

\figureref{fig:outcome_counts} shows that our method identifies 98 non-stationary tasks among the 1,010 that we scan. The non-stationary tasks are listed in \appendixref{app:scan_results}. Lab outcomes account for 32 of the 36 non-stationary tasks before 2020. Some of these cases, such as the estimated glomerular filtration rate (eGFR) outcomes, can be attributed to changes in how lab tests were recorded. The majority of affected tasks--62 out of 98--occur in 2020 since the COVID-19 pandemic had a widespread effect on the healthcare system. We will examine these shifts in detail in \sectionref{sec:case_studies}.

\begin{figure}[t]
    \floatconts
    {fig:outcome_counts}
    {\vspace{-20px}\caption{Percentage of tasks identified as non-stationary for each outcome type. Top: Our algorithm versus the baseline on the entire population. Bottom: Our algorithm with discovered sub-populations. Color: Outcome type. Line style: Algorithm.}}
    {\includegraphics[width=0.96\linewidth]{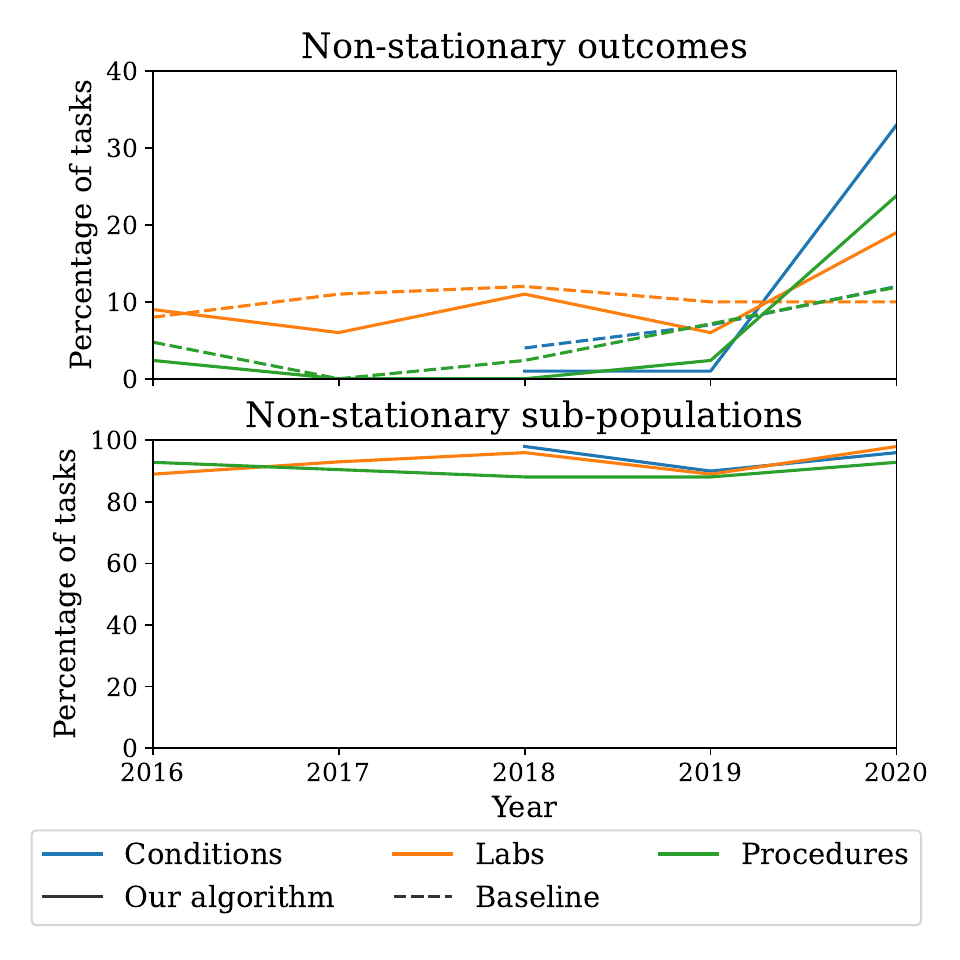}}
    \vspace{-5px}
\end{figure}

While only 9.7\% of the 1,010 tasks are affected at the population-level, 93.0\% of the tasks have at least some sub-population that is affected. Furthermore, \figureref{fig:auc_diff_dist} in \appendixref{app:scan_results} shows how the effect of temporal shift can be even larger within sub-populations. This result highlights how monitoring temporal shift at the population level may not be sufficient for determining when to update a model.

\subsection{Baseline Comparison}
\label{sec:baseline}

In \sectionref{sec:definition}, we highlighted how our definition for temporal shift compares against $\phi_{\mathcal{D}_t}(\hat{f}_t, h_t)$ rather than $\phi_{\mathcal{D}_{t-1}}(\hat{f}_{t-1}, h_t)$. Many prior benchmarks we discuss in \sectionref{sec:related} treat the latter as the in-distribution performance. We perform this comparison as a baseline. To modify \algorithmref{alg:test_shift} into the baseline, we replace the performance comparison criteria with \algorithmref{alg:baseline_comp} and the permutation test with \algorithmref{alg:permut_1model_2datasets} given in \appendixref{app:hyp_test_details}.

\begin{figure*}[t]
    \floatconts
    {fig:baseline_comparison_examples}
    {\vspace{-20px}\caption{Top: AUC of current and previous model in each year with bootstrap standard errors. Bottom: AUC difference evaluated by our algorithm (red AUC at $t$ minus blue at $t$) and baseline (red at $t-1$ minus blue at $t$) with 90\% bootstrap confidence intervals. Black dotted line: Statistical significance threshold for confidence interval. The algorithms actually assess statistical significance via permutation tests. Orange dotted line: Clinical significance threshold for point estimate. Left: Outcome is high prothrombin time lab measurement. No temporal shift in any year. Baseline erroneously detects shifts in 2018 and 2020. Right: Outcome is defined as receiving a colonoscopy exam. Temporal shift in 2020 only identified by our algorithm. No temporal shift in other years.}}
    {\includegraphics[width=0.96\linewidth]{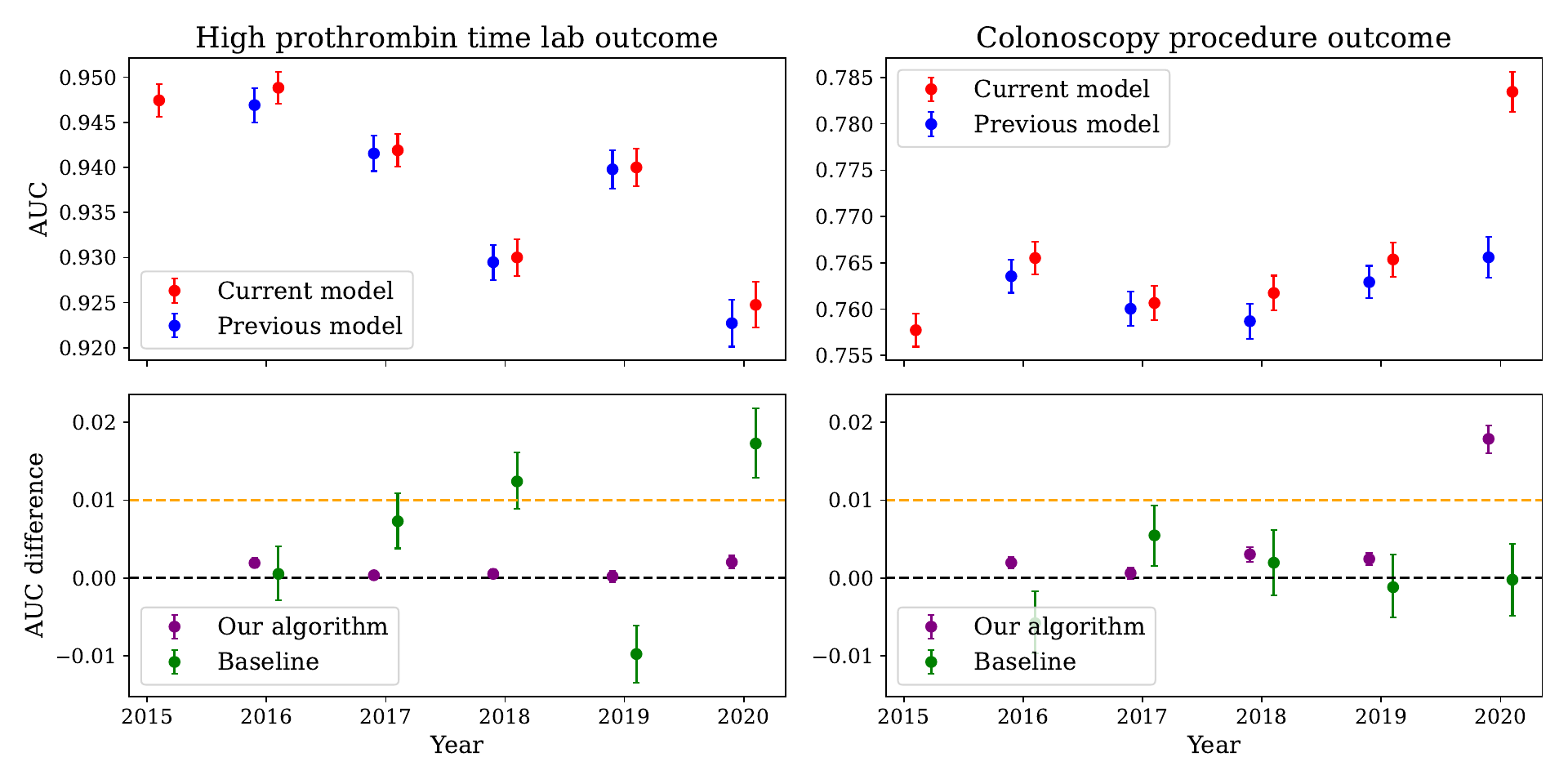}}
    \vspace{-20px}
\end{figure*}

Among 72 tasks where the baseline detects temporal shift, 35 are not identified by our algorithm since they do not satisfy our definition of temporal shift. An example is the high prothrombin time lab outcome in 2018 and 2020. This lab result may indicate slow blood clotting. \figureref{fig:baseline_comparison_examples} shows the 2017 and 2018 models perform similarly in 2018. The baseline identifies these tasks as non-stationary because the performance of the 2017 model drops from 2017 to 2018 (likewise for 2019 to 2020). However, such performance drops may be due to random fluctuations or noise unrelated to temporal shift. Comparing against $\hat{f}_t$ in our algorithm accounts for these factors.

The baseline also misses many non-stationary tasks. 61 tasks identified by our algorithm are not detected by the baseline. An example is the outcome defined as receiving a colonoscopy exam in 2020. \figureref{fig:baseline_comparison_examples} shows a large gap between the AUCs of the 2019 and 2020 models in 2020. This task is affected by label shift as outcome frequency drops from 1.2\% to 0.8\% in 2020. Our algorithm detects this shift, while the baseline does not because the 2019 model achieves a higher AUC in 2020 than 2019. We defer more discussion on the baseline to \appendixref{app:baseline}.

\section{Case Studies: Types of Temporal Shift}
\label{sec:case_studies}

To demonstrate the temporal shifts identified in our scan are driven by clinical changes, we analyze examples of three types of temporal shift shown in \figureref{fig:types_of_shift}: label shift, domain shift, and conditional shift. We propose a clinically motivated solution to address label shift. For domain and conditional shift, we discuss how addressing them without data from the new time period is challenging in healthcare settings.  

\begin{figure}[t]
    \floatconts
    {fig:types_of_shift}
    {\vspace{-15px}\caption{Three types of shifts. $D$ is the domain. Only $D$ changes between time points. $U$ contains unobserved variables.}}
    {\includegraphics[width=0.8\linewidth]{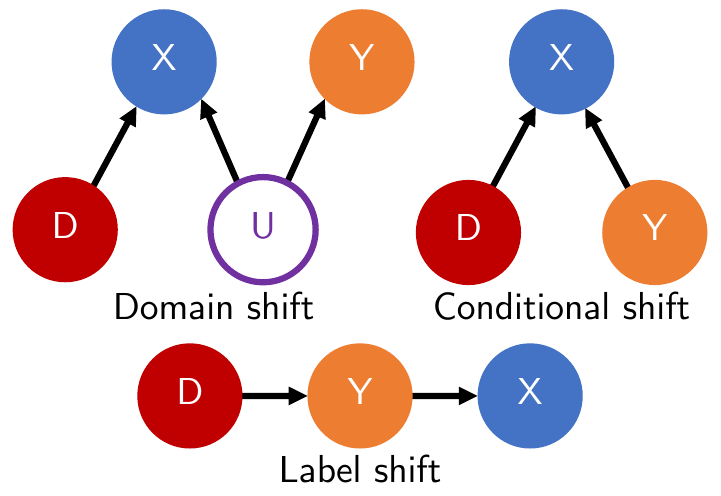}}
    \vspace{-5px}
\end{figure}

\subsection{Label Shift in eGFR Measurements}
\label{sec:egfr_label_shift}

Label shift is defined by changes in the outcome distribution $\mathbb{P}\left(Y\right)$. Label shifts can be caused by changes in the prevalence of diseases or procedures or by modifications to how features are recorded.

\textbf{Laying out the case:} The low estimated glomerular filtration rate (eGFR) outcome has a change in recording patterns in 2018. Our algorithm detects shifts in 4 of the 6 eGFR outcomes. The MDRD formula was a popular tool for computing eGFR before the CKD-EPI formula became more widely adopted \citep{levey2006using,levey2009new,american2022introduction}. With both formulas, eGFR measurements below 60 correspond to stage 3 kidney disease \citep{kidney2013kdigo}. Thus, we define eGFR below 60 as an abnormal outcome.

\begin{figure}[t]
    \floatconts
    {fig:egfr_label_shift}
    {\vspace{-30px}\caption{Proportion of patients who have a value recorded for each eGFR concept in the outcome window. For either concept, lab reports show two eGFR values: one adjusted for non-African Americans and another for African Americans, regardless of patient race. The lines depicting the first two adjustments for CKD-EPI overlap. Color: Adjustment. Line style: eGFR formula.}}
    {\includegraphics[width=0.96\linewidth]{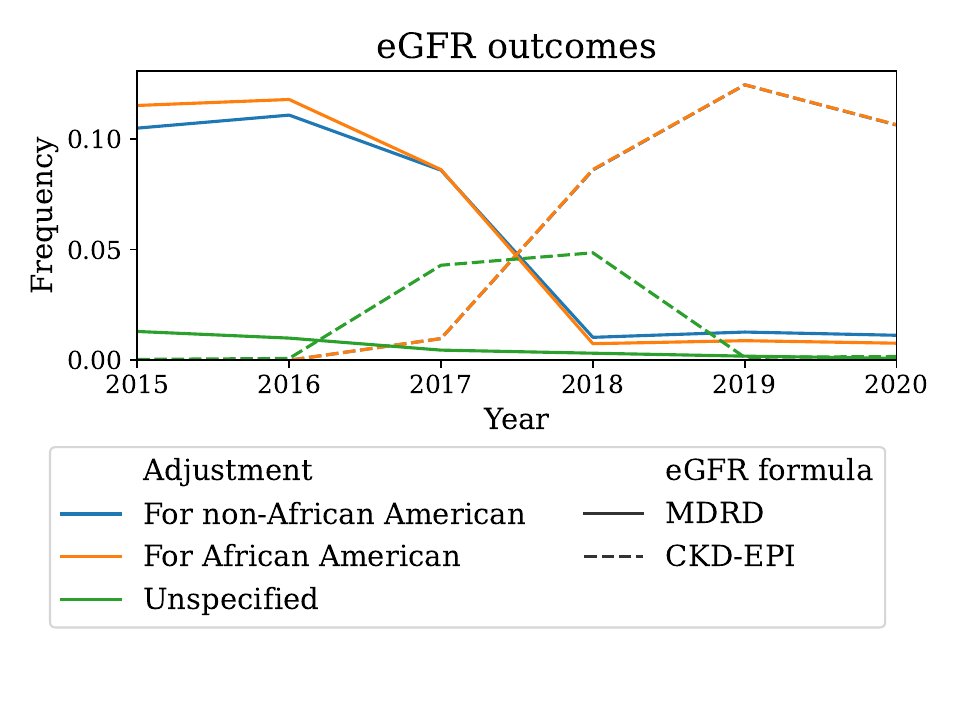}}
    \vspace{-20px}
\end{figure}

\figureref{fig:egfr_label_shift} shows this formula shift happened between 2017 and 2018 in our dataset. Models trained to predict MDRD-based eGFR outcomes in 2017 may correctly predict that a patient has low eGFR in 2018. However, because the measurement is recorded under the CKD-EPI concept, the patient does not have the low MDRD outcome, and the prediction is considered incorrect. On the other hand, the 2017 model for the CKD-EPI outcome is trained on a limited subset of low eGFR outcomes from lab centers that switched earlier and misses many low eGFR outcomes in 2018.

\textbf{Solving the case:} Label shift can be addressed by mapping $Y^{t-1}$ to $Y^t$. Instead of learning a label transformation using data from the target domain \citep{guo2020ltf},  we modify the outcome definition to handle changes in recording practices. Combining the 6 eGFR outcomes into a single outcome eliminates the underlying cause of label shift. The frequency of this combined outcome is consistently around 2.5\% to 3.1\% over the years. With this new outcome, the 2017 and 2018 models both achieve an AUC around 0.91 on the test data in 2018.

\subsection{Domain Shift in 2020}
\label{sec:cov_shift_2020}

As shown in \figureref{fig:types_of_shift}, the domain impacts how the unobserved variables $U$ are reflected in the observed covariates $X$ \citep{quinonero2008dataset}. When the domain changes, $\mathbb{P}\left(X \vert U\right)$ shifts. Because $U$ is latent, the shift we observe is in $\mathbb{P}\left(X\right)$. In domain shift, we assume $\mathbb{P}\left(Y \vert U\right)$ is unchanged.

\textbf{Laying out the case:} The domain shift mechanism is shared across outcomes at the same time point. Because 62 of the 98 non-stationary tasks are in 2020, we hypothesize domain shift during the COVID-19 pandemic may be a shared reason. Using samples from April to December of 2019 and 2020, we fit an L1-regularized logistic regression to predict which year each sample is from. Then, for each feature with a non-zero coefficient, we run a chi-squared test with the following null hypothesis: The feature frequency is the same in 2019 and 2020. After applying the Benjamini-Hochberg correction with an expected false discovery rate of 5\%, 781 hypotheses for univariate $\mathbb{P}\left(X\right)$ shifts are accepted.

\begin{figure}[t]
    \floatconts
    {fig:covariate_shift_2020}
    {\vspace{-15px}\caption{Selection of features with univariate $\mathbb{P}\left(X\right)$ shifts from 2019 to 2020}}
    {\includegraphics[width=0.96\linewidth]{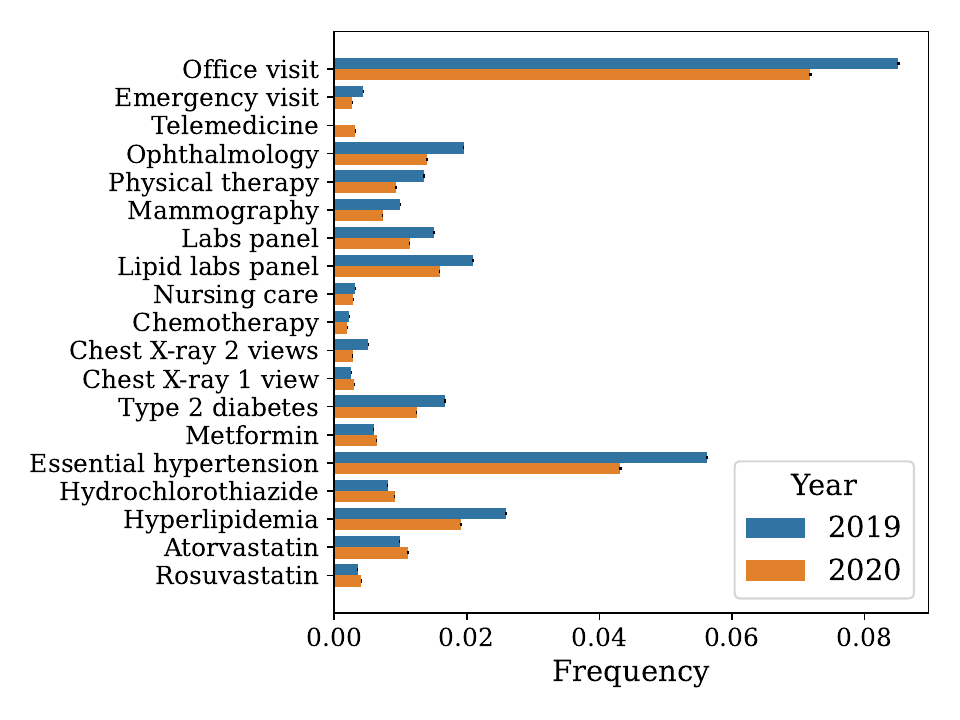}}
    \vspace{-5px}
\end{figure}

\figureref{fig:covariate_shift_2020} shows a selection of shifted features. The frequency of office and emergency department visits dropped in 2020. This observation aligns with how patients were less inclined to visit the doctor for non-COVID related reasons \citep{hartnett2020impact}. Instead, telemedicine became more common \citep{koonin2020trends}. Similarly, routine services, including ophthalmology, physical therapy, screening mammography, and lab panels, decreased significantly. These changes followed US guidelines issued in April 2020 regarding postponing non-urgent services \citep{berkenstock2021changes}. Nursing care also decreased likely due to COVID outbreaks among nursing staff \citep{grabowski2020nursing}. The decrease in chemotherapy procedures reflects how cancer care was delayed during the COVID-19 pandemic \citep{mayo2021cancer}. We see similar trends in the outcomes that are affected by temporal shift and listed in Tables~\ref{tab:condition_results}-\ref{tab:lab_results} in \appendixref{app:scan_results}.

Interestingly, while claims for common conditions decreased, prescription rates for drugs primarily treating those conditions actually increased. These opposing changes suggest the prevalence of chronic illnesses stayed constant but routine clinical visits for managing these conditions decreased. We observe this relation between diabetes and metformin, hypertension and hydrochlorothiazide, and hyperlipidemia and two statins. We discuss domain shift trends further in \appendixref{app:cov_shift}.

\textbf{Challenges to solving the case:} Many methods exist for handling domain shift \citep{mummadi2021test,srivastava2020robustness,chen2020self,zhang2020adaptive,arjovsky2019invariant,sun2016return}. Some approaches seek to reverse the shift by transforming features $X$ to $X'$. Others select different features $X'$. In both cases, the property $X' \perp D \vert U$ is desired. We examine why in Appendix \ref{app:domain_shift_theory}.

Obtaining $X'$ with data from a single source distribution is challenging. To demonstrate, we examine two clinically motivated approaches: selecting features that are less affected by temporal shift and imputing important missing features. We show empirically that these approaches are unable to address domain shift in Appendix \ref{app:empirical_domain_shift}.

\subsection{Conditional Shift in Clinical Procedures}
\label{sec:cond_shift}

Conditional shift is characterized by changes in $\mathbb{P}\left(Y \vert X\right)$ \citep{zhang2013domain}. 

\textbf{Laying out the case:} When comparing logistic regressions from adjacent years, different signs for a feature coefficient may suggest conditional shift with respect to that feature. We obtain 95\% confidence intervals for coefficients using statsmodels \citep{seabold2010statsmodels}. See \appendixref{app:cond_shift_feat_select} for details on feature selection. If the 95\% confidence interval for a coefficient is above 0 for the previous year and below 0 for the current year (or vice versa), the feature is identified as a candidate for conditional shift. 

An example of a non-stationary outcome with conditional shift is receiving an inpatient consultation in 2019. Congestive heart failure and atherosclerosis of coronary artery without angina pectoris became significantly less predictive of the outcome. In general, the frequency of inpatient consultations dropped from .52\% to .35\%. Among patients with congestive heart failure or atherosclerosis, the outcome frequency dropped many times more from 7.5\% to 1.8\% and 3.5\% to 1.2\%, respectively. Medicare reimbursement policies stopped covering inpatient consultations at that time. Since congestive heart failure and atherosclerosis are more prevalent among Medicare patients, more patients with these conditions were affected by the policy change.

Another example of conditional shift is the increased likelihood of receiving nursing care in the next 3 months given discharge from a nursing facility in the past 30 days. While the frequency of nursing care increased from .35\% to .89\% in 2016, among patients who were discharged in the past 30 days, the outcome increased more rapidly from 16\% to 55\%. \appendixref{app:cond_shift_ex_stats} has more details on both examples.

\textbf{Challenges to solving the case:} Addressing conditional shift requires further assumptions on what forms of shift are allowed \citep{zhang2013domain}. \theoremref{thm:cond_shift} in \appendixref{app:cond_shift_recal_proof} provides an example of a set of assumptions. When those assumptions hold, conditional shift can be addressed by selecting a subset of features $C$ and multiplying each predicted probability by $\frac{\mathbb{P}_t\left(Y \vert C\right)}{\mathbb{P}_{t-1}\left(Y \vert C\right)}$. Learning $\mathbb{P}_t\left(Y \vert C\right)$ requires fewer samples from the new time point than fitting a new model since $C$ has fewer features. However, a set of features $C$ that satisfies the assumptions we give is unlikely to exist in healthcare settings.

A set of features $B$ that satisfies the assumption $\mathbb{P}_t\left(Y \vert B\right) = \mathbb{P}_{t-1}\left(Y \vert B\right)$ is more attainable. A model that only uses features in $B$ would not be affected by conditional shift at time $t$. However, predictive information from other features may be lost. We demonstrate these challenges empirically in \appendixref{app:cond_shift_challenge_ex} on the two examples of conditional shift.

\section{Related Work}
\label{sec:related}

Distribution shift has been studied in healthcare settings. The WILDS benchmark includes a dataset of tumor pathology images from different hospitals \citep{koh2021wilds}. \citet{zhang2022adadiag} apply adversarial domain adaptation to adapt a heart failure model across different hospitals. \citet{subbaswamy2020development} and \citet{subbaswamy2021evaluating} identify distribution shifts under which a model may no longer be stable during deployment. \citet{schrouff2022maintaining} examine how fairness may be affected under distribution shifts. We discuss more work on distribution shift outside healthcare in \appendixref{app:related_work}.

Past works have also focused on temporal shifts in healthcare. \citet{guo2021systematic} review 15 studies that mitigate the effects via model refitting, online learning, ensembling, or model selection. \citet{nestor2019feature} show aggregating features into expert-defined clinical concepts increases robustness to temporal shift. \citet{guo2023ehr} show using foundation models also improves robustness. \citet{jung2015implications} examine temporal shift by comparing the performance drop when splitting by time period versus by patient or randomly. \citet{saez2020ehrtemporalvariability} develop a package to identify temporal variability in features. The Wild-Time benchmark evaluates shift across 4 time periods in MIMIC-IV using the definition in our baseline \citep{yao2022wild,johnson2020mimic}. \citet{otles2021mind} attribute deterioration as measured by our baseline definition to a combination of temporal and infrastructure shift. \citet{guo2022evaluation} perform the comparison in our definition for 4 outcomes in MIMIC-IV. We examine temporal shift across a much larger number of tasks. \citet{zhou2022model} scan for temporal shifts in a single outcome over long time ranges by identifying when using stale models results in a large performance drop. Whereas their approach scans for shift across the temporal dimension, our algorithm can scan for temporal shift across three dimensions: time, outcome, and sub-population.

\section{Discussion}

In this work, we propose an algorithm to test whether temporal shift affects an outcome at a particular time either at the population level or within a sub-population. Then, we integrate our method into a meta-algorithm to scan for temporal shifts across a large number of outcomes and time points. Finally, we demonstrate how our approach unearths examples of temporal shifts in a large health insurance claims dataset. We provide code and guidelines so that our study can be generalized to other datasets and tasks.

A limitation of our work is that our algorithms detect temporal shift in a retrospective manner. A prospective approach would detect shift as predictions are made. Prospective tests are challenging. Assessing conditional and label shift depends on the outcome, and outcomes may not be available for a period of time after predictions are made.

We also propose two directions for future work. First, since we demonstrate temporal shifts may affect only sub-populations, we recommend developing additional subgroup discovery methods to identify sub-populations to monitor for temporal shift. Second, once a shift is detected, updating the model with limited data from a new time point is critical for mitigating the effects of temporal shifts on patient care.

\acks{We are grateful to Aaron Smith-McLallen, Stephanie Gervasi, James Denyer, Eric Wilkinson, George Fenimore, Monique Wilkins, and the rest of the data science group at Independence Blue Cross, whose expertise, data, and financial support enabled this study. We would like to thank Rebecca Boiarsky, Alejandro Buendia, and Hunter Lang for help with the database and extraction pipeline. We are grateful to Michael Oberst and Yury Polyanskiy for advice on hypothesis testing and confidence intervals. We also value the clinical insights from Leora Horwitz and Saul Blecker. This draft was improved by suggestions from Hussein Mozannar and our anonymous reviewers.}

\bibliography{main}

\begin{thebibliography}{71}
\providecommand{\natexlab}[1]{#1}
\providecommand{\url}[1]{\texttt{#1}}
\expandafter\ifx\csname urlstyle\endcsname\relax
  \providecommand{\doi}[1]{doi: #1}\else
  \providecommand{\doi}{doi: \begingroup \urlstyle{rm}\Url}\fi

\bibitem[Arjovsky et~al.(2019)Arjovsky, Bottou, Gulrajani, and
  Lopez-Paz]{arjovsky2019invariant}
Martin Arjovsky, L{\'e}on Bottou, Ishaan Gulrajani, and David Lopez-Paz.
\newblock Invariant risk minimization.
\newblock \emph{arXiv preprint arXiv:1907.02893}, 2019.

\bibitem[Association(2022)]{american2022introduction}
American~Diabetes Association.
\newblock Introduction: Standards of medical care in diabetes—2022, 2022.

\bibitem[Bandos et~al.(2005)Bandos, Rockette, and Gur]{bandos2005permutation}
Andriy~I Bandos, Howard~E Rockette, and David Gur.
\newblock A permutation test sensitive to differences in areas for comparing
  {ROC} curves from a paired design.
\newblock \emph{Statistics in medicine}, 24\penalty0 (18):\penalty0 2873--2893,
  2005.

\bibitem[Benjamini and Hochberg(1995)]{benjamini1995controlling}
Yoav Benjamini and Yosef Hochberg.
\newblock Controlling the false discovery rate: a practical and powerful
  approach to multiple testing.
\newblock \emph{Journal of the Royal statistical society: series B
  (Methodological)}, 57\penalty0 (1):\penalty0 289--300, 1995.

\bibitem[Berkenstock et~al.(2021)Berkenstock, Liberman, McDonnell, and
  Chaon]{berkenstock2021changes}
Meghan~K Berkenstock, Paulina Liberman, Peter~J McDonnell, and Benjamin~C
  Chaon.
\newblock Changes in patient visits and diagnoses in a large academic center
  during the {COVID-19} pandemic.
\newblock \emph{BMC ophthalmology}, 21\penalty0 (1):\penalty0 1--9, 2021.

\bibitem[Cao and Slobounov(2011)]{cao2011application}
Cheng Cao and Semyon Slobounov.
\newblock Application of a novel measure of eeg non-stationarity as
  ‘{Shannon}-entropy of the peak frequency shifting’ for detecting residual
  abnormalities in concussed individuals.
\newblock \emph{Clinical Neurophysiology}, 122\penalty0 (7):\penalty0
  1314--1321, 2011.

\bibitem[Chandola et~al.(2009)Chandola, Banerjee, and
  Kumar]{chandola2009anomaly}
Varun Chandola, Arindam Banerjee, and Vipin Kumar.
\newblock Anomaly detection: A survey.
\newblock \emph{ACM computing surveys (CSUR)}, 41\penalty0 (3):\penalty0 1--58,
  2009.

\bibitem[Chen et~al.(2020)Chen, Wei, Kumar, and Ma]{chen2020self}
Yining Chen, Colin Wei, Ananya Kumar, and Tengyu Ma.
\newblock Self-training avoids using spurious features under domain shift.
\newblock \emph{Advances in Neural Information Processing Systems},
  33:\penalty0 21061--21071, 2020.

\bibitem[Cohen et~al.(2021)Cohen, Cao, Viviano, Huang, Fralick, Ghassemi,
  Mamdani, Greiner, and Bengio]{cohen2021problems}
Joseph~Paul Cohen, Tianshi Cao, Joseph~D Viviano, Chin-Wei Huang, Michael
  Fralick, Marzyeh Ghassemi, Muhammad Mamdani, Russell Greiner, and Yoshua
  Bengio.
\newblock Problems in the deployment of machine-learned models in health care.
\newblock \emph{CMAJ}, 193\penalty0 (35):\penalty0 E1391--E1394, 2021.

\bibitem[Finlayson et~al.(2021)Finlayson, Subbaswamy, Singh, Bowers, Kupke,
  Zittrain, Kohane, and Saria]{finlayson2021clinician}
Samuel~G Finlayson, Adarsh Subbaswamy, Karandeep Singh, John Bowers, Annabel
  Kupke, Jonathan Zittrain, Isaac~S Kohane, and Suchi Saria.
\newblock The clinician and dataset shift in artificial intelligence.
\newblock \emph{The New England journal of medicine}, 385\penalty0
  (3):\penalty0 283, 2021.

\bibitem[Food et~al.(2019)Food, Administration, et~al.]{us2019artificial}
US~Food, Drug Administration, et~al.
\newblock Artificial intelligence and machine learning in software as a medical
  device.
\newblock \emph{Silverspring: US Food and Drug Administration}, 2019.

\bibitem[Grabowski and Mor(2020)]{grabowski2020nursing}
David~C Grabowski and Vincent Mor.
\newblock Nursing home care in crisis in the wake of {COVID-19}.
\newblock \emph{Jama}, 324\penalty0 (1):\penalty0 23--24, 2020.

\bibitem[Guo et~al.(2020)Guo, Gong, Liu, Zhang, and Tao]{guo2020ltf}
Jiaxian Guo, Mingming Gong, Tongliang Liu, Kun Zhang, and Dacheng Tao.
\newblock Ltf: A label transformation framework for correcting label shift.
\newblock In \emph{International Conference on Machine Learning}, pages
  3843--3853. PMLR, 2020.

\bibitem[Guo et~al.(2021)Guo, Pfohl, Fries, Posada, Fleming, Aftandilian, Shah,
  and Sung]{guo2021systematic}
Lin~Lawrence Guo, Stephen~R Pfohl, Jason Fries, Jose Posada, Scott~Lanyon
  Fleming, Catherine Aftandilian, Nigam Shah, and Lillian Sung.
\newblock Systematic review of approaches to preserve machine learning
  performance in the presence of temporal dataset shift in clinical medicine.
\newblock \emph{Applied clinical informatics}, 12\penalty0 (04):\penalty0
  808--815, 2021.

\bibitem[Guo et~al.(2022)Guo, Pfohl, Fries, Johnson, Posada, Aftandilian, Shah,
  and Sung]{guo2022evaluation}
Lin~Lawrence Guo, Stephen~R Pfohl, Jason Fries, Alistair~EW Johnson, Jose
  Posada, Catherine Aftandilian, Nigam Shah, and Lillian Sung.
\newblock Evaluation of domain generalization and adaptation on improving model
  robustness to temporal dataset shift in clinical medicine.
\newblock \emph{Scientific reports}, 12\penalty0 (1):\penalty0 1--10, 2022.

\bibitem[Guo et~al.(2023)Guo, Steinberg, Fleming, Posada, Lemmon, Pfohl, Shah,
  Fries, and Sung]{guo2023ehr}
Lin~Lawrence Guo, Ethan Steinberg, Scott~Lanyon Fleming, Jose Posada, Joshua
  Lemmon, Stephen~R Pfohl, Nigam Shah, Jason Fries, and Lillian Sung.
\newblock Ehr foundation models improve robustness in the presence of temporal
  distribution shift.
\newblock \emph{Scientific Reports}, 13\penalty0 (1):\penalty0 3767, 2023.

\bibitem[Hartnett et~al.(2020)Hartnett, Kite-Powell, DeVies, Coletta, Boehmer,
  Adjemian, Gundlapalli, et~al.]{hartnett2020impact}
Kathleen~P Hartnett, Aaron Kite-Powell, Jourdan DeVies, Michael~A Coletta,
  Tegan~K Boehmer, Jennifer Adjemian, Adi~V Gundlapalli, et~al.
\newblock Impact of the {COVID-19} pandemic on emergency department
  visits--{United States, January 1, 2019}--{May 30, 2020}.
\newblock \emph{Morbidity and Mortality Weekly Report}, 69\penalty0
  (23):\penalty0 699, 2020.

\bibitem[Hendrycks and Gimpel(2016)]{hendrycks2016baseline}
Dan Hendrycks and Kevin Gimpel.
\newblock A baseline for detecting misclassified and out-of-distribution
  examples in neural networks.
\newblock \emph{arXiv preprint arXiv:1610.02136}, 2016.

\bibitem[Hripcsak et~al.(2015)Hripcsak, Duke, Shah, Reich, Huser, Schuemie,
  Suchard, Park, Wong, Rijnbeek, et~al.]{hripcsak2015observational}
George Hripcsak, Jon~D Duke, Nigam~H Shah, Christian~G Reich, Vojtech Huser,
  Martijn~J Schuemie, Marc~A Suchard, Rae~Woong Park, Ian Chi~Kei Wong, Peter~R
  Rijnbeek, et~al.
\newblock {Observational Health Data Sciences and Informatics (OHDSI)}:
  opportunities for observational researchers.
\newblock In \emph{MEDINFO 2015: eHealth-enabled Health}, pages 574--578. IOS
  Press, 2015.

\bibitem[Jacobi et~al.(2020)Jacobi, Chung, Bernheim, and
  Eber]{jacobi2020portable}
Adam Jacobi, Michael Chung, Adam Bernheim, and Corey Eber.
\newblock Portable chest {X-ray} in coronavirus disease-19 {(COVID-19)}: A
  pictorial review.
\newblock \emph{Clinical imaging}, 64:\penalty0 35--42, 2020.

\bibitem[Johnson et~al.(2020)Johnson, Bulgarelli, Pollard, Horng, Celi, and
  Mark]{johnson2020mimic}
Alistair Johnson, Lucas Bulgarelli, Tom Pollard, Steven Horng, Leo~Anthony
  Celi, and Roger Mark.
\newblock {MIMIC-IV}.
\newblock \emph{PhysioNet. Available online at: https://physionet.
  org/content/mimiciv/1.0/(accessed August 23, 2021)}, 2020.

\bibitem[Jung and Shah(2015)]{jung2015implications}
Kenneth Jung and Nigam~H Shah.
\newblock Implications of non-stationarity on predictive modeling using ehrs.
\newblock \emph{Journal of biomedical informatics}, 58:\penalty0 168--174,
  2015.

\bibitem[Kamath et~al.(2021)Kamath, Tangella, Sutherland, and
  Srebro]{kamath2021does}
Pritish Kamath, Akilesh Tangella, Danica Sutherland, and Nathan Srebro.
\newblock Does invariant risk minimization capture invariance?
\newblock In \emph{International Conference on Artificial Intelligence and
  Statistics}, pages 4069--4077. PMLR, 2021.

\bibitem[Kodialam et~al.(2021)Kodialam, Boiarsky, Lim, Sai, Dixit, and
  Sontag]{kodialam2021deep}
Rohan Kodialam, Rebecca Boiarsky, Justin Lim, Aditya Sai, Neil Dixit, and David
  Sontag.
\newblock Deep contextual clinical prediction with reverse distillation.
\newblock \emph{Proceedings of the AAAI Conference on Artificial Intelligence},
  35\penalty0 (1):\penalty0 249--258, 2021.

\bibitem[Koh et~al.(2021)Koh, Sagawa, Marklund, Xie, Zhang, Balsubramani, Hu,
  Yasunaga, Phillips, Gao, et~al.]{koh2021wilds}
Pang~Wei Koh, Shiori Sagawa, Henrik Marklund, Sang~Michael Xie, Marvin Zhang,
  Akshay Balsubramani, Weihua Hu, Michihiro Yasunaga, Richard~Lanas Phillips,
  Irena Gao, et~al.
\newblock {WILDS}: A benchmark of in-the-wild distribution shifts.
\newblock In \emph{International Conference on Machine Learning}, pages
  5637--5664. PMLR, 2021.

\bibitem[Koonin et~al.(2020)Koonin, Hoots, Tsang, Leroy, Farris, Jolly, Antall,
  McCabe, Zelis, Tong, et~al.]{koonin2020trends}
Lisa~M Koonin, Brooke Hoots, Clarisse~A Tsang, Zanie Leroy, Kevin Farris,
  Brandon Jolly, Peter Antall, Bridget McCabe, Cynthia~BR Zelis, Ian Tong,
  et~al.
\newblock Trends in the use of telehealth during the emergence of the
  {COVID-19} pandemic--{United States, January}--{March} 2020.
\newblock \emph{Morbidity and Mortality Weekly Report}, 69\penalty0
  (43):\penalty0 1595, 2020.

\bibitem[Krishnamurthy et~al.(2021)Krishnamurthy, Ks, Dovgan, Lu{\v{s}}trek,
  Gradi{\v{s}}ek~Pileti{\v{c}}, Srinivasan, Li, Gradi{\v{s}}ek, and
  Syed-Abdul]{krishnamurthy2021machine}
Surya Krishnamurthy, Kapeleshh Ks, Erik Dovgan, Mitja Lu{\v{s}}trek, Barbara
  Gradi{\v{s}}ek~Pileti{\v{c}}, Kathiravan Srinivasan, Yu-Chuan Li, Anton
  Gradi{\v{s}}ek, and Shabbir Syed-Abdul.
\newblock Machine learning prediction models for chronic kidney disease using
  national health insurance claim data in taiwan.
\newblock \emph{Healthcare}, 9\penalty0 (5):\penalty0 546, 2021.

\bibitem[Levey et~al.(2006)Levey, Coresh, Greene, Stevens, Zhang, Hendriksen,
  Kusek, Van~Lente, and Collaboration*]{levey2006using}
Andrew~S Levey, Josef Coresh, Tom Greene, Lesley~A Stevens, Yaping Zhang,
  Stephen Hendriksen, John~W Kusek, Frederick Van~Lente, and Chronic Kidney
  Disease~Epidemiology Collaboration*.
\newblock Using standardized serum creatinine values in the modification of
  diet in renal disease study equation for estimating glomerular filtration
  rate.
\newblock \emph{Annals of internal medicine}, 145\penalty0 (4):\penalty0
  247--254, 2006.

\bibitem[Levey et~al.(2009)Levey, Stevens, Schmid, Zhang, Castro~III, Feldman,
  Kusek, Eggers, Van~Lente, Greene, et~al.]{levey2009new}
Andrew~S Levey, Lesley~A Stevens, Christopher~H Schmid, Yaping Zhang,
  Alejandro~F Castro~III, Harold~I Feldman, John~W Kusek, Paul Eggers,
  Frederick Van~Lente, Tom Greene, et~al.
\newblock A new equation to estimate glomerular filtration rate.
\newblock \emph{Annals of internal medicine}, 150\penalty0 (9):\penalty0
  604--612, 2009.

\bibitem[Lipton et~al.(2018)Lipton, Wang, and Smola]{lipton2018detecting}
Zachary Lipton, Yu-Xiang Wang, and Alexander Smola.
\newblock Detecting and correcting for label shift with black box predictors.
\newblock In \emph{International conference on machine learning}, pages
  3122--3130. PMLR, 2018.

\bibitem[Mayo et~al.(2021)Mayo, Potugari, Bzeih, Scheidel, Carrera, and
  Shellenberger]{mayo2021cancer}
MacKenzie Mayo, Bindu Potugari, Rami Bzeih, Caleb Scheidel, Carolyn Carrera,
  and Richard~A Shellenberger.
\newblock Cancer screening during the {COVID-19} pandemic: A systematic review
  and meta-analysis.
\newblock \emph{Mayo Clinic Proceedings: Innovations, Quality \& Outcomes},
  5\penalty0 (6):\penalty0 1109--1117, 2021.

\bibitem[Mizuno et~al.(2021)Mizuno, Patel, Park, Hare, Harrington, and
  Adusumalli]{mizuno2021statin}
Atsushi Mizuno, Mitesh~S Patel, Sae-Hwan Park, Allison~J Hare, Tory~O
  Harrington, and Srinath Adusumalli.
\newblock Statin prescribing patterns during in-person and telemedicine visits
  before and during the {COVID-19} pandemic.
\newblock \emph{Circulation: Cardiovascular Quality and Outcomes}, 14\penalty0
  (10):\penalty0 e008266, 2021.

\bibitem[Mummadi et~al.(2021)Mummadi, Hutmacher, Rambach, Levinkov, Brox, and
  Metzen]{mummadi2021test}
Chaithanya~Kumar Mummadi, Robin Hutmacher, Kilian Rambach, Evgeny Levinkov,
  Thomas Brox, and Jan~Hendrik Metzen.
\newblock Test-time adaptation to distribution shift by confidence maximization
  and input transformation.
\newblock \emph{arXiv preprint arXiv:2106.14999}, 2021.

\bibitem[Nason(2006)]{nason2006stationary}
Guy~P Nason.
\newblock Stationary and non-stationary time series.
\newblock \emph{Statistics in volcanology}, 60, 2006.

\bibitem[Nestor et~al.(2019)Nestor, McDermott, Boag, Berner, Naumann, Hughes,
  Goldenberg, and Ghassemi]{nestor2019feature}
Bret Nestor, Matthew~BA McDermott, Willie Boag, Gabriela Berner, Tristan
  Naumann, Michael~C Hughes, Anna Goldenberg, and Marzyeh Ghassemi.
\newblock Feature robustness in non-stationary health records: caveats to
  deployable model performance in common clinical machine learning tasks.
\newblock In \emph{Machine Learning for Healthcare Conference}, pages 381--405.
  PMLR, 2019.

\bibitem[Otles et~al.(2021)Otles, Oh, Li, Bochinski, Joo, Ortwine, Shenoy,
  Washer, Young, Rao, et~al.]{otles2021mind}
Erkin Otles, Jeeheh Oh, Benjamin Li, Michelle Bochinski, Hyeon Joo, Justin
  Ortwine, Erica Shenoy, Laraine Washer, Vincent~B Young, Krishna Rao, et~al.
\newblock Mind the performance gap: examining dataset shift during prospective
  validation.
\newblock In \emph{Machine Learning for Healthcare Conference}, pages 506--534.
  PMLR, 2021.

\bibitem[Outcomes and Group(2013)]{kidney2013kdigo}
Kidney Disease: Improving~Global Outcomes and CKD~Work Group.
\newblock {KDIGO} 2012 clinical practice guideline for the evaluation and
  management of chronic kidney disease.
\newblock \emph{Kidney Int}, 3\penalty0 (1):\penalty0 1--150, 2013.

\bibitem[Pedregosa et~al.(2011)Pedregosa, Varoquaux, Gramfort, Michel, Thirion,
  Grisel, Blondel, Prettenhofer, Weiss, Dubourg, et~al.]{pedregosa2011scikit}
Fabian Pedregosa, Ga{\"e}l Varoquaux, Alexandre Gramfort, Vincent Michel,
  Bertrand Thirion, Olivier Grisel, Mathieu Blondel, Peter Prettenhofer, Ron
  Weiss, Vincent Dubourg, et~al.
\newblock Scikit-learn: Machine learning in python.
\newblock \emph{the Journal of machine Learning research}, 12:\penalty0
  2825--2830, 2011.

\bibitem[Phipson and Smyth(2010)]{phipson2010permutation}
Belinda Phipson and Gordon~K Smyth.
\newblock Permutation p-values should never be zero: calculating exact p-values
  when permutations are randomly drawn.
\newblock \emph{Statistical applications in genetics and molecular biology},
  9\penalty0 (1), 2010.

\bibitem[Quinonero-Candela et~al.(2008)Quinonero-Candela, Sugiyama,
  Schwaighofer, and Lawrence]{quinonero2008dataset}
Joaquin Quinonero-Candela, Masashi Sugiyama, Anton Schwaighofer, and Neil~D
  Lawrence.
\newblock \emph{Dataset shift in machine learning}.
\newblock Mit Press, 2008.

\bibitem[Razavian et~al.(2015)Razavian, Blecker, Schmidt, Smith-McLallen,
  Nigam, and Sontag]{razavian2015population}
Narges Razavian, Saul Blecker, Ann~Marie Schmidt, Aaron Smith-McLallen, Somesh
  Nigam, and David Sontag.
\newblock Population-level prediction of type 2 diabetes from claims data and
  analysis of risk factors.
\newblock \emph{Big Data}, 3\penalty0 (4):\penalty0 277--287, 2015.

\bibitem[Robin et~al.(2011)Robin, Turck, Hainard, Tiberti, Lisacek, Sanchez,
  and M{\"u}ller]{robin2011proc}
Xavier Robin, Natacha Turck, Alexandre Hainard, Natalia Tiberti,
  Fr{\'e}d{\'e}rique Lisacek, Jean-Charles Sanchez, and Markus M{\"u}ller.
\newblock {pROC}: an open-source package for {R and S+} to analyze and compare
  {ROC} curves.
\newblock \emph{BMC bioinformatics}, 12\penalty0 (1):\penalty0 1--8, 2011.

\bibitem[Rosenfeld et~al.(2020)Rosenfeld, Ravikumar, and
  Risteski]{rosenfeld2020risks}
Elan Rosenfeld, Pradeep Ravikumar, and Andrej Risteski.
\newblock The risks of invariant risk minimization.
\newblock \emph{arXiv preprint arXiv:2010.05761}, 2020.

\bibitem[S{\'a}ez et~al.(2020)S{\'a}ez, Guti{\'e}rrez-Sacrist{\'a}n, Kohane,
  Garc{\'\i}a-G{\'o}mez, and Avillach]{saez2020ehrtemporalvariability}
Carlos S{\'a}ez, Alba Guti{\'e}rrez-Sacrist{\'a}n, Isaac Kohane, Juan~M
  Garc{\'\i}a-G{\'o}mez, and Paul Avillach.
\newblock Ehrtemporalvariability: delineating temporal data-set shifts in
  electronic health records.
\newblock \emph{Gigascience}, 9\penalty0 (8):\penalty0 giaa079, 2020.

\bibitem[Sagawa et~al.(2019)Sagawa, Koh, Hashimoto, and
  Liang]{sagawa2019distributionally}
Shiori Sagawa, Pang~Wei Koh, Tatsunori~B Hashimoto, and Percy Liang.
\newblock Distributionally robust neural networks for group shifts: On the
  importance of regularization for worst-case generalization.
\newblock \emph{arXiv preprint arXiv:1911.08731}, 2019.

\bibitem[Salem et~al.(2014)Salem, Liu, Mehaoua, and Boutaba]{salem2014online}
Osman Salem, Yaning Liu, Ahmed Mehaoua, and Raouf Boutaba.
\newblock Online anomaly detection in wireless body area networks for reliable
  healthcare monitoring.
\newblock \emph{IEEE journal of biomedical and health informatics}, 18\penalty0
  (5):\penalty0 1541--1551, 2014.

\bibitem[Sankararaman et~al.(2022)Sankararaman, Narayanaswamy, Singh, and
  Song]{sankararaman2022fitness}
Abishek Sankararaman, Balakrishnan Narayanaswamy, Vikramank~Y Singh, and Zhao
  Song.
\newblock {FITNESS: (Fine Tune on New and Similar Samples)} to detect anomalies
  in streams with drift and outliers.
\newblock In \emph{International Conference on Machine Learning}, pages
  19153--19177. PMLR, 2022.

\bibitem[Santurkar et~al.(2020)Santurkar, Tsipras, and
  Madry]{santurkar2020breeds}
Shibani Santurkar, Dimitris Tsipras, and Aleksander Madry.
\newblock Breeds: Benchmarks for subpopulation shift.
\newblock \emph{arXiv preprint arXiv:2008.04859}, 2020.

\bibitem[Schrouff et~al.(2022)Schrouff, Harris, Koyejo, Alabdulmohsin,
  Schnider, Opsahl-Ong, Brown, Roy, Mincu, Chen,
  et~al.]{schrouff2022maintaining}
Jessica Schrouff, Natalie Harris, Oluwasanmi Koyejo, Ibrahim Alabdulmohsin, Eva
  Schnider, Krista Opsahl-Ong, Alex Brown, Subhrajit Roy, Diana Mincu,
  Christina Chen, et~al.
\newblock Maintaining fairness across distribution shift: do we have viable
  solutions for real-world applications?
\newblock \emph{arXiv preprint arXiv:2202.01034}, 2022.

\bibitem[Seabold and Perktold(2010)]{seabold2010statsmodels}
Skipper Seabold and Josef Perktold.
\newblock statsmodels: Econometric and statistical modeling with python.
\newblock In \emph{9th Python in Science Conference}, 2010.

\bibitem[Segal et~al.(2020)Segal, Kalifa, Radinsky, Ehrenberg, Elad, Maor,
  Lewis, Tibi, Korn, and Koren]{segal2020machine}
Zvi Segal, Dan Kalifa, Kira Radinsky, Bar Ehrenberg, Guy Elad, Gal Maor, Maor
  Lewis, Muhammad Tibi, Liat Korn, and Gideon Koren.
\newblock Machine learning algorithm for early detection of end-stage renal
  disease.
\newblock \emph{BMC nephrology}, 21\penalty0 (1):\penalty0 1--10, 2020.

\bibitem[Sendak et~al.(2020)Sendak, Ratliff, Sarro, Alderton, Futoma, Gao,
  Nichols, Revoir, Yashar, Miller, et~al.]{sendak2020real}
Mark~P Sendak, William Ratliff, Dina Sarro, Elizabeth Alderton, Joseph Futoma,
  Michael Gao, Marshall Nichols, Mike Revoir, Faraz Yashar, Corinne Miller,
  et~al.
\newblock Real-world integration of a sepsis deep learning technology into
  routine clinical care: implementation study.
\newblock \emph{JMIR medical informatics}, 8\penalty0 (7):\penalty0 e15182,
  2020.

\bibitem[Shetty et~al.(2021)Shetty, Imas, and Pursnani]{shetty2021effect}
M~Shetty, P~Imas, and A~Pursnani.
\newblock Effect of the {COVID-19} pandemic on coronary artery calcium testing
  and subsequent statin prescription.
\newblock \emph{Journal of Cardiovascular Computed Tomography}, 15\penalty0
  (4):\penalty0 S33--S34, 2021.

\bibitem[Song et~al.(2015)Song, Flach, and Kalogridis]{song2015dataset}
Hao Song, Peter Flach, and Georgios Kalogridis.
\newblock Dataset shift detection with model-based subgroup discovery.
\newblock In \emph{International Workshop on Learning over Multiple Contexts
  (LMCE)}, 2015.

\bibitem[Srivastava et~al.(2020)Srivastava, Hashimoto, and
  Liang]{srivastava2020robustness}
Megha Srivastava, Tatsunori Hashimoto, and Percy Liang.
\newblock Robustness to spurious correlations via human annotations.
\newblock In \emph{International Conference on Machine Learning}, pages
  9109--9119. PMLR, 2020.

\bibitem[Subbaswamy and Saria(2020)]{subbaswamy2020development}
Adarsh Subbaswamy and Suchi Saria.
\newblock From development to deployment: dataset shift, causality, and
  shift-stable models in health ai.
\newblock \emph{Biostatistics}, 21\penalty0 (2):\penalty0 345--352, 2020.

\bibitem[Subbaswamy et~al.(2021)Subbaswamy, Adams, and
  Saria]{subbaswamy2021evaluating}
Adarsh Subbaswamy, Roy Adams, and Suchi Saria.
\newblock Evaluating model robustness and stability to dataset shift.
\newblock In \emph{International Conference on Artificial Intelligence and
  Statistics}, pages 2611--2619. PMLR, 2021.

\bibitem[Sun et~al.(2016)Sun, Feng, and Saenko]{sun2016return}
Baochen Sun, Jiashi Feng, and Kate Saenko.
\newblock Return of frustratingly easy domain adaptation.
\newblock In \emph{Proceedings of the AAAI conference on artificial
  intelligence}, volume~30, 2016.

\bibitem[Vlahogianni et~al.(2006)Vlahogianni, Karlaftis, and
  Golias]{vlahogianni2006statistical}
Eleni~I Vlahogianni, Matthew~G Karlaftis, and John~C Golias.
\newblock Statistical methods for detecting nonlinearity and non-stationarity
  in univariate short-term time-series of traffic volume.
\newblock \emph{Transportation Research Part C: Emerging Technologies},
  14\penalty0 (5):\penalty0 351--367, 2006.

\bibitem[Wang et~al.(2020)Wang, Shelhamer, Liu, Olshausen, and
  Darrell]{wang2020tent}
Dequan Wang, Evan Shelhamer, Shaoteng Liu, Bruno Olshausen, and Trevor Darrell.
\newblock Tent: Fully test-time adaptation by entropy minimization.
\newblock \emph{arXiv preprint arXiv:2006.10726}, 2020.

\bibitem[Westra and Sisson(2011)]{westra2011detection}
Seth Westra and Scott~A Sisson.
\newblock Detection of non-stationarity in precipitation extremes using a
  max-stable process model.
\newblock \emph{Journal of Hydrology}, 406\penalty0 (1-2):\penalty0 119--128,
  2011.

\bibitem[Wiens et~al.(2019)Wiens, Saria, Sendak, Ghassemi, Liu, Doshi-Velez,
  Jung, Heller, Kale, Saeed, et~al.]{wiens2019no}
Jenna Wiens, Suchi Saria, Mark Sendak, Marzyeh Ghassemi, Vincent~X Liu, Finale
  Doshi-Velez, Kenneth Jung, Katherine Heller, David Kale, Mohammed Saeed,
  et~al.
\newblock Do no harm: a roadmap for responsible machine learning for health
  care.
\newblock \emph{Nature medicine}, 25\penalty0 (9):\penalty0 1337--1340, 2019.

\bibitem[Wiles et~al.(2021)Wiles, Gowal, Stimberg, Alvise-Rebuffi, Ktena,
  Dvijotham, and Cemgil]{wiles2021fine}
Olivia Wiles, Sven Gowal, Florian Stimberg, Sylvestre Alvise-Rebuffi, Ira
  Ktena, Krishnamurthy Dvijotham, and Taylan Cemgil.
\newblock A fine-grained analysis on distribution shift.
\newblock \emph{arXiv preprint arXiv:2110.11328}, 2021.

\bibitem[Yamanishi and Takeuchi(2002)]{yamanishi2002unifying}
Kenji Yamanishi and Jun-ichi Takeuchi.
\newblock A unifying framework for detecting outliers and change points from
  non-stationary time series data.
\newblock In \emph{Proceedings of the eighth ACM SIGKDD international
  conference on Knowledge discovery and data mining}, pages 676--681, 2002.

\bibitem[Yang et~al.(2022)Yang, Wang, Zou, Zhou, Ding, Peng, Wang, Chen, Li,
  Sun, et~al.]{yang2022openood}
Jingkang Yang, Pengyun Wang, Dejian Zou, Zitang Zhou, Kunyuan Ding, Wenxuan
  Peng, Haoqi Wang, Guangyao Chen, Bo~Li, Yiyou Sun, et~al.
\newblock Openood: Benchmarking generalized out-of-distribution detection.
\newblock \emph{arXiv preprint arXiv:2210.07242}, 2022.

\bibitem[Yao et~al.(2022)Yao, Choi, Cao, Lee, Koh, and Finn]{yao2022wild}
Huaxiu Yao, Caroline Choi, Bochuan Cao, Yoonho Lee, Pang~Wei Koh, and Chelsea
  Finn.
\newblock Wild-time: A benchmark of in-the-wild distribution shift over time.
\newblock In \emph{Thirty-sixth Conference on Neural Information Processing
  Systems Datasets and Benchmarks Track}, 2022.

\bibitem[Zhang et~al.(2013)Zhang, Sch{\"o}lkopf, Muandet, and
  Wang]{zhang2013domain}
Kun Zhang, Bernhard Sch{\"o}lkopf, Krikamol Muandet, and Zhikun Wang.
\newblock Domain adaptation under target and conditional shift.
\newblock In \emph{International conference on machine learning}, pages
  819--827. PMLR, 2013.

\bibitem[Zhang et~al.(2020)Zhang, Marklund, Gupta, Levine, and
  Finn]{zhang2020adaptive}
Marvin Zhang, Henrik Marklund, Abhishek Gupta, Sergey Levine, and Chelsea Finn.
\newblock Adaptive risk minimization: A meta-learning approach for tackling
  group shift.
\newblock \emph{arXiv preprint arXiv:2007.02931}, 8:\penalty0 9, 2020.

\bibitem[Zhang et~al.(2022)Zhang, Chen, and Bui]{zhang2022adadiag}
Tianran Zhang, Muhao Chen, and Alex~AT Bui.
\newblock Adadiag: Adversarial domain adaptation of diagnostic prediction with
  clinical event sequences.
\newblock \emph{Journal of biomedical informatics}, 134:\penalty0 104168, 2022.

\bibitem[Zhou et~al.(2022{\natexlab{a}})Zhou, Balakrishnan, and
  Lipton]{zhou2022domain}
Helen Zhou, Sivaraman Balakrishnan, and Zachary~C Lipton.
\newblock Domain adaptation under missingness shift.
\newblock \emph{arXiv preprint arXiv:2211.02093}, 2022{\natexlab{a}}.

\bibitem[Zhou et~al.(2022{\natexlab{b}})Zhou, Chen, and Lipton]{zhou2022model}
Helen Zhou, Yuwen Chen, and Zachary~C Lipton.
\newblock Model evaluation in medical datasets over time.
\newblock \emph{arXiv preprint arXiv:2211.07165}, 2022{\natexlab{b}}.

\end{thebibliography}

\clearpage
\appendix

\section{Outcome Model Details}

\subsection{Outcome Model Class}
\label{app:outcome_model}

We train the outcome models using scikit-learn \citep{pedregosa2011scikit}. When learning the logistic regressions for each year, samples are weighted to address class imbalance. We use an lbfgs solver with the default scikit-learn tolerance 1e-4. We increase the number of iterations to 1000 due to warnings that the solver did not converge. Multi-collinearity may have been the reason these warnings continued to appear sometimes, but the models are reasonable. We tune the L2 regularization constant on a log scale from 1e-5 to 10 and select the model with the best validation AUC. 

We consider using logistic regressions, decision trees, random forests, or gradient-boosted decision trees for the outcome models. When comparing test AUCs on the top 5 condition outcomes shown in \tableref{tab:outcome_model_classes}, we see logistic regressions, random forests, and gradient-boosted trees have similar performance, that is, the mean AUC for logistic regressions is within 1 standard deviation of the mean AUC for the other two model classes. Because logistic regressions are much faster to train and easier to interpret, we use logistic regressions for the outcome models. To improve efficiency, we train one model per year for each outcome and then assess temporal shift. Thus, the same outcome model for time $t$ is used when evaluating shift at time $t$ and time $t+1$.

\begin{table}[b]
\floatconts
  {tab:outcome_model_classes}%
  {\caption{Performance of each model class for outcome models. Mean and standard deviation in parentheses for test AUC are computed over models from all years for predicting the top 5 condition outcomes. GB: Gradient-boosted.}}%
  {\begin{tabular}{lc}
  \toprule
  \bfseries Model class & \bfseries Test AUC \\
  \midrule
  Logistic reg. & 0.667 (0.034) \\
  Decision tree & 0.577 (0.022) \\
  Random forest & 0.674 (0.029) \\
  GB trees & 0.670 (0.029) \\
  \bottomrule
  \end{tabular}}
\end{table}

\subsection{Choice of Feature Windows}
\label{app:feature_window_choice}
As stated in \sectionref{sec:scan_setup_details}, we define features as binary indicators for whether the concept is recorded in the past 30 days. Let $\hat{f}_t^{30}$ denote the outcome models learned with these features. To examine our choice of 30-day feature windows, we also assess temporal shift in models $\hat{f}_t^{365}$ built with 365-day feature windows for the top 5 condition outcomes from 2017 to 2020. We perform this assessment at the population level. Let $h_e$ denote a sub-population model that selects the entire population, that is, $h_e\left(x, y\right) = 1 \forall x \in \mathcal{X}, y \in \mathcal{Y}$. For each of the 15 tasks, we compare the AUC difference $\phi_{\mathcal{D}_t}\left(\hat{f}_t^{30}, h_e\right) - \phi_{\mathcal{D}_t}\left(\hat{f}_{t-1}^{30}, h_e\right)$ evaluated by our algorithm for models built with 30-day features against the difference $\phi_{\mathcal{D}_t}\left(\hat{f}_t^{365}, h_e\right) - \phi_{\mathcal{D}_t}\left(\hat{f}_{t-1}^{365}, h_e\right)$ for models with 365-day features. \figureref{fig:feature_window_comparison} shows these differences are highly correlated. Thus, we expect similar conclusions for different choices of feature windows.

\begin{figure}[t]
    \floatconts
    {fig:feature_window_comparison}
    {\vspace{-15px}\caption{Comparison of AUC difference between current and previous model when using 30-day feature window vs 365-day feature window. 3 points per outcome from different years (2018-2020). $y = x$ shown for reference. GERD: gastroesophageal reflux disease.}}
    {\includegraphics[width=0.96\linewidth]{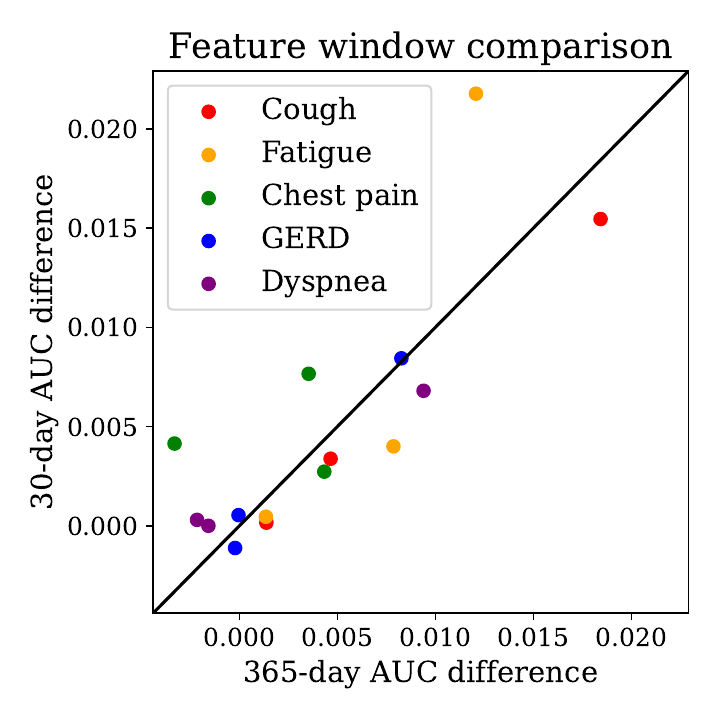}}
    \vspace{-5px}
\end{figure}

\clearpage
\section{Sub-population Discovery Details}

\subsection{Alternative Sub-population Problem Formulations}
\label{app:subpop_problem}

As stated in \sectionref{sec:subpop_step}, there are several ways to formulate the problem of learning a sub-population $h_t$ that satisfies \definitionref{def:temporal_shift} for temporal dataset shift. Many different sub-populations $h_t$ can satisfy the definition. We could formalize the problem as 
\begin{equation}
    h^*_t = \argmax_{h} \mathbb{E}_{\mathcal{D}_t}\left[\phi_{\mathcal{D}_t}\left(f_t, h\right)\right] - \mathbb{E}_{\mathcal{D}_t}\left[\phi_{\mathcal{D}_t}\left(f_{t-1}, h\right)\right]
\end{equation}
A counterexample demonstrates why this formulation is not ideal: If $\phi$ is AUC, this metric would be maximized with only two points in the region. Alternatively, we could formulate the problem as finding a maximal size region where the condition holds:
\begin{multline}
    h^*_t = \argmax_{h} \mathbb{E}_{\mathcal{D}_t}\left[\mathds{1}\left\{h\right\}\right] \\
    \text{s.t. } \mathbb{E}_{\mathcal{D}_t}\left[\phi_{\mathcal{D}_t}\left(f_t, h_t\right)\right] - \mathbb{E}_{\mathcal{D}_t}\left[\phi_{\mathcal{D}_t}\left(f_{t-1}, h_t\right)\right] > 0
\end{multline}
A counterexample for this problem definition is a task where the difference is positive when the metric is evaluated across the entire population. This formulation would return the entire population when we would like a more specific sub-population. 

Another issue that affects both formulations is the metric $\phi$ may not be easy to optimize. The formulation we give in \sectionref{sec:subpop_step} with the loss function allows us to learn a model without running into any of these issues.

\subsection{Sub-population Label Definition in Our Scan}
\label{app:subpop_crossent_calibration}

First, we prove the statement in \sectionref{sec:modeling_choices} that the definition for sub-population labels based on the cross entropy loss for logistic regressions gives rise to calibration-based labels.

\begin{theorem}[Sub-population label definitions]
    \label{thm:subpop_labels_equiv}
    Let $\hat{f}_{t-1}$ and $\hat{f}_t$ denote the fitted outcome models at times $t-1$ and $t$. Let $x_i$ and $y_i$ denote the features and outcome, respectively, for sample $i$ at time $t$. Let $z_i$ denote the label assigned to that sample for fitting the sub-population model. We present two definitions for $z_i$:
    \begin{itemize}
        \item[(i)] Cross entropy definition: We set $\mathcal{L}$ to the cross-entropy loss in the definition for $z_i$ given in \equationref{eq:subpop_label}. The label is the sign of the difference between the cross entropy losses:
        \begin{align}
            \label{eq:subpop_label_cross_entropy}
            z_i = \mathds{1} \Bigl\{ & -y_i \log \hat{f}_{t-1}\left(x_i\right) \nonumber \\
            &- \left(1 - y_i\right) \log \left(1 - \hat{f}_{t-1}\left(x_i\right)\right) \nonumber \\
            &+ y_i \log \hat{f}_t\left(x_i\right) \nonumber \\
            &+ \left(1 - y_i\right) \log \left(1 - \hat{f}_t\left(x_i\right)\right) > 0 \Bigr\}
        \end{align}
        \item[(ii)] Calibration-based definition: Let us define $z'_i$ as whether the prediction from $\hat{f}_t$ is closer to the true label than the prediction from $\hat{f}_{t-1}$: 
        \begin{equation}
            \label{eq:subpop_label_calibration}
            z'_i := \mathds{1} \left\{ \vert y_i - \hat{f}_{t-1}\left(x_i\right) \vert > \vert y_i - \hat{f}_t \left(x_i\right) \vert \right\}
        \end{equation}
    \end{itemize}
     Definition (i) implies definition (ii).
\end{theorem}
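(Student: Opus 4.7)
The plan is a direct case analysis on $y_i \in \{0, 1\}$, exploiting the fact that when the label is binary each of the two conditions collapses to a simple monotone comparison between the two predicted probabilities $p := \hat{f}_{t-1}(x_i)$ and $q := \hat{f}_t(x_i)$. Assuming both outcome models output probabilities in $(0, 1)$ (as is standard for logistic regressions, so that the logs are well defined), I would then show that in each case the cross-entropy inequality in \eqref{eq:subpop_label_cross_entropy} and the absolute-error inequality in \eqref{eq:subpop_label_calibration} reduce to the \emph{same} inequality between $p$ and $q$. This will in fact yield an equivalence, which is stronger than the implication claimed.

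Concretely, I would proceed as follows. First, for $y_i = 1$, the cross-entropy condition $z_i = 1$ becomes $-\log p + \log q > 0$, i.e.\ $q > p$, while the calibration condition $z'_i = 1$ becomes $|1 - p| > |1 - q|$, which (since $p, q \in (0,1)$) is also $q > p$. Second, for $y_i = 0$, the cross-entropy condition becomes $-\log(1 - p) + \log(1 - q) > 0$, i.e.\ $p > q$, and the calibration condition becomes $|p| > |q|$, which reduces to $p > q$ as well. Combining the two cases gives $z_i = z'_i$, so in particular $z_i = 1 \Rightarrow z'_i = 1$.

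The only real subtleties are bookkeeping: making sure to absorb the sign flip that comes from the $(1 - y_i)$ branch of cross entropy, and noting that the absolute values in \eqref{eq:subpop_label_calibration} simplify cleanly precisely because $y_i$ is $0$ or $1$ and the predictions lie in $[0, 1]$. I do not expect a genuine obstacle here; the main thing to be careful about is edge cases where $p$ or $q$ equals $0$ or $1$, which I would handle by noting that the cross-entropy loss is not defined there (or $+\infty$), so $\hat{f}_{t-1}$ and $\hat{f}_t$ must output strict probabilities for definition (i) to be meaningful. Under that mild assumption the implication, and in fact the equivalence, follows immediately from the two-case computation above.
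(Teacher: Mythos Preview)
Your proposal is correct and follows essentially the same approach as the paper: a case split on $y_i\in\{0,1\}$, under the assumption that both predictions lie in $(0,1)$, showing that in each case both inequalities reduce to the same comparison between $\hat{f}_{t-1}(x_i)$ and $\hat{f}_t(x_i)$. The only cosmetic difference is that you explicitly observe the argument yields an equivalence, whereas the paper records only the one-sided implication.
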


\begin{proof}
    First, consider the case where $y_i = 0$:
    \begin{align}
        z_i &= \mathds{1}\left\{- \log \left(1 - f_{t-1}\left(x_i\right)\right) + \log \left(1 - f_t\left(x_i\right)\right) > 0\right\} \\
        &= \mathds{1}\left\{\log \left( \frac{1 - f_t\left(x_i\right)}{1 - f_{t-1}\left(x_i\right)}\right) > 0\right\} \label{eq:logy0}
    \end{align}
    The condition inside the indicator function in \equationref{eq:logy0} holds if
    \begin{equation}
        \frac{1 - f_t\left(x_i\right)}{1 - f_{t-1}\left(x_i\right)} > 1
    \end{equation}
    For the logarithms in \equationref{eq:subpop_label_cross_entropy} to be defined, $0 < f_{t-1}\left(x_i\right) < 1$ and $0 < f_t\left(x_i\right) < 1$ for all $x_i$. Thus,
    \begin{equation}
        1 - f_t\left(x_i\right) > 1 - f_{t-1}\left(x_i\right)
    \end{equation}
    \begin{equation}
        - f_{t-1}\left(x_i\right) < - f_t\left(x_i\right)
    \end{equation}
    Taking the absolute values of negative quantities flips the signs, so
    \begin{equation}
        \lvert - f_{t-1}\left(x_i\right) \rvert > \lvert - f_t\left(x_i\right) \rvert
    \end{equation}
    Since $y_i = 0$,
    \begin{equation}
        \lvert y_i - f_{t-1}\left(x_i\right) \rvert > \lvert y_i - f_t\left(x_i\right) \rvert
    \end{equation}
    Now that we have shown Equations ~\ref{eq:subpop_label_cross_entropy} and \ref{eq:subpop_label_calibration} are equivalent when $y_i = 0$, we will complete the proof by showing that they are also equivalent when $y_i = 1$:
    \begin{align}
        z_i &= \mathds{1}\left\{ - \log f_{t-1}\left(x_i\right) + \log f_t\left(x_i\right) > 0\right\} \\
        &= \mathds{1}\left\{ \log \left(\frac{f_t\left(x_i\right)}{f_{t-1}\left(x_i\right)}\right) > 0\right\} \label{eq:logy1}
    \end{align}
    The condition inside the indicator function in \equationref{eq:logy1} holds if
    \begin{equation}
        \frac{f_t\left(x_i\right)}{f_{t-1}\left(x_i\right)} > 1
    \end{equation}
    \begin{equation}
    f_t\left(x_i\right) > f_{t-1}\left(x_i\right)
    \end{equation}
    \begin{equation}
        1 - f_{t-1}\left(x_i\right) > 1 - f_t\left(x_i\right) 
    \end{equation}
    These quantities are positive, so 
    \begin{equation}
        \lvert 1 - f_{t-1}\left(x_i\right) \rvert > \lvert 1 - f_t\left(x_i\right) \rvert
    \end{equation}
    Plugging in $y_i = 1$ gives the desired
    \begin{equation}
        \lvert y_i - f_{t-1}\left(x_i\right) \rvert > \lvert y_i - f_t\left(x_i\right) \rvert
    \end{equation}
\end{proof}

Second, we verify empirically that the difference in cross entropy losses is significant. We compare the distribution of $z_i$ for two models from different years to the distribution for two models learned on different data folds from the same year. We provide an example with the self-care training outcome in 2016, where our algorithm detected temporal shift in the sub-population. The top plot of \figureref{fig:nll_dist} shows the distribution of differences between the cross entropy loss from the 2016 versus the 2015 model. $z_i = 1$ for all samples to the right of the black line. The bottom plot shows the distribution of differences between the cross entropy loss from models learned on two different data folds in 2016. We expect the differences to be smaller since there is no shift between the two folds. The new fold is constructed by swapping the validation patient set with a third of the training patient set. The same feature set is used in the new data fold. As shown in the bottom plot of \figureref{fig:nll_dist}, this distribution is more narrowly peaked around 0. Thus, the differences used to define the sub-population labels capture more than just noise between different models.

\begin{figure}[t]
    \floatconts
    {fig:nll_dist}
    {\vspace{-15px}\caption{Distribution of negative log likelihood differences. Top: Difference between 2016 and 2015 models for predicting the self-care training procedure outcome on training data from 2016. Samples to the right of the black line are labeled with $z = 1$ for the sub-population identification step. Bottom: Difference between model learned on fold 0 vs fold 1 for predicting self-care training procedure outcome. Both are evaluated on fold 0 training data from 2016.}}
    {\includegraphics[width=0.96\linewidth]{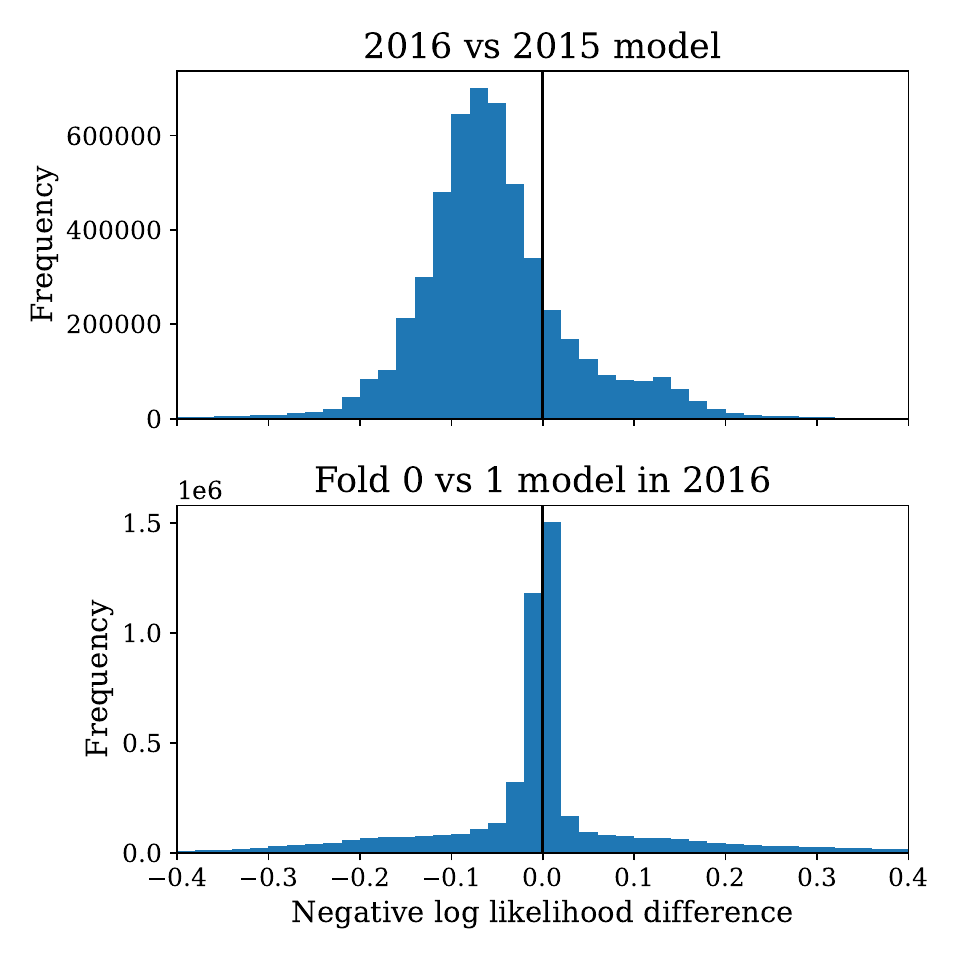}}
    \vspace{-5px}
\end{figure}

\subsection{Sub-population Model}
\label{app:subpop_model}

When learning decision trees for each sub-population, samples are also weighted to address class imbalance. We tune the minimum number of samples per leaf among 10, 25, and 100 and select the model with the best validation AUC. In case the training and validation splits have different prediction distributions, the two splits for learning the outcome models are randomly shuffled and re-split for learning the sub-population models.

For sub-population models, we consider using logistic regressions, decision trees, random forests, or gradient-boosted decision trees. We evaluate the test AUCs also on the top 5 condition outcomes. We also count the number of non-stationary regions with p-values below .05 and AUC differences above .01. We use logistic regressions for the outcome models in this comparison. For sub-population models, logistic regressions are best at predicting where the previous outcome models are not as well calibrated as the current outcome models. However, decision trees are much better at creating non-stationary sub-populations. As shown in \tableref{tab:subpop_model_classes}, we find 14 non-stationary regions with decision trees but none with logistic regressions or random forests. Gradient-boosted trees are more complicated than decision trees but show no additional benefits. Because the goal of learning these models is to identify sub-populations impacted by temporal dataset shift, we choose decision trees for the sub-population models.

\begin{table}[htbp]
\floatconts
  {tab:subpop_model_classes}%
  {\caption{Performance of each model class for sub-population models applied to logistic regression outcome models. Mean and standard deviation in parentheses for test AUC are computed over models from all years for predicting the top 5 condition outcomes. Number of non-stationary regions is the total across these outcomes and years. GB: Gradient-boosted.}}%
  {\begin{tabular}{lcc}
  \toprule
  \bfseries Model class & \bfseries AUC & \bfseries Non-stat. reg. \\
  \midrule
  Logistic reg. & 0.988 (0.004) & 0 \\
  Decision tree & 0.960 (0.013) & 14 \\
  Random forest & 0.932 (0.023) & 0 \\
  GB trees & 0.959 (0.016) & 14 \\
  \bottomrule
  \end{tabular}}
\end{table}

\clearpage
\section{Hypothesis Testing Details}
\label{app:hyp_test_details}
\subsection{Additional Notation for Hypothesis Testing Details}
\label{app:data_struc_notation}
As we note in \appendixref{app:cohort_def}, a dataset $\mathcal{D}_t$ contains multiple samples from the same patient collected at different time points within time frame $t$. When we compute confidence intervals in our algorithms, we account for how patients are independently and identically distributed (iid) while samples for each patient are not. For the permutation tests in our algorithms, we also account for this data structure when determining which observations are exchangeable.

To describe this data structure, we use the following notation in our algorithms: Let $P$ denote the number of patients and $\mathcal{P}$ denote the patient indices $\left[1, \ldots, P\right]$. Let $J_p^{t-1}$ and $J_p^t$ be the number of samples patient $p$ has at times $t-1$ and $t$, respectively. Let $\mathcal{J}_p^{t-1}$ denote the indices $\left[1, \ldots, J_p^{t-1}\right]$ and $\mathcal{J}_p^t$ denote the indices $\left[J_p^{t-1} + 1, \ldots, J_p^{t-1} + J_p^t\right]$. Thus, for $1 \le j \le J_p^{t-1}$, $\left(x_p^j, y_p^j\right)$ denotes the $j$th sample patient $p$ contributes to the dataset $\mathcal{D}_{t-1}$ at time $t-1$. For $J_p^{t-1} + 1 \le j \le J_p^{t-1} + J_p^t$, $\left(x_p^j, y_p^j\right)$ denotes the $\left(j - J_p^{t-1}\right)$th sample patient $p$ contributes to the dataset $\mathcal{D}_t$ at time $t$. Since we are evaluating temporal shift between two time points, we will generally consider two time points $t = 0$ and $t = 1$. Let $\mathcal{J}_p = \mathcal{J}_p^{t-1} \cup \mathcal{J}_p^t$ be the concatenation of the two lists of indices. Let $\mathcal{P}_1$ consist of all patients $p$ such that $y_p^j = 1$ for some $j \in \mathcal{J}_p$. Let $\mathcal{P}_0$ consist of all patients $p$ such that  $y_p^j = 0$ for all $j \in \mathcal{J}_p$.

Since our criteria for hypothesis testing assess models within a sub-population $h_t$, outside the sub-population, and across the entire population, we will use the following notation: Let $h_t^c$ denote a sub-population model that selects the complement of the sub-population selected by $h_t$, that is, $h_t^c\left(x, y\right) = 1 - h_t\left(x, y\right) \forall x \in \mathcal{X}, y \in \mathcal{Y}$. Let $h_e$ denote a sub-population model that selects the entire population, that is, $h_e\left(x, y\right) = 1 \forall x \in \mathcal{X}, y \in \mathcal{Y}$.

\subsection{Criteria for Hypothesis Testing}
\label{app:criteria}
For a hypothesis to be selected for multiple hypothesis testing, the task must pass the following criteria:
    
\textbf{Minimum sample size:} The task must have a minimum number of patients who have the outcome. In \algorithmref{alg:sample_size_check}, we require at least 25 patients have the outcome in the validation split for both years. Since samples from the same patient are more similar with each other, only observing the outcome in a few patients may result in high variance in the estimate of model performance on the general population. Thus, we designed this criterion in terms of the number of patients rather than the number of samples with the outcome. Another motivation for this choice is minimizing repetition among the bootstrap and permutation datasets drawn when computing the confidence intervals and permutation tests described later in this appendix.
   
For sub-population analyses, these minimum frequencies must be satisfied separately inside the region and outside the region. We also require at least 25 patients with at least one sample without the outcome inside the region (and likewise for outside the region). Because the outcomes are rare, this requirement is unnecessary when evaluating the entire population. However, for sub-populations, we are no longer guaranteed sufficient sample size without the outcome. We also check that the sub-population contains between 0.1\% and 75\% of the total number of samples in the validation split in the current year. Sub-populations that are too small or too large may not be clinically meaningful.
   
\textbf{Well-fit outcome models:} We require models $\hat{f}_{t-1}$ and $\hat{f}_t$ be well-fit. Poorly fit models do not capture the true conditional distribution and cannot be used to assess dataset shift. In \algorithmref{alg:model_fit_check}, we check that $\phi_{\mathcal{D}_t}\left(\hat{f}_t, h_e\right) > c_{thr}$ and $\phi_{\mathcal{D}_{t-1}}\left(\hat{f}_{t-1}, h_e\right) > c_{thr}$, where $c_{thr}$ is a defined threshold. When assessing AUC on the validation set, we set the threshold to $c_{thr} = 0.5$.
    
For sub-population analyses, we additionally require the model in year $t$ be well-fit on that sub-population: $\phi_{\mathcal{D}_t}\left(\hat{f}_t, \hat{h}_t\right) > c_{thr}$. For the baseline algorithm, we omit $\phi_{\mathcal{D}_t}\left(\hat{f}_t, h_e\right) > c_{thr}$.
    
\textbf{Current model is significantly better:} As an initial screening for whether a task may be affected by temporal shift, we require the current model $\hat{f}_t$ to have a significantly higher metric value on the validation dataset $\mathcal{D}_t$ than the previous model $\hat{f}_{t-1}$. In \algorithmref{alg:perform_comp}, we first check the current model achieves a higher metric value, that is, $\phi_{\mathcal{D}_t}\left(\hat{f}_t, h_e\right) - \phi_{\mathcal{D}_t}\left(\hat{f}_{t-1}, h_e\right) > 0$. Then, we check this difference is statistically significant by computing a 90\% bootstrap confidence interval for $\mathbb{E}_{\mathcal{D}_t}\left[\phi_{\mathcal{D}_t}\left(\hat{f}_t, h_e\right)\right] - \mathbb{E}_{\mathcal{D}_t}\left[\phi_{\mathcal{D}_t}\left(\hat{f}_{t-1}, h_e\right)\right]$ using \algorithmref{alg:bootstrap_ci_2models_1dataset}.
    
We perform bootstrap by drawing patients rather than samples since bootstrap must be performed on iid items. Following the standard R package for confidence intervals for AUC, we also stratify bootstrap by outcome and perform 2000 bootstrap iterations \citep{robin2011proc}. Putting these factors together, our algorithm proceeds as follows: First, we split patients into two categories: $\mathcal{P}_1$ for those who have the outcome in some sample in the dataset and $\mathcal{P}_0$ for those who never have the outcome. Then, we perform stratified bootstrap at the patient level, that is, draw $\lvert \mathcal{P}_1 \rvert$ patients with replacement from $\mathcal{P}_1$ and $\lvert \mathcal{P}_0 \rvert$ patients with replacement from $\mathcal{P}_0$. When a patient $p$ is sampled, all samples from that patient $\left(x_p^j, y_p^j\right)_{j=1}^{J_p^t}$ are included in the bootstrap dataset $\mathcal{D}_t^{b*}$. We compute the AUC difference $\hat{A}^{b*} = \phi_{\mathcal{D}_t^{b*}}\left(\hat{f}_t, h_e\right) - \phi_{\mathcal{D}_t^{b*}}\left(\hat{f}_{t-1}, h_e\right)$ on each bootstrap dataset $\mathcal{D}_t^{b*}$. Let $\hat{A}_5$ and $\hat{A}_{90}$ denote the 5th and 90th percentiles, respectively, of the bootstrap estimates $\hat{A}^{1*}, \ldots, \hat{A}^{2000*}$. Let $\hat{A}$ denote the AUC difference on the actual dataset. The 90\% confidence interval is defined as $\left[2 \hat{A} - \hat{A}_{90}, 2 \hat{A} - \hat{A}_5\right]$. If this interval is above 0, the task passes this check and may be included for hypothesis testing.

For sub-populations, we instead check the 90\% bootstrap confidence interval for $\mathbb{E}_{\mathcal{D}_t}\left[\phi_{\mathcal{D}_t}\left(\hat{f}_t, \hat{h}_t\right)\right] - \mathbb{E}_{\mathcal{D}_t}\left[\phi_{\mathcal{D}_t}\left(\hat{f}_{t-1}, \hat{h}_t\right)\right]$ within the sub-population is above 0. Additionally, we check the 90\% bootstrap confidence interval for $\mathbb{E}_{\mathcal{D}_t}\left[\phi_{\mathcal{D}_t}\left(\hat{f}_t, \hat{h}_t\right)\right] - \mathbb{E}_{\mathcal{D}_t}\left[\phi_{\mathcal{D}_t}\left(\hat{f}_{t-1}, \hat{h}_t\right)\right]$ outside the sub-population is not above 0. This check excludes sub-populations where temporal shift is also present outside the sub-population.

The baseline check in \algorithmref{alg:baseline_comp} computes a 90\% bootstrap confidence interval for $\mathbb{E}_{\mathcal{D}_{t-1}}\left[\phi_{\mathcal{D}_{t-1}}\left(\hat{f}_{t-1}, h_e\right)\right] - \mathbb{E}_{\mathcal{D}_t}\left[\phi_{\mathcal{D}_t}\left(\hat{f}_{t-1}, h_e\right)\right]$ using the validation datasets to simulate draws of $\mathcal{D}_{t-1}$ and $\mathcal{D}_t$. To account for the same patients contributing different samples to both datasets, the bootstrap procedure for computing confidence intervals in \algorithmref{alg:bootstrap_ci_1model_2datasets} differs from \algorithmref{alg:bootstrap_ci_2models_1dataset} in two ways: First, $\mathcal{P}_1$ contains all patients who have the outcome in either $\mathcal{D}_{t-1}$ or $\mathcal{D}_t$. Second, the two datasets are sampled at once: When a patient $p$ is sampled, $\left(x_p^j, y_p^j\right)_{j=1}^{J_p^{t-1}}$ are included in the bootstrap dataset $\mathcal{D}_{t-1}^{b*}$, and $\left(x_p^j, y_p^j\right)_{j=J_p^{t-1} + 1}^{J_p^{t-1} + J_p^t}$ are included in $\mathcal{D}_t^{b*}$. If the patient has no samples in $\mathcal{D}_t$, no samples are contributed to $\mathcal{D}_t^{b*}$ (likewise for $\mathcal{D}_{t-1}$ and $\mathcal{D}_{t-1}^{b*}$).

\begin{algorithm2e}[tbp]
    \caption{Satisfy sample size check}
    \label{alg:sample_size_check}
    \SetNoFillComment
    \KwIn{Samples $\left\{\left\{\left(x_p^j, y_p^j\right)_{j \in \mathcal{J}_p^{t'}}\right\}_{p \in \mathcal{P}}\right\}_{t' = t-1}^t$ in validation split, sub-population model $h_t$, threshold $n_{thr}$ (default: 25)}
    \KwOut{Boolean: Is sample size sufficient?}
    \eIf{$h_t\left(x, y\right) = 1 \forall x \in \mathcal{X}, y \in \mathcal{Y}$}{
        $S \leftarrow 0$\;
    }{
        $S \leftarrow 1$\;
    }
    \For{$t' \leftarrow t-1 \KwTo t$}{
        \If{$\sum_{p \in \mathcal{P}} \sum_{j \in \mathcal{J}_p^{t'}} \mathds{1}\left\{y_p^j h_t\left(x_p^j, y_p^j\right)\right\} < n_{thr}$}{
            \tcp*[h]{too few patients with outcome}\\
            \KwOut{\textsf{\upshape False}}
        }
        \If{$S = 1$}{
            \If{$\sum_{p \in \mathcal{P}} \sum_{j \in \mathcal{J}_p^{t'}} \mathds{1}\left\{y_p^j \left(1 - h_t\left(x_p^j, y_p^j\right)\right)\right\} < n_{thr}$}{
                \KwOut{\textsf{\upshape False}}
            }
           \If{$\sum_{p \in \mathcal{P}} \sum_{j \in \mathcal{J}_p^{t'}} \mathds{1}\left\{\left(1 - y_p^j\right) h_t\left(x_p^j, y_p^j\right)\right\} < n_{thr}$}{
                \KwOut{\textsf{\upshape False}}
            }
            \If{$\sum_{p \in \mathcal{P}} \sum_{j \in \mathcal{J}_p^{t'}} \mathds{1}\left\{\left(1 - y_p^j\right)  \left(1 - h_t\left(x_p^j, y_p^j\right)\right)\right\} < n_{thr}$}{
                \KwOut{\textsf{\upshape False}}
            } 
        }
    }
    \If{$S = 1$}{
        $m \leftarrow \sum_{p \in \mathcal{P}} J_p^t $\;
        $s \leftarrow \sum_{p \in \mathcal{P}} \sum_{j \in \mathcal{J}_p^t} h_t\left(x_p^j, y_p^j\right)$\;
        \If{$s < .01 * m$}{
            \tcp*[h]{sub-population too small}\\
            \KwOut{\textsf{\upshape False}}
        }
        \If{$s > .75 * m$}{ 
            \KwOut{\textsf{\upshape False}} 
        }
    }
    \KwOut{True}
\end{algorithm2e}

\begin{algorithm2e}[tbp]
    \caption{Satisfy model fit check}
    \label{alg:model_fit_check}
    \KwIn{Samples $\mathcal{D}_{t-1} = \left(X^{t-1}_i, Y^{t-1}_i\right)_{i=1}^{n^{t-1}}$ and $\mathcal{D}_t = \left(X^t_i, Y^t_i\right)_{i=1}^{n^t}$ in validation split, outcome models $f_{t-1}, f_t$, sub-population model $h_t$, $c_{thr}$ threshold for model performance (default: .5 for AUC)}
    \KwOut{Boolean: Are models well-fit?}
    $r_{t-1} \gets \phi_{\mathcal{D}_{t-1}}\left(f_{t-1}, h_e\right)$\;
    \If{$r_{t-1} < c_{thr}$}{
        \KwOut{\textsf{\upshape False}}
    }
    $r_t \leftarrow \phi_{\mathcal{D}_t}\left(f_t, h_e\right)$\; 
    \If{$r_t < c_{thr}$}{
        \KwOut{\textsf{\upshape False}}
    }
    $r_s \leftarrow \phi_{\mathcal{D}_t}\left(f_t, h_t\right)$\; \If{$r_s < c_{thr}$}{
        \KwOut{\textsf{\upshape False}}
    }
    \KwOut{\textsf{\upshape True}}
\end{algorithm2e}

\begin{algorithm2e}[tbp]
    \caption{Compute bootstrap confidence interval for difference in metric value of 2 models on 1 dataset}
    \label{alg:bootstrap_ci_2models_1dataset}
    \SetNoFillComment
    \KwIn{Samples $\mathcal{D}_t = \left\{\left(x_p^j, y_p^j\right)_{j \in \mathcal{J}_p^t}\right\}_{p \in \mathcal{P}}$, outcome models $f_{t-1}, f_t$, sub-population model $h_t$, confidence $1 - \alpha$ (default: $1 - \alpha = .90$), $B$ bootstrap iterations (default: $B = 2000$)}
    \KwOut{Confidence interval}
    $a \leftarrow \phi_{\mathcal{D}_t}\left(f_t, h_t\right) - \phi_{\mathcal{D}_t}\left(f_{t-1}, h_t\right)$\; 
    $A^* \leftarrow \left[\right]$ \tcp*{bootstrap AUC diffs}
    \For{$b \leftarrow 1 \KwTo B$}{
        $\mathcal{D}_t^{b*} \leftarrow \left[\right]$ \tcp*{bootstrap samples}
        \For{$i \leftarrow 1 \KwTo \lvert \mathcal{P}_0 \rvert$}{
            $p \leftarrow$ \textsf{\upshape Draw patient from } $\mathcal{P}_0$\;
            $\mathcal{D}_t^{b*} \leftarrow \mathcal{D}_t^{b*} + \left(x_p^j, y_p^j\right)_{j \in \mathcal{J}_p^t}$\;
        }
        \For{$i \leftarrow 1 \KwTo \lvert \mathcal{P}_1 \rvert$}{
            $p \leftarrow$ \textsf{\upshape Draw patient from } $\mathcal{P}_1$\;
            $\mathcal{D}_t^{b*} \leftarrow \mathcal{D}_t^{b*} + \left(x_p^j, y_p^j\right)_{j \in \mathcal{J}_p^t}$\;
        }
        $A^* \leftarrow A^* + \left[\phi_{\mathcal{D}_t^{b*}}\left(f_t, h_t\right) - \phi_{\mathcal{D}_t^{b*}}\left(f_{t-1}, h_t\right)\right]$\;
    }
    $l \leftarrow 2 a - A^*_{1 - \alpha/2}$ \tcp*{$\left(1 - \alpha/2\right)$ percentile}
    $u \leftarrow 2 a - A^*_{\alpha/2}$\;
    \KwOut{$\left(l , u\right)$}
\end{algorithm2e}

\begin{algorithm2e}[tbp]
    \caption{Satisfy performance comparison}
    \label{alg:perform_comp}
    \SetNoFillComment
    \KwIn{Samples $\mathcal{D}_t = \left\{\left(x_p^j, y_p^j\right)_{j \in \mathcal{J}_p^t}\right\}_{p \in \mathcal{P}}$ in validation split, outcome models $f_{t-1}, f_t$, sub-population model $h_t$, confidence $1 - \alpha$ (default: $1 - \alpha = .90$)}
    \KwOut{Boolean: Is $f_{t-1}$ worse than $f_t$ on $\mathcal{D}_t$ in $h_t$?}
    $a \leftarrow \phi_{\mathcal{D}_t}\left(f_t, h_t\right) - \phi_{\mathcal{D}_t}\left(f_{t-1}, h_t\right)$\; 
    \If{$a \le 0$}{
        \KwOut{\textsf{\upshape False}}
    }
    $\left(l, u\right) \leftarrow $ \textsf{\upshape \algorithmref{alg:bootstrap_ci_2models_1dataset} with inputs $\mathcal{D}_t, f_{t-1}, f_t, h_t, 1 - \alpha$}\;
    \If{$l \le 0$}{
        \KwOut{\textsf{\upshape False}}
    }
    \If{$\exists x \in \mathcal{X}, y \in \mathcal{Y}$ \textsf{\upshape where} $h_t\left(x, y\right) \neq 1$}{
        $\left(l^c, u^c\right) \leftarrow $ \textsf{\upshape \algorithmref{alg:bootstrap_ci_2models_1dataset} with inputs $\mathcal{D}_t, f_{t-1}, f_t, h_t^c, 1 - \alpha$}\;
        \If{$l^c > 0$}{
            \tcp*[h]{opposite outside region}\\
            \KwOut{\textsf{\upshape False}} 
        }
    }
    \KwOut{\textsf{\upshape True}}
\end{algorithm2e}

\begin{algorithm2e}[tbp]
    \caption{Compute bootstrap confidence interval for difference in metric value of 1 model on 2 datasets}
    \label{alg:bootstrap_ci_1model_2datasets}
    \SetNoFillComment
    \KwIn{Samples $\left\{\mathcal{D}_{t'} = \left\{\left(x_p^j, y_p^j\right)_{j \in \mathcal{J}_p^{t'}}\right\}_{p \in \mathcal{P}}\right\}_{t' = t-1}^t$, outcome model $f_{t-1}$, sub-population model $h_t$, confidence $1 - \alpha$ (default: $1 - \alpha = .90$), $B$ bootstrap iterations (default: $B = 2000$)}
    \KwOut{Confidence interval}
    $a \leftarrow \phi\left(f_{t-1}, h_t, \mathcal{D}_{t-1}\right) - \phi\left(f_{t-1}, h_t, \mathcal{D}_t\right)$ \textsf{\upshape on input samples}\; 
    $A^* \leftarrow \left[\right]$ \tcp*{bootstrap AUC diffs}
    \For{$b \leftarrow 1 \KwTo B$}{
        $\mathcal{D}_{t-1}^{b*}, \mathcal{D}_t^{b*} \leftarrow \left[\right], \left[\right]$ \tcp*{bootstrap samples}
        \For{$i \leftarrow 1 \KwTo \lvert \mathcal{P}_0 \rvert$}{
            $p \leftarrow$ \textsf{\upshape Draw patient from } $\mathcal{P}_0$\;
            $\mathcal{D}_{t-1}^{b*} \leftarrow \mathcal{D}_{t-1}^{b*} + \left(x_p^j, y_p^j\right)_{j \in \mathcal{J}_p^{t-1}}$\;
            $\mathcal{D}_t^{b*} \leftarrow \mathcal{D}_t^{b*} + \left(x_p^j, y_p^j\right)_{j \in \mathcal{J}_p^t}$\;
        }
        \For{$i \leftarrow 1 \KwTo \lvert \mathcal{P}_1 \rvert$}{
            $p \leftarrow$ \textsf{\upshape Draw patient from } $\mathcal{P}_1$\;
            $\mathcal{D}_{t-1}^{b*} \leftarrow \mathcal{D}_{t-1}^{b*} + \left(x_p^j, y_p^j\right)_{j \in \mathcal{J}_p^{t-1}}$\;
            $\mathcal{D}_t^{b*} \leftarrow \mathcal{D}_t^{b*} + \left(x_p^j, y_p^j\right)_{j \in \mathcal{J}_p^t}$\;
        }
        $A^* \leftarrow A^* + \left[\phi_{\mathcal{D}_{t-1}^{b*}}\left(f_{t-1}, h_t\right) - \phi_{\mathcal{D}_t^{b*}}\left(f_{t-1}, h_t\right)\right]$\;
    }
    $l \leftarrow 2 a - A^*_{1 - \alpha/2}$ \tcp*{$\left(1 - \alpha/2\right)$ percentile}
    $u \leftarrow 2 a - A^*_{\alpha/2}$\;
    \KwOut{$\left(l , u\right)$}
\end{algorithm2e}

\begin{algorithm2e}[tbp]
    \caption{Satisfy baseline comparison}
    \label{alg:baseline_comp}
    \KwIn{Samples $\left\{\mathcal{D}_{t'} = \left\{\left(x_p^j, y_p^j\right)_{j \in \mathcal{J}_p^{t'}}\right\}_{p \in \mathcal{P}}\right\}_{t' = t-1}^t$ in validation split, outcome model $f_{t-1}$, sub-population model $h_t$, confidence $1 - \alpha$ (default: $1 - \alpha = .90$)}
    \KwOut{Boolean: Is $f_{t-1}$ worse on $\mathcal{D}_t$ than $\mathcal{D}_{t-1}$ in $h_t$?}
    $a \leftarrow \phi_{\mathcal{D}_{t-1}}\left(f_{t-1}, h_t\right) - \phi_{\mathcal{D}_t}\left(f_{t-1}, h_t\right)$\; 
    \If{$a \le 0$}{
        \KwOut{\textsf{\upshape False}}
    }
    $\left(l, u\right) \leftarrow $ \textsf{\upshape \algorithmref{alg:bootstrap_ci_1model_2datasets} with inputs $\mathcal{D}_{t-1}, \mathcal{D}_t, f_{t-1}, h_t, 1 - \alpha$}\;
    \If{$l \le 0$}{
        \KwOut{\textsf{\upshape False}}
    }
    \If{$\exists x \in \mathcal{X}, y \in \mathcal{Y}$ \textsf{\upshape where} $h_t\left(x, y\right) \neq 1$}{
        $\left(l^c, u^c\right) \leftarrow $ \textsf{\upshape \algorithmref{alg:bootstrap_ci_1model_2datasets} with inputs $\mathcal{D}_{t-1}, \mathcal{D}_t, f_{t-1}, h_t^c, 1 - \alpha$}\;
        \If{$l^c > 0$}{
            \tcp*[h]{opposite outside region}\\
            \KwOut{\textsf{\upshape False}} 
        }
    }
    \KwOut{\textsf{\upshape True}}
\end{algorithm2e}

\subsection{Permutation Test for Temporal Shift}
\label{app:permutation_test}

As proposed in \sectionref{sec:hyp_test}, we test the null hypothesis that there is no temporal shift from time $t-1$ to time $t$ against a one-sided alternative hypothesis. Under the alternative hypothesis, the outdated model performs worse than the new model. To state these hypotheses formally,
\begin{align*}
    H_0: \,\, &\text{Data is generated from a distribution } \mathbb{P} \in \mathcal{C}_0 \\
    &\text{such that }\, \phi_{\mathcal{D}_t}\left(\hat{f}_t, h_t\right) - \phi_{\mathcal{D}_t}\left(\hat{f}_{t-1}, h_t\right) = 0 \\
    &\text{where } \, \left(x_p^{j}, y_p^{j}\right)_{j=1}^{12} \overset{iid}{\sim} \mathbb{P} \\
    &\text{for each patient } p \text{ from } 1 \text{ to } P, \\
    &\mathcal{D}_t = \left\{\left(x_p^{j}, y_p^{j}\right)_{j=1}^{12}\right\}_{p = 1}^P
\end{align*}
\begin{align*}
    H_1: \,\, &\text{Data is generated from a distribution } \mathbb{P} \in \mathcal{C}_1 \\
    &\text{such that }\, \phi_{\mathcal{D}_t}\left(\hat{f}_t, h_t\right) - \phi_{\mathcal{D}_t}\left(\hat{f}_{t-1}, h_t\right) > 0 \\
    &\text{where } \, \left(x_p^{j}, y_p^{j}\right)_{j=1}^{12} \overset{iid}{\sim} \mathbb{P} \\
    &\text{for each patient } p \text{ from } 1 \text{ to } P, \\
    &\mathcal{D}_t = \left\{\left(x_p^{j}, y_p^{j}\right)_{j=1}^{12}\right\}_{p = 1}^P
\end{align*}
We construct \algorithmref{alg:permut_2models_1dataset} to run this hypothesis test for any metric given the structure of our data (iid patients with non-iid samples).

As stated in \sectionref{sec:modeling_choices}, we draw from the permutation test in \citet{bandos2005permutation} for significant difference between the AUCs of two models on the same dataset. First, they compute the predictions from the two models. For each model, they compute the rank of each prediction. Because the models may output predictions on different scales, converting to ranks puts the predictions on the same scale. This may not be necessary for other metrics. Then, for each permutation, the ranks or predictions from the two models may be swapped for each sample. Let $\mathcal{R}$ denote the predictions or ranks of a list of samples in a permutation. Let $\mathcal{Y}$ denote the true labels of these samples. With slight abuse of notation, we will denote the metric computed directly from these predictions and labels as $\phi_{\left(\mathcal{R}, \mathcal{Y}\right)}\left(\cdot, h_t\right)$. Note that for the original dataset, if $\mathcal{R}$ comes from $\hat{f}$ and the samples come from $\mathcal{D}$, then $\phi_{\left(\mathcal{R}, \mathcal{Y}\right)}\left(\cdot, h_e\right) = \phi_{\mathcal{D}}(\hat{f}, h_e)$. With $N$ samples, there are $2^N$ permutations. The p-value is the proportion of permutations with a larger AUC difference than what is observed.

A key assumption of permutation tests is the items being swapped are exchangeable. Let $r_0$ and $s_0$ be the ranks of predictions from two models for sample 0. Analogously, $r_1$ and $s_1$ are the ranks for sample 1. Let $g$ be the probability density function for a joint distribution of these 4 ranks. Under the null hypothesis the two models have the same AUC, 
\begin{align}
    &g\left(r_0, s_0, r_1, s_1\right) = g\left(s_0, r_0, r_1, s_1\right) \\
    &= g\left(r_0, s_0, s_1, r_1\right) = g\left(s_0, r_0, s_1, r_1\right)
\end{align}
However, this only holds if samples 0 and 1 are from different patients. If samples 0 and 1 are from the same patient, 
\begin{align}
    &g\left(s_0, r_0, r_1, s_1\right) \neq g\left(r_0, s_0, r_1, s_1\right) \\
    &= g\left(s_0, r_0, s_1, r_1\right) \neq g\left(r_0, s_0, s_1, r_1\right)
\end{align}
Thus, for each patient, we can only choose between swapping all samples between the two models or keeping the original arrangement.

\citet{bandos2005permutation} also propose an approximate version of this test that relies on asymptotic normality of the U-statistic. Without iid samples, we are not guaranteed asymptotic normality. Instead, we use a Monte Carlo approximation of the permutation test with 2000 random permutations. We report the p-value as $\left(1 + m\right)/2001$, where $m$ is the number of permutations with a larger AUC difference than what is observed. \citet{phipson2010permutation} recommend adding 1 in the numerator and denominator since understating the p-value can have serious implications when performing multiple hypothesis testing.

\begin{algorithm2e}[tbp]
    \caption{Permutation test for difference in metric value of 2 models on 1 dataset}
    \label{alg:permut_2models_1dataset}
    \SetNoFillComment
    \KwIn{Samples $\mathcal{D}_t = \left\{\left(x_p^j, y_p^j\right)_{j \in \mathcal{J}_p^t}\right\}_{p \in \mathcal{P}}$, outcome models $f_{t-1}, f_t$, sub-population model $h_t$, $B$ permutations (default: $B = 2000$)}
    \KwOut{P-value}
    $a \leftarrow \phi_{\mathcal{D}_t}\left(f_t, h_t\right) - \phi_{\mathcal{D}_t}\left(f_{t-1}, h_t\right)$\;
    $\mathcal{Y} \leftarrow \left\{\left\{y_p^j\right\}_{j \in \mathcal{J}_p^t}\right\}_{p \in \mathcal{P}}$\;
    $\mathcal{R} \leftarrow \left\{\left\{r_p^j \leftarrow f_{t-1}\left(x_p^j\right)\right\}_{j \in \mathcal{J}_p^t}\right\}_{p \in \mathcal{P}}$\;
    $\mathcal{S} \leftarrow \left\{\left\{s_p^j \leftarrow f_t\left(x_p^j\right)\right\}_{j \in \mathcal{J}_p^t}\right\}_{p \in \mathcal{P}}$\;
    \If{$\phi$ \textsf{\upshape is AUC}}{
        $\mathcal{R} \leftarrow \Bigl\{\left\{r_p^j \leftarrow \right.$ \textsf{\upshape rank}$\left(r_p^j\right)$ \textsf{\upshape among }$\left.\mathcal{R} \right\}_{j \in \mathcal{J}_p^t}\Bigr\}_{p \in \mathcal{P}}$\;
        $\mathcal{S} \leftarrow \Bigl\{\left\{s_p^j \leftarrow \right.$ \textsf{\upshape rank}$\left(s_p^j\right)$ \textsf{\upshape among }$\left.\mathcal{S} \right\}_{j \in \mathcal{J}_p^t}\Bigr\}_{p \in \mathcal{P}}$\;
    }
    $m \leftarrow 0$ \tcp*{\# permut. w/ larger AUC diff}
    \For{$b \leftarrow 1 \KwTo B$}{
        $\mathcal{R}^{b*}, \mathcal{S}^{b*} \leftarrow \left[\right], \left[\right]$\;
        \For{$i \leftarrow 1 \KwTo \lvert \mathcal{P} \rvert$}{
            $w \leftarrow$ \textsf{\upshape Draw from Ber} $\left(0.5\right)$\;
            \eIf{$w = 1$}{
                $\mathcal{R}^{b*} \leftarrow \mathcal{R}^{b*} + \left\{s_p^j\right\}_{j \in \mathcal{J}_p^t}$\;
                $\mathcal{S}^{b*} \leftarrow \mathcal{S}^{b*} + \left\{r_p^j\right\}_{j \in \mathcal{J}_p^t}$\;
            }{
                $\mathcal{R}^{b*} \leftarrow \mathcal{R}^{b*} + \left\{r_p^j\right\}_{j \in \mathcal{J}_p^t}$\;
                $\mathcal{S}^{b*} \leftarrow \mathcal{S}^{b*} + \left\{s_p^j\right\}_{j \in \mathcal{J}_p^t}$\;
            }
        }
        $a^{b*} \leftarrow \phi_{\left(\mathcal{R}^{b*}, \mathcal{Y}\right)}\left(\cdot, h_t\right) - \phi_{\left(\mathcal{S}^{b*}, \mathcal{Y}\right)}\left(\cdot, h_t\right)$\;
        \If{$a^{b*} > a$}{
            $m \leftarrow m + 1$\;
        }
    }
    \KwOut{$\frac{1 + m}{1 + B}$}
\end{algorithm2e}

\subsection{Permutation Test for Baseline}
\label{app:permutation_test_baseline}

For the baseline, the permutation test compares the performance of one model $\hat{f}_{t-1}$ on two datasets $\mathcal{D}_{t-1}$ and $\mathcal{D}_t$. To state the hypotheses formally,
\begin{align*}
    H_0: \,\, &\text{Data is generated from a distribution } \mathbb{P} \in \mathcal{C}_2 \\
    &\text{such that }\, \phi_{\mathcal{D}_{t-1}}\left(\hat{f}_{t-1}, h_t\right) - \phi_{\mathcal{D}_t}\left(\hat{f}_{t-1}, h_t\right) = 0 \\
    &\text{where } \, \left(x_p^{j}, y_p^{j}\right)_{j=1}^{24} \overset{iid}{\sim} \mathbb{P} \\
    &\text{for each patient } p \text{ from } 1 \text{ to } P, \\
    &\mathcal{D}_{t-1} = \left\{\left(x_p^{j}, y_p^{j}\right)_{j=1}^{12}\right\}_{p = 1}^P,\\
    &\mathcal{D}_t = \left\{\left(x_p^{j}, y_p^{j}\right)_{j=13}^{24}\right\}_{p = 1}^P
\end{align*}
\begin{align*}
    H_1: \,\, &\text{Data is generated from a distribution } \mathbb{P} \in \mathcal{C}_3 \\
    &\text{such that }\, \phi_{\mathcal{D}_{t-1}}\left(\hat{f}_{t-1}, h_t\right) - \phi_{\mathcal{D}_t}\left(\hat{f}_{t-1}, h_t\right) > 0 \\
    &\text{where } \, \left(x_p^{j}, y_p^{j}\right)_{j=1}^{24} \overset{iid}{\sim} \mathbb{P} \\
    &\text{for each patient } p \text{ from } 1 \text{ to } P, \\
    &\mathcal{D}_{t-1} = \left\{\left(x_p^{j}, y_p^{j}\right)_{j=1}^{12}\right\}_{p = 1}^P, \\
    &\mathcal{D}_t = \left\{\left(x_p^{j}, y_p^{j}\right)_{j=13}^{24}\right\}_{p = 1}^P
\end{align*}
$\mathcal{C}_2$ is a collection of distributions without temporal shift according to the baseline definition. $\mathcal{C}_3$ is a collection of distributions with temporal shift according to the baseline definition.

This test resembles the standard permutation test for comparing whether two distributions have the same value for a statistic. However, we must again be cautious about exchangeability. This time, for each patient, we can only choose between swapping all samples between the two timepoints or keeping the original arrangement. This permutation test procedure is given in \algorithmref{alg:permut_1model_2datasets}.

\begin{algorithm2e}[tbp]
    \caption{Permutation test for difference in metric value of 2 models on 1 dataset}
    \label{alg:permut_1model_2datasets}
    \SetNoFillComment
    \KwIn{Samples $\left\{\mathcal{D}_{t'} = \left\{\left(x_p^j, y_p^j\right)_{j \in \mathcal{J}_p^{t'}}\right\}_{p \in \mathcal{P}}\right\}_{t' = t-1}^t$, outcome model $f_{t-1}$, sub-population model $h_t$, $B$ permutations (default: $B = 2000$)}
    \KwOut{P-value}
    $a \leftarrow \phi_{\mathcal{D}_{t-1}}\left(f_{t-1}, h_t\right) - \phi_{\mathcal{D}_t}\left(f_{t-1}, h_t\right)$\;
    $m \leftarrow 0$ \tcp*{\# permut. w/ larger AUC diff}
    \For{$b \leftarrow 1 \KwTo B$}{
        $\mathcal{D}_{t-1}^{b*}, \mathcal{D}_t^{b*} \leftarrow \left[\right], \left[\right]$\;
        \For{$i \leftarrow 1 \KwTo \lvert \mathcal{P} \rvert$}{
            $w \leftarrow$ \textsf{\upshape Draw from Ber} $\left(0.5\right)$\;
            \eIf{$w = 1$}{
                $\mathcal{D}_{t-1}^{b*} \leftarrow \mathcal{D}_{t-1}^{b*} + \left(x_p^j, y_p^j\right)_{j \in \mathcal{J}_p^t}$\;
                $\mathcal{D}_t^{b*} \leftarrow \mathcal{D}_t^{b*} + \left(x_p^j, y_p^j\right)_{j \in \mathcal{J}_p^{t-1}}$\;
            }{
                $\mathcal{D}_{t-1}^{b*} \leftarrow \mathcal{D}_{t-1}^{b*} + \left(x_p^j, y_p^j\right)_{j \in \mathcal{J}_p^{t-1}}$\;
                $\mathcal{D}_t^{b*} \leftarrow \mathcal{D}_t^{b*} + \left(x_p^j, y_p^j\right)_{j \in \mathcal{J}_p^t}$\;
            }
        }
        $a^{b*} \leftarrow \phi_{\mathcal{D}_{t-1}^{b*}}\left(f_{t-1}, h_t\right) - \phi_{\mathcal{D}_t^{b*}}\left(f_{t-1}, h_t\right)$\;
        \If{$a^{b*} > a$}{
            $m \leftarrow m + 1$\;
        }
    }
    \KwOut{$\frac{1 + m}{1 + B}$}
\end{algorithm2e}

\subsection{Additional Discussion on Baseline}
\label{app:baseline}

An advantage of our algorithm compared to the baseline is that it isolates the effects of distribution shift. As argued in \citet{koh2021wilds}, the baseline is affected by other factors, such as changes in predictability or noise levels in the current year. Learning a new model on data from the current year captures these other factors.

The baseline may react faster to incoming data if deployed online. Because a new model does not need to be trained on the current year, the baseline does not require collecting as much data from the current year. Both the baseline and our algorithm do not react immediately to temporal shift since they require waiting until the outcome window has passed to evaluate predictions against the true label. In our scan, this period is 3 months.

Because learning the sub-populations requires the current model, we do not test for sub-population level shift with the baseline. There is no straightforward way to learn sub-populations only with the previous model. One may consider learning a model to predict the sub-population among samples from the current year where the previous model is incorrect. However, the previous model may also make this mistake on similar data from the previous year. Thus, this error analysis does not necessarily capture temporal shift.

\clearpage
\section{Guidelines for Large Studies on Health Insurance Claims Data}
\label{app:guidelines}

\subsection{Standardized Data Formats}

\label{sec:standardized_data}
We found that using a database with standard table definitions facilitates the data extraction process. We extract data from OMOP CDM tables \citep{hripcsak2015observational}. The person table provides patient demographic information. The observation period table lists date ranges for when each patient is enrolled. The condition occurrence, procedure occurrence, and drug exposure tables include dates when each concept occurred, and the corresponding concept name can be found in the concept table. The visit occurrence table enumerates when patients utilized the healthcare system and the doctor who provided the service. The doctor's specialty can be found in the provider table. Finally, the measurement table tracks lab orders. When the insurance company has a contract with the center that performed the lab, the measurement values and reference ranges are also provided. To extract these features, we use a package built on the publicly available omop-learn repository from \citet{kodialam2021deep}.

\subsection{Cohort Sequence Definition}
\label{app:cohort_def}

To study temporal dataset shift, we set up a sequence of cohorts in a way that maximizes the available sample size. Models are typically trained with a fixed prediction date: The model ingests data recorded prior to this date and predicts an outcome that occurs in a window following this date. We choose to examine shift from year to year. Thus, cohorts in the sequence have prediction dates from different years. To capture year-round trends, each cohort uses 12 prediction dates, arbitrarily defined as the first of each month. To account for seasonality, indicators for the month are included as features. In our experiments, we obtain a sequence of cohorts from 2015 to 2020.

There are many ways to divide patients among these prediction dates. One option is to randomly assign each patient to one prediction date. This strategy substantially under-utilizes the available sample size. Instead, we extract a sample for every prediction date for which a patient is eligible, that is, the patient is observed for at least 95\% of the previous year and is still observed 3 months after the prediction date or has a death recorded in the 3-month window. In our experiments, the cohort satisfying these criteria consists of over 1.6 million people, each of whom contributes an average of 34 samples. The only caveat is we must be cautious when splitting samples to ensure that no information from the test set is leaked into the training or validation sets. All samples from a particular patient are placed in the same split. To balance the outcome frequency across time in each split, the patients are split stratified by the first month in which the outcome occurs for that patient, with an additional stratification containing all patients for whom the outcome never occurs.

\subsection{Frequent Outcome Selection}
\label{sec:outcome_selection}
To automate selection of a large outcome panel, we start with 100 each of the most frequent conditions, procedures, and abnormal lab measurements in the cohort. Frequency is calculated as the number of patient and prediction date pairs where the outcome occurs in the next 3 months. Upon examining this initial foray, we altered the definitions for each outcome type:
\begin{enumerate}
    \item For conditions, instead of defining each occurrence as an instance of the outcome, we only include initial diagnoses. Whereas repeated procedures or lab measurements tend to indicate recurring risk, predicting documentation of chronic conditions at multiple visits provides little value. Once patients have been diagnosed with the condition, they are excluded from the cohort. To ensure the outcomes are initial diagnoses, we require the patient is observed for at least 95\% of the previous 3 years. This prior observation criteria reduces the cohort size to 872,775 people with an average of 29 samples per person. The cohort is further reduced to various sizes based on the initial diagnosis requirement for each outcome. For instance, the cohort for the most frequent condition contains 817,441 people with an average of 22 samples per person. With two additional years of prior observation required, cohorts are only extracted for 2017 to 2020.
    \item For abnormal labs, examining the most frequent outcomes helped us identify several data cleaning steps. Because labs are particularly messy, measurement data processing warrants a separate discussion in \appendixref{sec:labs}. 
    \item For procedures, many concept codes map to almost the same procedure. Learning predictive models to differentiate between these similar concepts is not sensible. We initially leveraged the concept relationship and ancestor tables. However, while some concepts may be closely related to other concepts that share a common ancestor 2 or 3 levels up the hierarchy, other concepts can only be grouped with some of their siblings. Automating such groupings proved to be infeasible. Instead, we group related concepts into a single procedure outcome based on concept names. Each group includes any concept that contains a user-specified string and excludes any concept containing any string within user-specified exclusion set. We share the criteria we crafted in our repository.
\end{enumerate}
After making these modifications, we re-extracted the top 100 condition and lab outcomes and grouped the top procedure outcomes we found. Our final panel has a total of 242 outcomes.

\subsection{Lab Measurement Cleaning}
\label{sec:labs}

Two sources of noise in the lab measurement data are inconsistent references and zero-imputed missing values. We call for a clinically curated set of reference ranges and more standardized handling of missing lab values to advance future research with lab measurements. In the absence of such a resource, we describe our process for creating cleaner lab features.

Different lab centers provide inconsistent reference ranges. Contributing factors include varying ranges in clinical guidelines and preference for flagging only high risk values versus giving early warnings. To create an outcome definition that is uniform across patients, we unify the reference ranges. For the frequent outcomes in our panel, we looked up gender-specific references from clinical sources and manually created a reference table. We found that some labs can be reported in multiple units. We provide references in different units and check that the references match when units are converted. When the provided unit is not sensible for that lab, the values are dropped. When units are not provided, typically a standard unit can be assumed. However, when multiple standard units are possible, unclear values are dropped. Finally, values that are outside the possible range are also removed. We share the specifications we curate in our repository.

Because there are over 2300 lab concepts, manually entering the reference ranges for all labs is impossible without a curated resource. Thus, we automate the remainder of the reference unification process. For each gender and age range (below 30, 30 to 50, 50 to 70, and above 70), we find the most frequent reference for that demographic. Some demographic groups do not have any reference value that occurs at least 5 times. In these cases, we use the reference for a similar lab concept. Similar concepts are defined as sharing the first 15 characters in the concept name and having an average value that is at most 15 units apart for that demographic group. If a similar concept is not available, we fill in references from the opposite gender and nearby age ranges. For remaining labs, we disregard demographic differences and use a reference value that occurs at least 5 times across the entire population. We apply this logic to create standardized lab reference tables.

A second contributor to messy lab data is the incorrect documentation of missing values as zero instead of null. We discovered this issue from unreasonably high frequencies of abnormally low lab outcomes. To correct these errors, we create a heuristic for identifying which zeros are measured values versus placeholders for missing data. Our heuristic has two criteria for determining whether zero is an infeasible value for that lab: 1. The concept does not have negative measurements. 2. If a reference range is available, the low end is above 5. If a reference is unavailable, the average non-zero value is above 5. For labs where both criteria are satisfied, zero measurements are treated as missing values. We apply this heuristic to create a cleaned measurement table.

We create 10 indicator features for each lab: whether the lab was ordered; whether the value is below, in, or above the reference range; which quartile the value falls in; and whether the value has increased or decreased compared to the previous measurement for that patient. Extracting these 10 derived features from a measurement table with over 290 million rows is a time-consuming part of the automated feature extraction process described in \appendixref{app:feature_extract}. To make this part more efficient, we recommend indexing columns in the standardized reference and measurement tables that are created when cleaning the lab data.

\subsection{Efficient Automated Feature Extraction}
\label{app:feature_extract}

Building an automated and efficient feature extraction pipeline is a key component of working with large datasets. As mentioned in \sectionref{sec:standardized_data}, the first step is to extract patient demographics and the conditions, procedures, drug exposures, and provider specialties for each visit using the package from \citet{kodialam2021deep}. Next, we create binary indicators for whether the feature occurred in the past 30 days. Finally, we include features that occur at least 100 and 300 times in the cohorts for condition and other outcomes, respectively. In our experiment, the number of features varies for each outcome. For example, the set-ups for the most frequent condition and lab outcomes have 15,534 and 16,648 features, respectively.

While building this pipeline, we optimized the process to run in a reasonable amount of time and memory. Our first take-away is to share as much of the feature extraction process across cohorts as possible. Instead of extracting the features separately for each prediction date that a patient is observed, we extract one set of features for all patients on the final prediction date. Then, we create the windowed features separately for each prediction date. These features are shared across all outcomes, so the pipeline only diverges in the final step when additional eligibility criteria for condition outcomes and outcome-stratified data splits are applied.

Our second optimization principle is striking a balance between using disk space and virtual memory. The windowed features are computed by aggregating across a matrix with three dimensions: person, feature, and time. This large matrix is built by creating lists of coordinates for each entry. Appending to lists in Python requires allocating new space in memory each time the allotted space is filled. Because this process is slow and memory intensive for long lists, we write small chunks of these lists to disk repeatedly to keep list sizes small. The chunks can be read in at the end to construct the matrix. By utilizing some more disk space, we can significantly reduce the virtual memory requirement and run time.

\clearpage
\section{Domain Shift Details}

\subsection{Additional Analysis of Features with Domain Shift in 2020}
\label{app:cov_shift}

When fitting the logistic regression for predicting whether a sample is from 2019 or 2020, we apply L1 regularization and use the liblinear solver. We selected $C = 0.01$ as the regularization constant because that logistic regression is within 0.01 of the best validation AUC and has a sparser set of features with non-zero coefficients.

Continuing the analysis in \sectionref{sec:cov_shift_2020}, we examine some of the features that shifted in 2020 further here. While the frequency of 2-view chest X-rays decreased, the frequency of 1-view chest X-rays increased. The decrease is likely associated with the decrease in emergency visits. The increase can be accounted for by the use of portable chest X-rays to examine lung abnormalities among COVID patients \citep{jacobi2020portable}.

We also explore the increased prescription rate of statins. First, statin therapy is recommended for patients with coronary artery calcium (CAC) over 100. Health systems started providing \$50 CAC testing in June 2019 after the 2018 guidelines from the American College of Cardiology and American Heart Association advocated for their use \citep{shetty2021effect}. More prevalent testing led to increased statin prescription. Second, another study by \citet{mizuno2021statin} found that statin prescription rates were higher in telemedicine visits than in-person visits in April and May 2020. Both of these factors may have contributed to increased statin prescription despite fewer hyperlipidemia claims.

Many of the shifts in 2020 are complicated by other types of shift and cannot be addressed by the approaches we propose in \sectionref{sec:cov_shift_2020}. For instance, many of the outcomes exhibit label shift since their frequencies also changed in 2020. Conditional shift may also affect some outcomes in 2020. For instance, missed doctor visits may affect outcomes due to poor disease management.

\begin{figure}[h]
    \floatconts
    {fig:domain_shift_proof}
    {\vspace{-15px}\caption{Domain shift affects $X$ but not $X'$.}}
    {\includegraphics[width=0.5\linewidth]{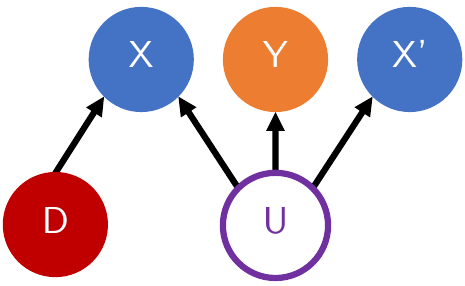}}
    \vspace{-20px}
\end{figure}

\subsection{Analysis of Desired Feature Set for Handling Domain Shift}
\label{app:domain_shift_theory}

\begin{theorem}[Domain Shift: Desired Features]
\label{thm:domain_shift}
Consider the causal graph in \figureref{fig:domain_shift_proof} where $X$ and $X'$ are two views of the unobserved variable $U$. Let $\mathcal{X}$ and $\mathcal{X}'$ be the feature spaces for $X$ and $X'$, respectively. Let $\mathcal{U}$ be the latent space for $U$. If the shift between distributions $\mathbb{P}_{t-1}$ and $\mathbb{P}_t$ satisfies the following conditions:
\begin{enumerate}
    \item No latent shift: $\mathbb{P}_t\left(U\right) = \mathbb{P}_{t-1}\left(U\right)$
    \item No label shift: $\mathbb{P}_t\left(Y \vert U\right) = \mathbb{P}_{t-1}\left(Y \vert U\right)$
    \item Domain shift affects $X$: $\exists x \in \mathcal{X}, u \in \mathcal{U}$ such that $\mathbb{P}_t\left(X = x \vert U = u\right) \neq \mathbb{P}_{t-1}\left(X = x \vert U = u\right)$
    \item Domain shift does not affect $X'$: $\forall x' \in \mathcal{X}', u \in \mathcal{U}$, we have $\mathbb{P}_t\left(X' = x' \vert U = u\right) = \mathbb{P}_{t-1}\left(X' = x' \vert U = u\right)$
\end{enumerate}
then
\begin{equation}
    \label{eq:thm_ds_eq}
    \mathbb{P}_t\left(Y \vert X'\right) = \mathbb{P}_{t-1}\left(Y \vert X'\right)
\end{equation}
However, we are not guaranteed
\begin{equation}
    \label{eq:thm_ds_neq}
    \mathbb{P}_t\left(Y \vert X\right) = \mathbb{P}_{t-1}\left(Y \vert X\right)
\end{equation}
\end{theorem}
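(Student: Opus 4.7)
The plan is to prove the equality in \equationref{eq:thm_ds_eq} by decomposing $\mathbb{P}_t\left(Y \vert X'\right)$ through the latent variable $U$ using the conditional independence structure encoded in \figureref{fig:domain_shift_proof}, and then to establish the non-guarantee in \equationref{eq:thm_ds_neq} by exhibiting a concrete counterexample where the same decomposition fails because condition 3 is violated.

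First I would read off from the causal graph that $Y \perp X' \mid U$ (since $U$ blocks the only path from $X'$ to $Y$). This lets me write
\begin{equation*}
    \mathbb{P}_t\left(Y \vert X'\right) = \int_{\mathcal{U}} \mathbb{P}_t\left(Y \vert U\right) \mathbb{P}_t\left(U \vert X'\right) \, dU,
\end{equation*}
and similarly at time $t-1$. The right factor can be handled by Bayes' rule: $\mathbb{P}_t\left(U \vert X'\right) \propto \mathbb{P}_t\left(X' \vert U\right) \mathbb{P}_t\left(U\right)$, with normalizing constant $\mathbb{P}_t\left(X'\right) = \int \mathbb{P}_t\left(X' \vert U\right) \mathbb{P}_t\left(U\right) dU$. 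Conditions 1 and 4 give $\mathbb{P}_t\left(U\right) = \mathbb{P}_{t-1}\left(U\right)$ and $\mathbb{P}_t\left(X' \vert U\right) = \mathbb{P}_{t-1}\left(X' \vert U\right)$ pointwise, so both the numerator and the normalizer match their $t-1$ counterparts, giving $\mathbb{P}_t\left(U \vert X'\right) = \mathbb{P}_{t-1}\left(U \vert X'\right)$. Combined with condition 2, which makes $\mathbb{P}_t\left(Y \vert U\right) = \mathbb{P}_{t-1}\left(Y \vert U\right)$, the integrand in the decomposition above is unchanged between the two time points, yielding \equationref{eq:thm_ds_eq}.

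For the non-guarantee in \equationref{eq:thm_ds_neq}, I would run the same decomposition through $X$ instead: $\mathbb{P}_t\left(Y \vert X\right) = \int \mathbb{P}_t\left(Y \vert U\right) \mathbb{P}_t\left(U \vert X\right) dU$, with $\mathbb{P}_t\left(U \vert X\right) \propto \mathbb{P}_t\left(X \vert U\right) \mathbb{P}_t\left(U\right)$. Conditions 1 and 2 still hold, but condition 3 guarantees that $\mathbb{P}_t\left(X \vert U\right)$ differs from $\mathbb{P}_{t-1}\left(X \vert U\right)$ at some point, so $\mathbb{P}_t\left(U \vert X\right)$ generically differs from $\mathbb{P}_{t-1}\left(U \vert X\right)$. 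To make the failure explicit I would give a small finite counterexample: take $U, X, X', Y$ all binary, put a non-degenerate prior on $U$, fix $\mathbb{P}\left(Y \vert U\right)$ and $\mathbb{P}\left(X' \vert U\right)$ across time, and flip $\mathbb{P}_t\left(X \vert U\right)$ so that $X = U$ at time $t-1$ but $X$ is independent of $U$ at time $t$. Then $\mathbb{P}_{t-1}\left(Y \vert X\right)$ varies in $X$ while $\mathbb{P}_t\left(Y \vert X\right)$ is constant, so the two cannot be equal.

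The main obstacle is not any single calculation but ensuring the causal reading of \figureref{fig:domain_shift_proof} is used correctly: specifically, justifying $Y \perp X' \mid U$ (and analogously $Y \perp X \mid U$) from the stated graph, and making clear that conditions 1, 2, and 4 together pin down all conditional and marginal distributions that appear in the integrand for $X'$, whereas for $X$ the change in $\mathbb{P}\left(X \vert U\right)$ re-weights the posterior over $U$ and thereby shifts $\mathbb{P}\left(Y \vert X\right)$. Everything else is an application of Bayes' rule and marginalization over $U$.
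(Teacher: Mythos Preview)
Your proposal is correct and follows essentially the same route as the paper: both arguments expand $\mathbb{P}_t(Y\mid X')$ by marginalizing over $U$, invoke $Y\perp X'\mid U$ to factor the integrand, and then match each factor to its $t-1$ counterpart via conditions 1, 2, and 4 (the paper writes the ratio $\mathbb{P}_t(Y,X')/\mathbb{P}_t(X')$ directly, while you first isolate $\mathbb{P}_t(U\mid X')$ via Bayes, but these are the same computation). Your explicit binary counterexample for \equationref{eq:thm_ds_neq} goes a bit further than the paper, which simply observes that condition~3 breaks one factor so equality ``may not hold''; your version is a harmless strengthening.
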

\begin{proof}
    We expand the conditional probability by iterating over the latent space $\mathcal{U}$: 
    \begin{align}
        \label{eq:thm_ds_pf_1}
        &\mathbb{P}_t\left(Y \vert X'\right) = \frac{\mathbb{P}_t\left(Y, X'\right)}{\mathbb{P}_t\left(X'\right)} \nonumber \\
        &= \frac{\sum_{u \in \mathcal{U}} \mathbb{P}_t\left(Y, X' \vert U = u\right) \mathbb{P}_t\left(U = u\right)}{\sum_{u \in \mathcal{U}} \mathbb{P}_t\left(X' \vert U = u\right) \mathbb{P}_t\left(U = u\right)} \nonumber \\
        &= \frac{\sum_{u \in \mathcal{U}} \mathbb{P}_t\left(Y \vert U = u\right) \mathbb{P}_t\left(X' \vert U = u\right) \mathbb{P}_t\left(U = u\right)}{\sum_{u \in \mathcal{U}} \mathbb{P}_t\left(X' \vert U = u\right) \mathbb{P}_t\left(U = u\right)}
    \end{align}
    The last line follows from $Y \perp X' \vert U$. Note that \equationref{eq:thm_ds_pf_1} still holds if $t$ is replaced with $t-1$ or $X'$ with $X$. We apply the no latent shift, no label shift, and $X'$ unaffected by domain shift assumptions to each term in \equationref{eq:thm_ds_pf_1} to obtain \equationref{eq:thm_ds_eq}. Because domain shift affects $X$, there exists some $u \in \mathcal{U}$ for which $\mathbb{P}_t\left(X \vert U = u\right) \neq \mathbb{P}_{t-1}\left(X \vert U = u\right)$. Thus, \equationref{eq:thm_ds_neq} may not hold.    
\end{proof}

\subsection{Examples of Challenges to Addressing Domain Shift}
\label{app:empirical_domain_shift}

Our first approach is to select more robust features based on clinical intuition rather than by applying machine learning methods. We test two versions of this approach. In the first version, we build indicators for whether features occur in the past 365 days instead of the past 30 days. We expect these features to be less affected by domain shift due to the COVID-19 pandemic. In the second version, we only include demographics, prediction month, and drugs in the past 30 days in the feature set. As observed in \figureref{fig:covariate_shift_2020}, prescriptions continued in 2020, so drug-related features are less affected. We learn new models using these feature sets in both 2019 and 2020 and assess whether the new pair of models still exhibit temporal shift.

Our second approach is to impute missing features similar to the baseline in \citet{zhou2022domain}. We focus on missingness because that seems to be the mechanism underlying many of the shifts we observed. First, we identify features that are important in the 2019 model and have decreased frequency in the 2020 model. Then, we train logistic regressions to predict these features from indicators for whether features occurred in the past 365 days. Finally, when making predictions, if a sample does not have a feature recorded, we impute the feature with a prediction from the logistic regression. To assess whether this approach can address temporal shift without using data from 2020, the feature imputation models are fit using 2019 data, and predictions with the imputed features are made using the 2019 model. Because the 2020 model may also perform better with imputed features, for comparison, we also fit feature imputation models using 2020 data before applying the 2020 model.

We test both approaches on a model predicting headaches. Label shift is not an issue for this outcome because its frequency stays constant around 0.51\% in both years. We apply each approach to obtain separate 2019 and 2020 models. Then, we assess temporal shift by comparing the pair on test data from 2020. \tableref{tab:domain_shift} shows neither approach is able to address temporal shift. These results suggest the robust features and imputed features are still affected by domain shift.

We also include importance reweighting as a baseline because domain shift and covariate shift have similar observed shifts in $\mathbb{P}\left(X\right)$. Importance reweighting is a method for adapting to covariate shift that assumes access to labeled data at $t = 0$ and unlabeled data at $t = 1$ \citep{quinonero2008dataset}. This method places higher importance on samples at $t = 0$ that more closely resemble samples at $t = 1$:
\begin{equation}
    \hat{f} = \argmin_f \frac{1}{n_0} \sum_{i=1}^{n_0} \frac{n_0 \mathbb{P}\left(t = 1 \vert x_i^0\right)}{n_1 \left(1 - \mathbb{P}\left(t = 1 \vert x_i^0\right)\right)} \mathcal{L}\left(f\left(x_i^0\right), y_i^0\right)
\end{equation}
We estimate $\mathbb{P}\left(t = 1 \vert x_i^0\right)$ by fitting a logistic regression to predict which year each sample is from. This model is fit using the training and validation samples from both years. We add L2 regularization using scikit-learn \citep{pedregosa2011scikit}. 95\% of weights fall between 0.13 and 1.52. Weights are clipped between 0.01 and 10 to reduce variance. Importance reweighting is also incapable of addressing domain shift, as evidenced by the low AUC of 0.6108. 

\begin{table}[h]
\floatconts
  {tab:domain_shift}%
  {\caption{Performance of each approach for addressing domain shift for the non-stationary headache outcome in 2020. AUC is evaluated on test set. Imp rwt: importance reweighting}\vspace{-10px}}%
  {\begin{tabular}{lcc}
  \toprule
  \bfseries Method & \bfseries 2019 AUC & \bfseries 2020 AUC \\
  \midrule
  Original 2019 & 0.6377 & 0.6160 \\
  Original 2020 & N/A & 0.6335 \\
  \midrule
  365-day 2019 & 0.6844 & 0.6777 \\
  365-day 2020 & N/A & 0.6915 \\
  \midrule
  Drugs 2019 & 0.5977 & 0.5812 \\
  Drugs 2020 & N/A & 0.6064 \\
  \midrule
  Impute 2019 & 0.6672 & 0.6491 \\
  Impute 2020 & N/A & 0.6677 \\
  \midrule
  Imp rwt 2019 & N/A & 0.6108 \\
  \bottomrule
  \end{tabular}}
\end{table}

\clearpage
\section{Conditional Shift Details}

\subsection{Feature Selection for statsmodels}
\label{app:cond_shift_feat_select}

When examining conditional shift, we use statsmodels \citep{seabold2010statsmodels} to construct confidence intervals for the coefficients since the logistic regressions we originally learned with scikit-learn do not have confidence intervals. Statsmodels does not take sparse feature matrices as input, and our data is too large to fit in a non-sparse matrix. Thus, we perform the following feature selection steps using training data from the two adjacent years being evaluated. Our main goal is to select relevant features and reduce multi-collinearity. When multi-collinearity is present, model parameters may not be identifiable, and statsmodels is unable to compute confidence intervals for the coefficients.
\begin{enumerate}
    \item Age, race, ethnicity, and prediction month are kept. The indicator for prediction in December is removed because the 12 prediction month features are perfectly collinear.
    \item Because features for each lab are closely related to each other, only the indicators for whether each lab was ordered are considered as potential candidates.
    \item For each feature in each year, we compute a chi-squared statistic to test whether the feature is independent from the outcome. Features with the top 100 chi-squared statistics in each year are kept.
    \item Features with frequency below 100 in the training data in either year are removed.
    \item For each pair of selected features, if the feature frequencies are within 100 of each other and the Pearson correlation coefficient is above .95 in either year, the feature with lower frequency is removed.
\end{enumerate}
After performing this feature selection process, the statsmodels logistic regressions we build in the conditional shift analyses for inpatient consultations and nursing care use 144 and 139 features, respectively.

\subsection{Additional Analysis of Conditional Shift Examples}
\label{app:cond_shift_ex_stats}

We provide details on the two conditional shift examples to demonstrate that sample size is large in the affected populations and covariate shift does not affect the features identified to have conditional shift. For the inpatient consultation outcome with conditional shift in 2019, the outcome is defined by 5 CPT-4 codes for inpatient consultation. The two features identified are congestive heart failure and atherosclerosis of coronary artery without angina pectoris. 24,857 and 27,077 samples have congestive heart failure in 2018 and 2019, respectively. These correspond to .29\% and .32\% of the population, so congestive heart failure is not affected by covariate shift. 98,156 and 98,962 samples have atherosclerosis of coronary artery without angina pectoris in 2018 and 2019, respectively. The frequency in both years is 1.2\%, so this feature is also unaffected by covariate shift.

We further examine how changes in Medicare reimbursement policies led to conditional shift in the inpatient consultation outcome. Since Medicare stopped reimbursing inpatient consultations in March 2018, inpatient consultation codes were used much less frequently for Medicare patients. We define Medicare patients as those who are enrolled in a Medicare Health Maintenance Organization plan or a Medicare Preferred Provider Organization plan on the prediction date. We define non-Medicare patients as those who are enrolled in plans unrelated to Medicare, such as Blue Cross plans, on the prediction date. 13\% of patients are in the Medicare cohort, while 85\% of patients are in the non-Medicare cohort. Within the Medicare cohort, the inpatient consultation outcome frequency dropped from 1.3\% in 2018 to 0.4\% in 2019. Within the non-Medicare cohort, the inpatient consultation outcome frequency stayed constant at 0.4\% in 2018 and 2019. Thus, the temporal shift can be explained by the change in Medicare policy.

Because Medicare status is not included as a feature, we hypothesize that congestive heart failure and atherosclerosis are predictive of Medicare status. The data supports this hypothesis since these two conditions are much more prevalent among Medicare patients: Within the Medicare cohort, congestive heart failure frequency is 1.8\% in 2018 and 1.9\% in 2019. Within the non-Medicare cohort, congestive heart failure frequency is below 0.1\% in both years. Similarly, for atherosclerosis in both years, the frequency is 5.7\% in the Medicare cohort and 0.5\% in the non-Medicare cohort. Thus, the conditional shift detected by our algorithm for the inpatient consultation outcome in 2019 can be explained by changes in clinical practice.

For the nursing care outcome with conditional shift in 2016, the outcome is defined by 10 CPT-4 or HCPCS codes related to nursing services, initial nursing facility care, nursing care at home, or home blood transfusions performed by a nurse. Nursing facility discharge in the past 30 days is defined by two CPT-4 codes for nursing facility discharge day management. The coefficients for both features show significant sign changes. Combining the two features, 3,257 and 3,456 samples have nursing facility discharge in the past 30 days in 2015 and 2016, respectively. Recent discharge from a nursing facility shows no covariate shift as the frequency in both years is .04\%. Changes in utilization of nursing care services may be associated with additional penalties on hospitals for re-admission. To prevent re-admission, hospitals may have been more inclined to discharge more severe patients to skilled nursing facilities.

\subsection{Sufficient Conditions for Addressing Conditional Shift via Re-calibration}
\label{app:cond_shift_recal_proof}

\begin{theorem}[Conditional Shift: Re-calibration]
    \label{thm:cond_shift}
    Let the feature set $X$ be partitioned into two sets $B$ and $C$. If conditional shift between distributions $\mathbb{P}_{t-1}$ and $\mathbb{P}_t$ satisfies the following conditions for some $B$ and $C$:
    \begin{enumerate}
        \item No label shift: $\mathbb{P}_t\left(Y\right) = \mathbb{P}_{t-1}\left(Y\right)$
        \item No covariate shift: $\mathbb{P}_t\left(X\right) = \mathbb{P}_{t-1}\left(X\right)$
        \item No conditional shift in partition $B$: $\mathbb{P}_t\left(Y \vert B\right) = \mathbb{P}_{t-1}\left(Y \vert B\right)$
        \item The partitions are conditionally independent given the outcome: $B \perp C \vert Y$
    \end{enumerate}
    then 
    \begin{equation}
        \mathbb{P}_t\left(Y \vert X\right) = \mathbb{P}_{t-1}\left(Y \vert X\right) \frac{\mathbb{P}_t\left(Y \vert C\right)}{\mathbb{P}_{t-1}\left(Y \vert C\right)}
    \end{equation}
\end{theorem}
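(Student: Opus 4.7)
The plan is to apply Bayes' rule to $\mathbb{P}(Y \vert X) = \mathbb{P}(Y \vert B, C)$ at each time point, use the conditional independence assumption to split the joint likelihood into two factors, then re-apply Bayes in reverse to replace those factors with the conditional probabilities $\mathbb{P}(Y \vert B)$ and $\mathbb{P}(Y \vert C)$, and finally cancel the invariant quantities in the ratio $\mathbb{P}_t(Y \vert X) / \mathbb{P}_{t-1}(Y \vert X)$.

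Concretely, I would first write
\begin{equation*}
    \mathbb{P}(Y \vert X) \;=\; \frac{\mathbb{P}(B, C \vert Y)\,\mathbb{P}(Y)}{\mathbb{P}(B, C)}
\end{equation*}
and use $B \perp C \vert Y$ (assumption 4) to factor $\mathbb{P}(B, C \vert Y) = \mathbb{P}(B \vert Y)\,\mathbb{P}(C \vert Y)$. Applying Bayes' rule once more to each factor gives $\mathbb{P}(B \vert Y) = \mathbb{P}(Y \vert B)\mathbb{P}(B)/\mathbb{P}(Y)$ and similarly for $C$, so that
\begin{equation*}
    \mathbb{P}(Y \vert X) \;=\; \frac{\mathbb{P}(Y \vert B)\,\mathbb{P}(Y \vert C)\,\mathbb{P}(B)\,\mathbb{P}(C)}{\mathbb{P}(Y)\,\mathbb{P}(B, C)}.
\end{equation*}
This identity holds at both time $t$ and time $t-1$.

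Next I would form the ratio $\mathbb{P}_t(Y \vert X)/\mathbb{P}_{t-1}(Y \vert X)$ using the two copies of the identity above and cancel the terms that the assumptions render invariant across time: $\mathbb{P}(Y)$ cancels by assumption 1 (no label shift); $\mathbb{P}(Y \vert B)$ cancels by assumption 3; and assumption 2 (no covariate shift) implies that the joint $\mathbb{P}(B, C) = \mathbb{P}(X)$ and the marginals $\mathbb{P}(B)$ and $\mathbb{P}(C)$ are all preserved, so those three factors cancel as well. What remains is exactly $\mathbb{P}_t(Y \vert C)/\mathbb{P}_{t-1}(Y \vert C)$, which rearranges to the claimed identity.

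The calculation is essentially bookkeeping, so the only real obstacle is making sure every assumption is invoked at the right place and that assumption 2 is used to justify invariance of the marginals $\mathbb{P}(B)$ and $\mathbb{P}(C)$ and of the joint $\mathbb{P}(B, C)$, since marginalizing $X = (B, C)$ preserves equality of the distributions. I would also briefly note the implicit positivity needed for the conditioning to be well-defined (so that no denominator vanishes), but this is a standard caveat rather than a substantive step.
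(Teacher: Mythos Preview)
Your proposal is correct and follows essentially the same route as the paper: Bayes' rule plus the factorization $\mathbb{P}(B,C\mid Y)=\mathbb{P}(B\mid Y)\,\mathbb{P}(C\mid Y)$, then cancellation of the time-invariant pieces using assumptions 1--3 (with assumption 2 also giving invariance of the marginals $\mathbb{P}(B)$ and $\mathbb{P}(C)$). The only cosmetic difference is that you first derive the single-time identity $\mathbb{P}(Y\mid X)=\mathbb{P}(Y\mid B)\,\mathbb{P}(Y\mid C)\,\mathbb{P}(B)\,\mathbb{P}(C)/\bigl(\mathbb{P}(Y)\,\mathbb{P}(B,C)\bigr)$ and then take the ratio, whereas the paper manipulates the ratio $\mathbb{P}_t/\mathbb{P}_{t-1}$ directly throughout; the substance is identical.
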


\begin{proof}
    First, we apply the partition definition:
    \begin{equation}
        \label{eq:cond_factor}
        \frac{\mathbb{P}_t\left(Y \vert X\right)}{\mathbb{P}_{t-1}\left(Y \vert X\right)} 
        = \frac{\mathbb{P}_t\left(Y, B, C\right)}{\mathbb{P}_{t-1}\left(Y, B, C\right)} \frac{\mathbb{P}_{t-1}\left(X\right)}{ \mathbb{P}_t\left(X\right)}
    \end{equation}
    The second fraction on the right is 1 due to the no covariate shift condition. Now, we apply the conditional independence condition:
    \begin{equation}
        \label{eq:joint_factor}
        \frac{\mathbb{P}_t\left(Y, B, C\right)}{\mathbb{P}_{t-1}\left(Y, B, C\right)}
        = \frac{\mathbb{P}_t\left(C \vert Y\right)}{\mathbb{P}_{t-1}\left(C \vert Y\right)} \frac{\mathbb{P}_t\left(B \vert Y\right)}{\mathbb{P}_{t-1}\left(B \vert Y\right)} \frac{\mathbb{P}_t\left(Y\right)}{\mathbb{P}_{t-1}\left(Y\right)}
    \end{equation}
    The third fraction on the right is 1 due to the no label shift condition. For any partition $A$ of $X$, 
    \begin{equation}
        \label{eq:bayes}
        \frac{\mathbb{P}_t\left(A \vert Y\right)}{\mathbb{P}_{t-1}\left(A \vert Y\right)} = \frac{\mathbb{P}_t\left(Y \vert A\right)}{\mathbb{P}_{t-1}\left(Y \vert A\right)} \frac{\mathbb{P}_t\left(A\right)}{\mathbb{P}_{t-1}\left(A\right)} \frac{\mathbb{P}_{t-1}\left(Y\right)}{\mathbb{P}_t\left(Y\right)}
    \end{equation}
    Since the joint covariate distribution has no shift, the marginal distribution of any partition also has no shift. Thus, the second fraction on the right is 1. The third fraction on the right is also 1 due to the no label shift condition. For partition $B$, the first fraction on the right is 1 due to the no conditional shift in partition $B$ condition. Thus, the second fraction on the right in \equationref{eq:joint_factor} is 1. Applying \equationref{eq:bayes} to the first fraction on the right in \equationref{eq:joint_factor} and combining with \equationref{eq:cond_factor}, we get the desired
    \begin{equation}
        \frac{\mathbb{P}_t\left(Y \vert X\right)}{\mathbb{P}_{t-1}\left(Y \vert X\right)} 
        = \frac{\mathbb{P}_t\left(Y \vert C\right)}{\mathbb{P}_{t-1}\left(Y \vert C\right)}
    \end{equation}
\end{proof}

\theoremref{thm:cond_shift} implies conditional shift can be addressed by multiplying the predicted probability by $\frac{\mathbb{P}_t\left(Y \vert C\right)}{\mathbb{P}_{t-1}\left(Y \vert C\right)}$. In the extreme case where $C = \emptyset$, there is no distribution shift, and the ratio is defined as $1$. At the other extreme where $C = X$, the theorem reduces to learning a new model for $\mathbb{P}_t\left(Y \vert X\right)$.

\subsection{Examples of Challenges to Addressing Conditional Shift}
\label{app:cond_shift_challenge_ex}

We discussed two methods for addressing conditional shift in \sectionref{sec:cond_shift}: re-calibrating predictions and learning models using a subset of features. Neither method is able to solve the conditional shift observed in the inpatient consultation and nursing care examples we give. When re-calibrating predictions for the inpatient consultation outcome, we apply multiplicative weight 0.23 to samples with the congestive heart failure feature and .33 to samples with the atherosclerosis feature. For the nursing care outcome, we apply multiplicative weight 3.43 to samples with recent discharge from nursing care recorded under either of the two corresponding concepts. Any predicted probabilities above 1 are clipped to 1. \tableref{tab:cond_shift_auc} shows re-calibrating does not improve the AUC. Because there may be complex interactions between other features and the outcome, these features likely do not satisfy the conditions in \theoremref{thm:cond_shift} required for $C$. 

When selecting features for the second method, we use the confidence intervals computed by statsmodels for each coefficient. Only features with significantly positive coefficients in both years or significantly negative coefficients in both years are included when learning a model with a subset of feature. In particular, features with significant sign changes are excluded. Features with coefficients that are not significantly non-zero in either year are also omitted. 42 and 67 features are selected for the inpatient consultation and nursing care outcomes, respectively. \tableref{tab:cond_shift_auc} shows a significant AUC drop from using these small feature subsets. Omitted features, including the ones we identified with conditional shift, are predictive of the outcome.

\begin{table}[htbp]
\floatconts
  {tab:cond_shift_auc}%
  {\caption{Performance of each approach for addressing conditional shift for the non-stationary inpatient consultation outcome in $t = 2019$ and nursing care outcome in $t = 2016$. AUC is evaluated on the test set at time $t$.}}%
  {\begin{tabular}{lcc}
  \toprule
  \bfseries Model & \bfseries Inpatient & \bfseries Nursing \\
  \midrule
  Original $t$ & 0.7604 & 0.9435 \\
  Original $t - 1$ & 0.7410 & 0.9314 \\
  Re-calibrated $t - 1$ & 0.7123 & 0.9314 \\
  Feature subset $t - 1$ & 0.7045 & 0.8863 \\
  \bottomrule
  \end{tabular}}
\end{table}

\clearpage
\section{Additional Related Work}
\label{app:related_work}

A vast body of literature exists for addressing distribution shift. To name a few, importance reweighting targets covariate shift by assigning higher weights to training samples that are more likely to occur under the new distribution \citep{quinonero2008dataset}. Invariant risk minimization attempts to learn relations that hold across multiple domains \citep{arjovsky2019invariant}. Adaptive risk minimization takes a meta-learning approach to adapt to new distributions at test time \citep{zhang2020adaptive}. Test-time adaptation methods have also been developed to handle unlabeled target domain data \citep{wang2020tent}. However, recent works have demonstrated some of these methods may harm model performance \citep{wiles2021fine,rosenfeld2020risks,kamath2021does}. Because reliable performance is essential for clinical models, we recommend caution when deciding between applying one of these methods and training a new model to address the shifts we observed in healthcare settings.

Some methods have also been developed specifically to mitigate distribution shift in sub-populations. Group distributionally robust optimization minimizes the worst-case loss across pre-defined subgroups \citep{sagawa2019distributionally}. \citet{subbaswamy2021evaluating} identify potential worst-case shifts in sub-populations defined by some mutable features. \citet{song2015dataset} detect subgroups within the training data that are considered different from the general distribution. These works either examine subgroups in training data or define hypothetical subgroups. Our method discovers sub-populations in a new dataset that are affected by distribution shift. We note that sub-populations that are affected by distribution shift are different from the sub-populations studied in a type of distribution shift called sub-population shifts. \citet{santurkar2020breeds} construct sub-population shifts by including different sub-populations in the source and target domains. We are seeking sub-populations that exist in both domains. 

Beyond addressing distribution shift in models, prior works have also focused on identifying samples that fall outside the training distribution. Anomaly detection typically seeks to find examples that fall outside the expected trend in a data stream \citep{chandola2009anomaly}. Anomalies are only defined by the input and are not associated with a label. A common application in healthcare is online detection of anomalies in wearable sensors \citep{salem2014online}. Our work more closely resembles out-of-distribution detection, where they predict whether a test sample is from a different distribution than the training data \citep{hendrycks2016baseline}. However, out-of-distribution benchmarks typically have labels for which samples come from a different distribution than the training dataset and can assess whether those samples have been identified correctly \citep{yang2022openood}. We do not have access to such indicators. 

\citet{lipton2018detecting} use black box predictors to detect distribution shift without such indicators. Their work is focused on detecting label shift and proposes testing for equality of distributions: $H_0: \mathbb{P}_t\left(y\right) = \mathbb{P}_{t-1}\left(y\right)$. Under their assumption that $\mathbb{P}_t\left(x \vert y\right) = \mathbb{P}_{t-1}\left(x \vert y\right)$, they propose testing $H_0: \mathbb{P}_t\left(\hat{y}\right) = \mathbb{P}_{t-1}\left(\hat{y}\right)$ to assess domain shift. Our test is more general since it can detect label, domain, and conditional shift. Furthermore, they use existing tests for equality of distributions. As discussed in Section~\ref{sec:definition}, using our definition of temporal shift allows the test to be more sensitive to clinical impact rather than some other aspect of the distributions.

We also note that our definition of a non-stationary task differs from the definition of non-stationarity in the context of time series. \citet{nason2006stationary} define a time series as stationary if the joint distribution of $X_{t_1}, \ldots, X_{t_l}$ is the same as the joint distribution of $X_{t_1 + \tau}, \ldots, X_{t_l + \tau}$ for all $l$ and $\tau$. They also provide a less strict definition: If the mean and variance do not depend on t and the autocovariance between $X_t$ and $X_{t+\tau}$ only depends on $\tau$, then the series is stationary. The goal of non-stationarity detection in time series is to identify outliers or change points \citep{yamanishi2002unifying}. \citet{sankararaman2022fitness} detect anomalies in non-stationary data streams by fine tuning their anomaly detection model on incoming samples. Applications of detecting non-stationarity in time series include precipitation levels \citep{westra2011detection}, EEG signal \citep{cao2011application}, and traffic volume \citep{vlahogianni2006statistical}.

Finally, we note that insurance claims data have been used to build models for predicting specific outcomes. To give a few examples, \citet{razavian2015population} predict adverse outcomes due to diabetes complications using insurance claims data. \citet{krishnamurthy2021machine} and \citet{segal2020machine} forecast kidney disease, while \citet{kodialam2021deep} predict end-of-life, surgery, and likelihood of hospitalization.

\clearpage
\section{Large-Scale Scan Results}
\label{app:scan_results}

Tables~\ref{tab:condition_results}-\ref{tab:lab_results} list the non-stationary outcomes discovered by our algorithms. To further examine the clinical significance of these shifts, we plot the distribution of the AUC differences $\phi_{\mathcal{D}_t}(\hat{f}_t, h_e) - \phi_{\mathcal{D}_t}(\hat{f}_{t-1}, h_e)$ across all these tasks in the top plot of \figureref{fig:auc_diff_dist}. We also plot the distribution of $\phi_{\mathcal{D}_t}(\hat{f}_t, \hat{h}_t) - \phi_{\mathcal{D}_t}(\hat{f}_{t-1}, \hat{h}_t)$ across all the non-stationary sub-populations in the bottom plot. The AUC differences are much larger within the discovered sub-populations. This means the clinical impact of using an outdated model may be quite large for some patients.

\begin{figure}[h]
    \floatconts
    {fig:auc_diff_dist}
    {\vspace{-15px}\caption{Distribution of difference between AUC of current model and previous model evaluated on current dataset. Top: Non-stationary tasks within the entire population. Bottom: Non-stationary tasks within discovered sub-populations.}}
    {\includegraphics[width=0.96\linewidth]{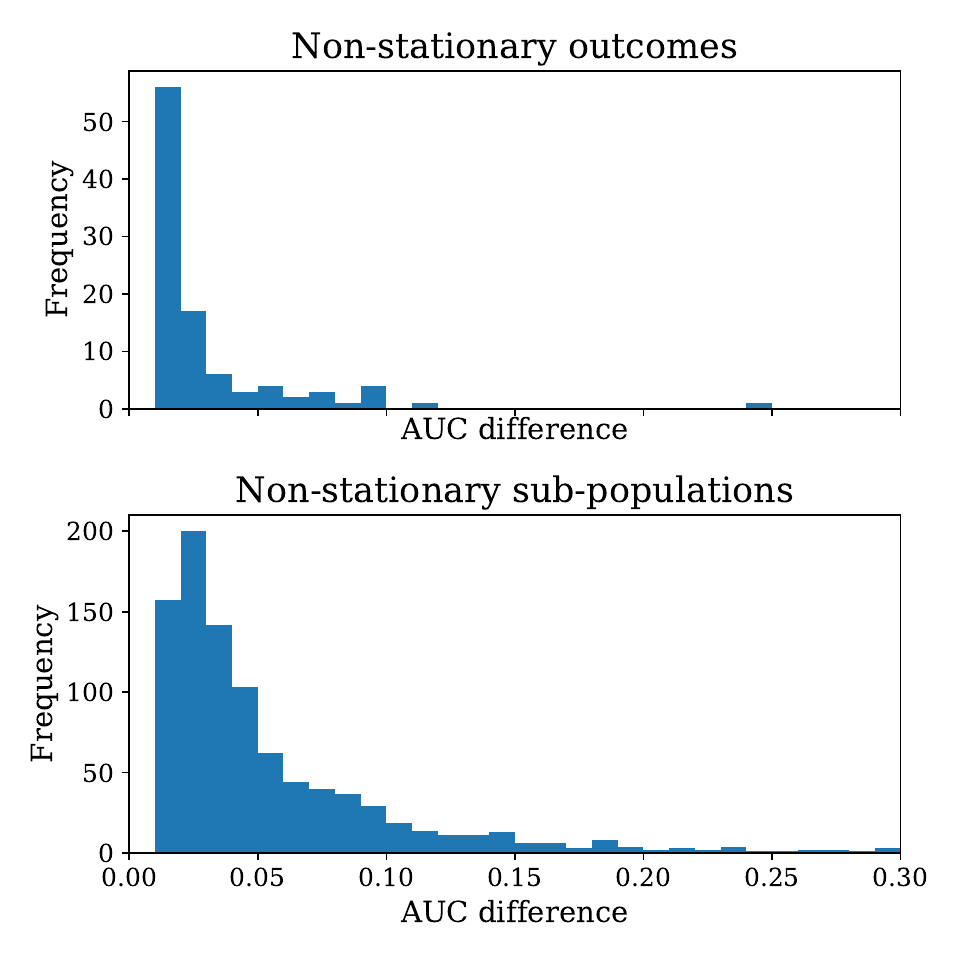}}
    \vspace{-5px}
\end{figure}

\begin{table*}[htbp]
\floatconts
  {tab:condition_results}%
  {\caption{Non-stationary condition outcomes in our scan}}%
  {\begin{tabular}{lccc}
  \toprule
  \bfseries Condition outcome & \bfseries 2018 & \bfseries 2019 & \bfseries 2020  \\
  \midrule
  Abnormal breast imaging & & & 1 \\
  Acute bronchitis & & & 1 \\
  Acute maxillary sinusitis & & & 1 \\
  Acute sinusitis & & & 1 \\
  Acute upper respiratory infection & & & 1 \\
  Arthralgia of ankle/foot & & & 1 \\
  Blood chemistry abnormal & & & 1 \\
  Chronic pain & & & 1 \\
  Cough & & & 1 \\
  Diverticulosis of large intestine & & & 1 \\
  Fatigue & & & 1 \\
  Fever & & & 1 \\
  Finding of frequency of urination & & & 1 \\
  Headache & & & 1 \\
  Hemorrhoids & & & 1 \\
  Hyperpigmentation of skin & & & 1 \\
  Impacted cerumen & & & 1 \\
  Impaired fasting glycemia & & & 1 \\
  Melanocytic nevus of trunk & & & 1 \\
  Muscle pain & & & 1 \\
  Nausea and vomiting & & & 1 \\
  Nuclear senile cataract & & & 1 \\
  Obesity & & & 1 \\
  Pain in left knee & & & 1 \\
  Pain in right foot & & & 1 \\
  Polyp of colon & & & 1 \\
  Pure hypercholesterolemia & & & 1 \\
  Skin changes due to chronic radiation & & & 1 \\
  Skin neoplasm & & & 1 \\
  Tear film insufficiency & & & 1 \\
  Upper respiratory tract infection & 1 & 1 & 1 \\
  Verruca vulgaris & & & 1 \\
  Vitamin D deficiency & & & 1 \\
  \midrule
  Total & 1 & 1 & 33 \\
  \bottomrule
  \end{tabular}}
\end{table*}

\begin{table*}[htbp]
\floatconts
  {tab:procedure_results}%
  {\caption{Non-stationary procedure outcomes in our scan}}%
  {\begin{tabular}{lccccc}
  \toprule
  \bfseries Procedure outcome & \bfseries 2016 & \bfseries 2017 & \bfseries 2018 & \bfseries 2019 & \bfseries 2020  \\
  \midrule
  Breast cancer screening & & & & & 1 \\
  Cervical screening & & & & & 1 \\
  Colonoscopy & & & & & 1 \\
  Esophagogastroduodenoscopy & & & & & 1 \\
  Eye services & & & & & 1 \\
  Gynecology & & & & & 1 \\
  Inpatient consultation & & & & 1 & \\
  Nursing care & 1 & & & & \\
  Oximetry & & & & & 1 \\
  Preventive medicine evaluation & & & & & 1 \\
  Spirometry & & & & & 1 \\
  Vaccination & & & & & 1 \\
  \midrule
  Total & 1 & 0 & 0 & 1 & 10 \\
  \bottomrule
  \end{tabular}}
\end{table*}

\begin{table*}[htbp]
\floatconts
  {tab:lab_results}%
  {\caption{Non-stationary lab outcomes in our scan. $^\textbf{*}$Lab was missed during cleaning, so there was some variation in reference ranges for different unit spellings.}}%
  {\begin{tabular}{lcccccc}
  \toprule
  \bfseries Lab outcome & \bfseries Abnormal range & \bfseries 2016 & \bfseries 2017 & \bfseries 2018 & \bfseries 2019 & \bfseries 2020  \\
  \midrule
  25-hydroxyvitamin D3 & $< 30$ ng/mL & 1 & & 1 & & 1 \\
  25-hydroxyvitamin D3 + D2 & $< 30$ ng/mL & 1 & & & & 1 \\
  Albumin/Globulin ratio & $> 2.2$ & 1 & & 1 & 1 & 1 \\
  Carbon dioxide & $< 20$ mmol/L & & & 1 & & 1 \\
  Cholesterol & $> 239$ mg/dL & & & & & 1 \\
  Cholesterol LDL & $> 129$ mg/dL & & 1 & & & \\
  Cholesterol LDL calculated & $> 129$ mg/dL & & & & & 1 \\
  Cholesterol LDL/HDL ratio & $> 3$ & & & & & 1 \\
  Cholesterol non-HDL & $> 129$ mg/dL & 1 & 1 & 1 & & 1 \\
  Creatinine & $< 0.74$ mg/dL (men), & & & & & 1 \\
  & $< 0.59$ mg/dL (women) & & & & & \\
  eGFR CKD-EPI & $< 60$ mL/min/1.73m$^2$ & 1 & 1 & 1 & 1 & 1 \\
  eGFR CKD-EPI Black & $< 60$ mL/min/1.73m$^2$ & & & 1 & 1 & \\
  eGFR CKD-EPI non-Black & $< 60$ mL/min/1.73m$^2$ & & & 1 & 1 & \\
  eGFR MDRD & $< 60$ mL/min/1.73m$^2$ & & 1 & & & \\
  eGFR MDRD non-Black & $< 60$ mL/min/1.73m$^2$ & & & 1 & & \\
  Erythrocyte distribution width & $< 11.8\%$ (men), & & 1 & 1 & & 1 \\
  & $< 12.2\%$ (women)$^\textbf{*}$ & & & & & \\
  Erythrocyte sedimentation rate & $> 22$ mm/hr (men) & & & & & 1 \\
  & $> 29$mm/hr (women) & & & & & \\
  Glucose & $> 125$ mg/dL & & & 1 & & \\
  Hematocrit & $> 48.6\%$ (men) & & & & & 1 \\
  & $> 44.9\%$ (women) & & & & & \\
  Iron saturation & $< 20\%$ & & & & & 1 \\
  Lymphocytes & $< 1.5$e$3$ cells/$\mu$L & & & & & 1 \\
  Lymphocytes/100 leukocytes & $< 20$ & & 1 & 1 & & \\
  Magnesium & $> 2.2$ mg/dL & & & & & 1 \\
  Microalbumin & $> 30$ mg/dL & 1 & & & & \\
  Neutrophils/100 leukocytes & $< 40$ or $< 1.8$e$3$/$\mu$L & & & & & 1 \\
  Platelet mean volume & $< 7$ fL & & & & 1 & 1 \\
  Rubella virus IgG Ab & $> .9$ & & & & & 1 \\
  Thyrotropin & $< 0.5$ $\mu$U/mL & 1 & & & & \\
  Thyrotropin & $> 5$ $\mu$U/mL & 1 & & & & \\
  Urobilinogen & $< 0.2$ mg/dL & 1 & & & 1 & \\
  \midrule
  Total & & 9 & 6 & 11 & 6 & 19 \\
  \bottomrule
  \end{tabular}}
\end{table*}

\end{document}